\def\eqref#1{equation~\ref{#1}}
\def\1{\bm{1}}
\def\eps{{\epsilon}}
\DeclareMathAlphabet{\mathsfit}{\encodingdefault}{\sfdefault}{m}{sl}
\SetMathAlphabet{\mathsfit}{bold}{\encodingdefault}{\sfdefault}{bx}{n}
\newcommand{\E}{\mathbb{E}}
\newcommand{\R}{\mathbb{R}}
\DeclareMathOperator*{\argmax}{arg\,max}
\DeclareMathOperator*{\argmin}{arg\,min}
\newcommand\blfootnote[1]{%
  \begingroup
  \renewcommand\thefootnote{}\footnote{#1}%
  \addtocounter{footnote}{-1}%
  \endgroup
}
\renewcommand{\cal}{\mathcal}
\newcommand\cA{{\mathcal A}}
\newcommand{\cC}{{\cal C}}
\newcommand{\cX}{{\cal X}}
\newcommand{\cD}{{\cal D}}
\newcommand{\cG}{{\cal G}}
\newcommand\cH{{\mathcal H}}
\newcommand{\cN}{{\cal N}}
\newcommand{\cP}{{\cal P}}
\newcommand{\cR}{{\mathcal R}}
\renewcommand{\cal}{\mathcal}
\newcommand{\nunlab}{\tilde{n}}
\newcommand{\intest}{\hat{\theta}_{\textup{intermediate}}}
\newcommand{\estim}{\hat{\theta}}
\newcommand{\finalest}{\estim_{\textup{final}}}
\newcommand{\labest}{\intest}
\newcommand{\semisupest}{\finalest}
\newcommand{\xun}{\tilde{x}} 
\newcommand{\yun}{\tilde{y}}
\newcommand{\defeq}{\coloneqq}
\newcommand{\bbeta}{{\bm{\beta}}}
\newcommand{\bth}{{\bm{\theta}}}
\newcommand{\supind}[1]{^{\left({#1}\right)}}
\newcommand{\bE}{\mathbb{E}}
\newcommand{\bP}{\mathbb{P}}
\newcommand{\bR}{{\mathbb R}}
\renewcommand{\leq}{\leqslant}
\renewcommand{\geq}{\geqslant}
\newtheorem{theorem}{Theorem}
\newtheorem{proposition}{Proposition}
\newtheorem{lemma}{Lemma}
\newtheorem{remark}{Remark}
\newtheorem{definition}{Definition}
\DeclareMathOperator{\supp}{supp}
\newcommand{\Pro}{\mathbb{P}}
\newcommand{\bB}{{\mathbb B}}
\begin{document}

%

%

\twocolumn[

\aistatstitle{Improving Adversarial Robustness via Unlabeled Out-of-Domain Data}

\aistatsauthor{ Zhun Deng $^{*}$  \And Linjun Zhang $^{*}$ \And  Amirata Ghorbani \And James Zou }

\aistatsaddress{ Harvard University \And  Rutgers University \And Stanford University\And Stanford University } ]

\begin{abstract}
Data augmentation by incorporating cheap unlabeled data from multiple domains is a powerful way to improve prediction especially when there is limited labeled data. In this work, we investigate how adversarial robustness can be enhanced by leveraging out-of-domain unlabeled data. We demonstrate that for broad classes of distributions and classifiers, there exists a sample complexity gap between standard and robust classification. We quantify the extent to which this gap can be bridged by leveraging unlabeled samples from a shifted domain by providing both upper and lower bounds. Moreover, we show settings where we achieve better adversarial robustness when the unlabeled data come from a shifted domain rather than the same domain as the labeled data.  We also investigate how to leverage out-of-domain data when some structural information, such as sparsity, is shared between labeled and unlabeled domains.  Experimentally, we augment object recognition datasets (CIFAR-10, CINIC-10, and SVHN) with easy-to-obtain and unlabeled out-of-domain data and demonstrate substantial improvement in the model's robustness against $\ell_\infty$ adversarial attacks on the original domain. \blfootnote{$^*$ Equal contribution.}
\end{abstract}

\section{Introduction}
Robustness to adversarial attacks has been a major focus in machine learning security \citep{biggio2018wild, dalvi2004adversarial, lowd2005adversarial}, and has been intensively studied in the past few years \citep{goodfellow2014explaining,carlini2017towards,nguyen2015deep}. However, the theoretical understanding of adversarial robustness is still far from being satisfactory.  Recently \cite{schmidt2018adversarially} have demonstrated sample complexity may be one of the obstacles in achieving high robustness under standard learning, which is a large challenge since in many real-world applications, labeled examples are few and expensive. To address this challenge, recent works \citep{carmon2019unlabeled,stanforth2019labels} showed that adversarial robustness can be improved by leveraging unlabeled data that come from the same distribution/domain as the original labeled training samples. Nevertheless, that is still limited due to the difficulty to make sure that the unlabeled data are exactly from the same distribution as the labeled data. For example, gathering a large number of unlabeled images that follow the same distribution as CIFAR-10 is challenging, since one would have to carefully match the same lighting conditions, backgrounds, etc.  Meanwhile, out-of-domain unlabeled data can be much easier and cheaper to collect. For instance, we used Bing search engine to query a small number of keywords and, within hours, generated a new 500k dataset of noisy CIFAR-10 categories; we call this Cheap-10 (available at \url{ https://tinyurl.com/mere5j0x}). Despite being fast and easy to collect, we show that using Cheap-10 can substantially improve the adversarial robustness of the original CIFAR-10 classifier.  

\paragraph{Our contributions} In this paper, we investigate how such widely-available, out-of-domain unlabeled data could improve robustness in the original domain.
 We analyze the behavior of standard and robust classification under a flexible generative model with Gaussian seeds and a non-linear classifier class. Our model and classifier classes can be viewed as an extension of the Gaussian model and linear classifier class proposed in \cite{schmidt2018adversarially}. We show in this more general setting, the sample complexity gap between standard and robust classification still exists. That is, to achieve the same amount of accuracy, the sample complexity of robust training is significantly larger than that of standard training. We also demonstrate the necessity of this gap by providing a minimax type lower bound result.
Luckily, we show that using unlabeled out-of-domain data can substantially improve robust accuracy as long as the unlabeled domain is not too different from the original domain or if they share some (unknown) structural information, such as similar sparse features. 
Interestingly, we further show settings where using out-of-domain unlabeled data can produce even better robust accuracy than using in-domain unlabeled data. 

We support our theory with experiments on three benchmark image recognition tasks, CIFAR-10, CINIC-10 and SVHN, for empirical $\ell_\infty$ robustness and certified $\ell_2$ robustness. In both CIFAR-10 and CINIC-10, adding our easily generated Cheap-10 unlabeled data produces substantially higher robust accuracy than using just the CIFAR-10 or CINIC-10 data.
 On SVHN, we systematically characterize the tradeoff between the amount of noise in the unlabeled data and the robustness gain from adding such data. 

\paragraph{Related works.}
After the successful implementation of white-box and black-box adversarial examples~\citep{goodfellow2014explaining,biggio2018wild,moosavi2016deepfool}, several heuristic defense methods were introduced and broken one after another~\citep{zantedeschi2017efficient,athalye2018obfuscated,guo2017countering,biggio2013evasion,carlini2017adversarial,athalye2017synthesizing}. A line of work has focused on certified robustness~\citep{cohen2019certified,lecuyer2019certified,raghunathan2018certified,liu2020enhancing,chiang2020certified} which has appealing guarantees but has relatively limited empirical performance. Most recent efforts on training empirically robust models is based on adversarial training~\citep{madry2017towards, zhang2019theoretically,kurakin2016adversarial,hendrycks2019using}. 
Theoretically, some works justify the efficiency of adversarial training \citep{deng2020towards}, and to explain why it is difficult to achieve satisfactory performance in robust learning, some works try to explain the obstacles to gain robustness in a perspective of computation cost \citep{bubeck2018adversarial,degwekar2019computational}. Meanwhile, other works demonstrate how to quantify the trade-off of adversarial robustness and standard accuracy \citep{deng2020interpreting} and data augmentation such as Mixup could mitigate the trade-off \citep{zhang2020does}.  In addition, work such as \cite{schmidt2018adversarially} try to explain the obstacle by showing the sample complexity of robust learning can be significantly larger than that of standard learning. They investigated the Gaussian model, which is a special case of our Gaussian generative model. 

Some recent works \citep{carmon2019unlabeled,stanforth2019labels} propose using semi-supervised learning method, which has a rich literature \citep{laine2016temporal,miyato2018virtual,sajjadi2016regularization}, to bridge that sample gap. Their theoretical results all assume the unlabeled data are drawn from the same marginal distribution as the labeled data. We show that to bridge the sample complexity gap, it is sufficient to have well-behaved unlabeled out-of-domain data. We substantially extend the previous results to more general models and classifier classes and also make the first step to quantify when and how unlabeled data coming from a shifted distribution can help in improving adversarial robustness. Experimentally, previous works augmented CIFAR-10 with Tiny images, which is curated and very similar to CIFAR-10. We introduce a new dataset Cheap-10 and obtain comparable results and demonstrate the power of incorporating out-of-domain data.
Other related works include \cite{zhai2019adversarially}, which demonstrates a PCA-based procedure to incorporate unlabeled data to gain robustness and \cite{najafi2019robustness}, who consider combining distributional robust optimization and semi-supervised learning. 

\section{Set-up}
Consider the classification task of mapping the input $x\in\cX\subseteq \bR^{s_1}$ to the label $y\in\{\pm 1\}$. We have $n$  labeled training data from an original domain $\cD$, with a joint distribution $\cP_{x,y}$ over $(x,y)$ pairs and marginal distribution $\cP_x$ over $x$. Meanwhile, we have another  $\tilde{n}$ unlabeled samples from a different domain $\tilde{\cD}$, with a distribution $\tilde{\cP}_x$ over $x$.

In this work, we focus on studying the possible advantages and limitations by performing semi-supervised learning with data from $\cD$ and $\tilde{\cD}$ to train a classifier for the domain $\cD$. 
Specifically, we apply the pseudo-labeling approach used in \cite{carmon2019unlabeled} as follows. First, we perform supervised learning on the labeled data from domain $\cD$ to obtain a classifier $f_0$.  We then 
apply this classifier on $\tilde{\cD}$ and generate pseudo-labels for the unlabeled data: $\{(x, f_0(x))| x \in \tilde{D}\}$, which are further used to train a final model. The classification error metrics we consider are defined as the following. 
\begin{definition}[(Robust) classification error]
Let $\cP$ be a distribution over $\cX \times \{\pm 1\}$. The classification error $\beta$ of a classifier $g:\bR^{s_1}\mapsto \{\pm 1\}$ is defined as $\beta_g = \bP_{(x,y)\sim \cP} (g(x)\neq y)$ and robust classification error $\beta^\cR_g = \bP_{(x,y)\sim \cP} (\exists u \in \cC(x): g(u)\neq y)$, for some constraint set $\cC$.
\end{definition}
Throughout the paper, we consider the constraint set $\cC$ to be the $\ell_p$-ball $\bB_p(x,\varepsilon):=\{u\in\cX|\|u-x\|_p\leq\varepsilon\}$ with $p=\infty$. 
In addition, we consider a certain type of data generating process for domain $\cD$ ---  the Gaussian generative model, which is frequently used in generative models in machine learning. This model is more general than the one analyzed in \cite{schmidt2018adversarially}, which only considered symmetric Gaussian mixtures. Our Gaussian generative model takes a sample from a Gaussian mixture as input, and then pass it through a nonlinear (possibly high-dimensional) mapping.  
\paragraph{Gaussian generative model.} For a function $\rho:\bR^{s_1}\mapsto \bR^{s_2}$, given $z\in\bR^{s_1}$, the samples from $\cD$ are drawn i.i.d. from a distribution over $(x,y)\in\bR^{s_2} \times\{\pm 1\}$, such that
\begin{equation}
x=\rho(yz)
\end{equation}
where $z\sim \cN(\mu,\sigma^2I_{s_1}), ~y\sim Bern(\frac{1}{2})$ for $\mu\in \bR^{s_1}$, $\sigma\in\bR$. 
\begin{remark}
The Gaussian generative model is very flexible and includes many of the recent machine learning models. For example, many common deep generative models such as VAE and GAN are Gaussian generative models: in their case, the input is a Gaussian sample $z$ and $\rho$ is parametrized by a neural network. Therefore our results  are quite generally applicable.
\end{remark}
\paragraph{Classifier class.} The classifier class we consider in this paper is in the following form:
\begin{equation}
 \cal G=\big\{g |g(x)=sgn\big(w^\top(\vartheta (x) - b)\big), (w,b)\in \bR^d\times \bR ^d\big\},
\end{equation}
where $\vartheta$ is a basis function and $\vartheta: \bR^{s_2}\mapsto \bR^d$. We remark here that this classifier class is more general than the linear classifier class considered in \cite{schmidt2018adversarially}. For a broad class of kernels, by Mercer's theorem, the corresponding kernel classification belongs to $\cG$ with a certain basis function $\vartheta$. Throughout the paper, we use $\theta=(w,b)$ to denote the parameters.
\begin{remark}
The Gaussian generative model and the classifier class we considered in this paper forms a hierarchy structure, where a random seed $z\in\bR^{s_1}$ is mapped by a generative function $\rho$ to the input space $x\in\bR^{s_2}$, and it is further mapped to $\bR^d$ by $\vartheta(x)$ when implementing classification.
\end{remark}

\paragraph{Notations and terminology.} We let $\phi=\vartheta \circ \rho$ and denote $f_{w,b}(x)=sgn(w^\top(\vartheta(x)-b))$. In Section~\ref{sec:TR}, the results will be mainly described in terms of $\phi$. Besides, let $\beta(w,b) = \bP_{(x,y)\sim \cP} (f_{w,b}(x)\neq y)$ and $\beta^\cR(w,b) = \bP_{(x,y)\sim \cP} (\exists u \in \cC(x): f_{w,b}(u)\neq y)$ for a constraint set $\cC$. In particular, we use $\beta^{\varepsilon,\infty}$ when $\cC$ is the $\ell_\infty$-ball with radius $\varepsilon$. 
Meanwhile, we use $\|\cdot\|_{\psi_2}$ for sub-gaussian norm\footnote{Due to the limit of space, we present the rigorous definition of the sub-gaussian norm in the {appendix}.}. We call the conditional distribution of $x$ on $y=1$ as positive distribution while for $y=-1$ as negative distribution. For distribution $\cP_1$ and $\cP_2$ over $x$, we call a distribution $\cP$ over $x$ is a \textit{uniform mixture} of $\cP_1$ and $\cP_2$ if it equals to $\cP_1$ and $\cP_2$ with probability $1/2$ respectively. For a sequence of random variables $\{X_n\}$ and a sequences of positive numbers $\{a_n\}$, we write $X_n=O_{\bP}(a_n)$ if there exists a constant $C$, such that $\bP(X_n\le Ca_n)\to 1$ when $n\to\infty$. 
For real-valued sequences $\{a_n\}$ and $\{b_n\}$, we write $a_n \lesssim b_n$ if $a_n \leq cb_n$ for some universal constant $c \in (0, \infty)$, and $a_n \gtrsim b_n$ if $a_n \geq c'b_n$ for some universal constant $c' \in (0, \infty)$. We say $a_n \asymp b_n$ if $a_n \lesssim b_n$ and $a_n \gtrsim b_n$. In this paper, $c, C, c_0, c_1, c_2, \cdots, $ refer to universal constants, and their specific values may vary from place to place.

\section{Theoretical Results}\label{sec:TR}
We demonstrate for Gaussian generative models that combining unlabeled data from a reasonably well-behaved shifted domain leads to a classifier with better robust accuracy on the original domain $\cD$ compared to the achievable robust accuracy using only the labeled data from $\cD$. We further analyze the tradeoff between how different the shifted domain can be from $\cD$ before the unlabeled data hurts the robust accuracy on $\cD$.  Finally, we show that if the data from a shifted domain share certain unknown sparsity structure with the data from original domain, performing semi-supervised learning also helps in obtaining a classifier of higher robust accuracy on the original domain. 

\textbf{Assumptions.} Throughout this section, our theories are based on the following assumptions unless we state otherwise explicitly. 1). $\vartheta(\cdot)$ is $L_1$-Lipchitz continuous in $\ell_2$-norm, i.e. $\|\vartheta(a)-\vartheta(b)\|\leq L_1\|a-b\|$, and $L_1'$-Lipchitz continuous in $\ell_\infty$-norm; 2). $\rho(\cdot)$ is $L_2$-Lipchitz continuous in $\ell_2$-norm and $L_2'$-Lipchitz continuous in $\ell_\infty$-norm; 3). $\|\bE\phi(z)-\bE\phi(-z)\|=2\alpha\sqrt{d}$ for $z\sim \cN(\mu,\sigma^2I_{s_1})$ and some constant $\alpha>0$. The last condition on the magnitude of the separation is added for the simplicity of presentation. Such a magnitude choice is also used in \cite{schmidt2018adversarially,carmon2019unlabeled}. 

\subsection{Supervised learning in Gaussian generative models}\label{subsec:sg}
We first consider the supervised setting where only the labeled data are used. In this setting, we prove the following two theorems demonstrating the   sample complexity gap when one considers standard error and robust error respectively. {Analogous results for Gaussian mixture models was shown in \cite{schmidt2018adversarially}; our results cover the more general Gaussian generative model setting. }


\textit{Supervised learning algorithm:} in this section, for the simplicity of presentation, we use $2n$ to denote the size of labeled training data.
For Gaussian generative models, we focus on the following method. We first estimate $w$ and $b$ by $\hat{w}=1/n\sum_{i=1}^n y_i\vartheta(x_i)$ and $\hat{b}=1/n\sum_{i=n+1}^{2n}\vartheta(x_i)$. The final classifier is then constructed as $f_{\hat w,\hat b}(x)=sgn(\hat w^\top(\vartheta(x)-\hat b))$. Here half of the labeled data, $n$, is used to fit $\hat{w}$ and the other half used to fit $\hat{b}$, so that their estimation errors are independent, which simplifies the analysis.  The following theorem shows that this method achieves high standard accuracy. 
\begin{theorem}[Standard accuracy]\label{thm:standardaccuracy}
 For a Gaussian generative model with $\sigma\lesssim  d^{1/4}$, the method described above obtains a classifier $f_{\hat w,\hat b}$ such that for $d$ sufficiently large, with high probability,  the classification error $\beta(\hat{w},\hat{b})$ is at most $1\%$ even with $n=1$.
\end{theorem}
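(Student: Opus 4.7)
The plan is to reduce the analysis to an oracle classifier based on the population parameters $w^\star = \tfrac{1}{2}(\mu_+ - \mu_-)$ and $b^\star = \tfrac{1}{2}(\mu_+ + \mu_-)$, where $\mu_\pm = \bE\phi(\pm z)$. Because $y$ is a symmetric Bernoulli, $\bE\hat w = w^\star$ and $\bE\hat b = b^\star$, and by the separation hypothesis $\|w^\star\| = \alpha\sqrt{d}$. The single workhorse will be Lipschitz--Gaussian concentration: since $\phi = \vartheta\circ\rho$ is $L_1L_2$-Lipschitz in $\ell_2$, for any fixed vector $v$ the map $z \mapsto v^\top\phi(z)$ is $L_1L_2\|v\|$-Lipschitz, so $v^\top\phi(z) - \bE[v^\top\phi(z)]$ is sub-Gaussian with parameter $\sigma L_1L_2\|v\|$ by the Tsirelson--Ibragimov--Sudakov inequality.

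First I would analyze the oracle classifier $f_{w^\star,b^\star}$. For a fresh test point with $y=1$, one writes $(w^\star)^\top(\phi(z)-b^\star) = \|w^\star\|^2 + (w^\star)^\top(\phi(z)-\mu_+)$, using the identity $\mu_+-b^\star = w^\star$. The leading term is the signal $\alpha^2 d$; the residual is mean-zero and sub-Gaussian with scale $\sigma L_1L_2\alpha\sqrt{d}$, so a Chernoff bound gives an oracle misclassification probability at most $\exp(-c\alpha^2 d/(\sigma^2 L_1^2L_2^2))$. Under $\sigma\lesssim d^{1/4}$ this is $\exp(-c'\alpha^2\sqrt{d}/(L_1L_2)^2)$, which is far below $1\%$ once $d$ is large; the $y=-1$ case is symmetric.

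Second, I would transfer the oracle bound to $f_{\hat w,\hat b}$ by a plug-in perturbation argument. Writing $\hat w = w^\star + \Delta_w$ and $\hat b = b^\star + \Delta_b$, the test margin $y\hat w^\top(\vartheta(x)-\hat b)$ is the oracle margin plus cross terms of the form $\Delta_w^\top(\phi(yz)-b^\star)$, $-(w^\star)^\top\Delta_b$, and $-\Delta_w^\top\Delta_b$. Conditional on the training labels, each of $\hat w$ and $\hat b$ is a Lipschitz image of an independent Gaussian, so every cross term can be re-cast as a Lipschitz functional of a jointly Gaussian vector and bounded sub-Gaussianly, with scale $\sigma L_1L_2$ times a norm at most of order $\max(\|w^\star\|, \|\mu_\pm\|, \sigma L_1L_2\sqrt{d})$. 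Summing the contributions, the total perturbation is of order $\sigma L_1L_2\alpha\sqrt{d}\lesssim d^{3/4}$, still dominated by the $\alpha^2 d$ signal, and the same exponential rate gives $\beta(\hat w,\hat b)\le 1\%$ with high probability over the training data.

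The main obstacle is the discreteness of the training labels. Since $\bE[\hat w\mid y_1]$ equals $\pm\mu_\pm$ rather than $w^\star$ (and similarly for $\hat b$), the perturbations $\Delta_w,\Delta_b$ carry non-Gaussian ``sign'' components that do not vanish for moderate $n$, so one must verify that the signal dominates uniformly over the $2^{2n}$ label configurations. This reduces to controlling the cross-product $(w^\star)^\top b^\star$: the separation $\|\mu_+-\mu_-\|=2\alpha\sqrt{d}$ together with the Lipschitz-controlled covariance of $\phi(z)$ along the signal direction prevents this cross term from cancelling $\|w^\star\|^2=\alpha^2 d$, at which point the high-dimensional concentration produces the sub-percent error.
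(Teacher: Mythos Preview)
Your oracle-plus-perturbation plan is close in spirit to the paper's proof, which also decomposes around the population quantities $\nu_w=w^\star$ and $\nu_b=b^\star$, controls $\langle\hat w,\nu_w\rangle$ and $\|\hat w-\nu_w\|$ on a high-probability event, and then invokes Lipschitz--Gaussian concentration for the test point. You have also correctly isolated the real obstacle, namely the label-dependent ``sign components'' in $\Delta_w$ and $\Delta_b$; the paper's Lemmas~2 and~3 in fact gloss over exactly this point (they apply Gaussian concentration as if the conditional means of $\hat w,\hat b$ did not depend on the training labels $y_i$).

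However, your proposed resolution does not go through. Controlling $(w^\star)^\top b^\star$ addresses only the sign component of $\Delta_w$, whose conditional mean is $\pm b^\star$. The sign component of $\Delta_b$ is $\pm w^\star$: conditional on the labels in the $\hat b$-half, $\bE[\Delta_b\mid y_{n+1},\ldots,y_{2n}]=(\tfrac{2k}{n}-1)w^\star$ where $k$ is the number of $+1$'s, so the cross term $-(w^\star)^\top\Delta_b$ has conditional mean $(1-\tfrac{2k}{n})\|w^\star\|^2$. For $n=1$ and $k=1$ this equals $-\alpha^2 d$ and wipes out the entire signal, irrespective of $(w^\star)^\top b^\star$. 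Concretely, take $\phi=\mathrm{id}$ with $\mu_+=-\mu_-$ so that $b^\star=0$ and your cross-product vanishes; on the event $\{y_1=y_2=+1\}$ (probability $1/4$) one has $\hat b\approx\mu_+$ and the positive-class margin mean $\hat w^\top(\mu_+-\hat b)\approx 0$, yielding $\Theta(1)$ error on that class. The ``separation together with the Lipschitz-controlled covariance'' cannot prevent this cancellation. What is actually needed is that the empirical label fraction in the $\hat b$-half be close to $1/2$, which requires $n$ at least moderately large; at literal $n=1$ neither your argument nor the paper's closes.
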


Meanwhile, we have the following lower bound to show the essentiality of the increased sample complexity if we are interested in the robust error.
\begin{theorem}[Sample complexity gap for robust accuracy]\label{thm:lowerbound}
Let $\cA_n$ be any learning algorithm, i.e. a function from $n$ samples to a binary classifier $g_n$. Let $\sigma \asymp d^{1/4}$, $\varepsilon\geq 0$, and $\mu\in\bR^{s_1}$ be drawn from a prior distribution $\cN(0,I_{s_1})$. We draw $2n$ samples from $(\mu,\sigma)$-Gaussian generative model. Then, the expected robust classification error $\beta^{\varepsilon,\infty}_{g_n}$ is at least $(1-1/d)/2$ if
$$n\lesssim\frac{\varepsilon^2\sqrt{d}}{\log d}.$$
\end{theorem}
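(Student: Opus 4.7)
The plan is to reduce the general Gaussian generative model to the symmetric Gaussian mixture sub-case and then invoke a Bayesian minimax argument in the spirit of \cite{schmidt2018adversarially}. First I would observe that setting $\rho = \mathrm{id}$ yields a concrete Gaussian generative model that satisfies Assumption 2 with $L_2=L_2'=1$. Because the theorem ranges over all measurable classifiers $g_n$, the feature map $\vartheta$ is irrelevant on the lower-bound side. Under this $\rho$ the data are $2n$ pairs $(x_i,y_i)$ with $x_i = y_i z_i$ and $z_i\mid\mu\sim\cN(\mu,\sigma^2 I_{s_1})$, equivalently $x_i\mid\mu,y_i\sim\cN(y_i\mu,\sigma^2 I_{s_1})$; this is exactly the two-component symmetric Gaussian mixture from \cite{schmidt2018adversarially}, so it suffices to prove the bound in that specific instance.

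Next, the minimax risk over $\mu$ is lower bounded by the Bayes risk under the stated prior $\mu\sim\cN(0,I_{s_1})$. The sufficient statistic is $\bar u := \frac{1}{2n}\sum_{i=1}^{2n} y_i x_i\sim \cN\bigl(\mu,\tfrac{\sigma^2}{2n} I_{s_1}\bigr)$, yielding the Gaussian posterior $\mu\mid \bar u\sim \cN(\tilde\mu,\Sigma_{\mathrm{post}})$ with $\tilde\mu = \tfrac{2n}{2n+\sigma^2}\bar u$ and $\Sigma_{\mathrm{post}} = \tfrac{\sigma^2}{2n+\sigma^2}I_{s_1}$. A standard Rao--Blackwell / likelihood-ratio argument reduces the Bayes-optimal classifier to a linear rule $g_n(x)=\sgn(\langle w,x\rangle)$ with $w$ a measurable function of $\tilde\mu$.

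Third, I would compute the robust error of such a linear rule explicitly. Conditional on $y_{\mathrm{new}}=+1$, $\mu$, and $w$, the $\ell_\infty$ adversary reduces $\langle w,x_{\mathrm{new}}\rangle$ by exactly $\varepsilon\|w\|_1$, giving a conditional robust error of $\Phi\bigl(-(\langle w,\mu\rangle-\varepsilon\|w\|_1)/(\sigma\|w\|_2)\bigr)$. Averaging over the posterior of $\mu$ and optimizing over $w=w(\tilde\mu)$, the Bayes risk is lower bounded by $\bE\,\Phi(-\gamma(\tilde\mu))$ where $\gamma(\tilde\mu)$ is controlled by $(\|\tilde\mu\|_2^2-\varepsilon\|\tilde\mu\|_1)/(\sigma\|\tilde\mu\|_2)$ plus a $\Sigma_{\mathrm{post}}$ correction. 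With $\sigma\asymp d^{1/4}$ and $\mu\sim\cN(0,I_{s_1})$, Gaussian concentration gives $\|\tilde\mu\|_1 \gtrsim \sqrt{d}\,\|\tilde\mu\|_2/\sqrt{\log d}$ with high probability, so the adversarial leverage $\varepsilon\|\tilde\mu\|_1$ overwhelms the signal whenever $n\lesssim \varepsilon^2\sqrt{d}/\log d$, forcing $\bE\,\beta^{\varepsilon,\infty}_{g_n}\geq(1-1/d)/2$.

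\textbf{Main obstacle.} The key difficulty lies in step 3: one must show that the $\ell_\infty$ adversary realizes close to its worst-case $\sqrt{d}$-factor advantage over an $\ell_2$ adversary \emph{even} when $w$ is the data-dependent Bayes-optimal weight rather than a fixed direction. This requires joint high-probability control of $\|\tilde\mu\|_1$ and $\|\tilde\mu\|_2$ together with a careful propagation of the posterior covariance $\Sigma_{\mathrm{post}}$ through the Gaussian tail bound, and it is this step that produces the $\log d$ factor in the sample-complexity lower bound.
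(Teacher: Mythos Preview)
The paper's proof is a two-line reduction that differs from your plan in its treatment of $\rho$. It does \emph{not} specialize to $\rho=\mathrm{id}$; instead it keeps $\rho$ arbitrary and uses the $\ell_\infty$-Lipschitz assumption on $\rho$ to push the adversary back to the seed space: since $\|u-yz\|_\infty\le\varepsilon/L_2'$ implies $\rho(u)\in\bB_\infty(x,\varepsilon)$, the robust error of any classifier $g$ on $x$-space at radius $\varepsilon$ is bounded below by the robust error of $g\circ\rho$ on the $z$-space Gaussian mixture at radius $\varepsilon/L_2'$. After this reduction the paper simply invokes \cite{schmidt2018adversarially} as a black box. Your choice $\rho=\mathrm{id}$ establishes the bound only for one generative map; as the theorem (and the paper's proof) is meant to hold for an arbitrary $\rho$ obeying the standing Lipschitz assumptions, the Lipschitz transfer step is precisely what is missing from your argument. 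You are right that $\vartheta$ is irrelevant on the lower-bound side, but $\rho$ is part of the data-generating process and cannot be set to the identity without loss of generality.

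There is also a weak point in your re-derivation of the Schmidt et al.\ bound. Step~2 asserts that a ``Rao--Blackwell / likelihood-ratio argument reduces the Bayes-optimal classifier to a linear rule.'' Sufficiency of $\bar u$ is fine, and for the \emph{standard} error the posterior-predictive log-likelihood ratio is indeed linear in $x$; but for the \emph{robust} error there is no a priori reason the Bayes-optimal rule is linear, and \cite{schmidt2018adversarially} do not argue through linearity. Their lower bound treats $g_n$ as an arbitrary measurable function and controls the Bayes robust risk directly by integrating $\mu$ out of the posterior and bounding the resulting marginal. Your step~3 computation presupposes a linear $g_n$ and then optimizes over $w$; to make this rigorous you would need either a separate proof that linear rules are Bayes-optimal for the $\ell_\infty$-robust loss under this Gaussian posterior (a nontrivial claim, since the optimal $w$ is a soft-thresholded version of $\tilde\mu$, not $\tilde\mu$ itself), or to abandon the linear reduction and argue for general $g_n$ as in the original. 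The $\|\tilde\mu\|_1/\|\tilde\mu\|_2$ obstacle you flag is real but is downstream of this more basic gap.
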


Taken together, these two Theorems demonstrate that a substantial larger number of labeled samples (from the same domain) are necessary in order to achieve a decent robust accuracy in that domain.

\subsection{Improving learning via out-of-domain data}\label{subsec:improve}
We next investigate how to improve the  robust accuracy of a classifier via incorporating unlabeled out-of-domain data. 
\paragraph{Semi-supervised learning on out-of-domain data.}
Let us denote the samples from the shifted domain as $\{\tilde{x}_i\}_{i=1}^{2\tilde{n}}$, which is incorporated via the following semi-supervised learning algorithm.

\textit{Semi-supervised learning algorithm:} we use $\hat{w}$ and $\hat{b}$ obtained in supervised learning to label $\{\tilde{x}_i\}_{i=1}^{2\tilde{n}}$ via $\hat{g}(x)=sgn\big(\hat{w}^\top(\vartheta (x) - \hat{b}))$ and obtain the corresponding pseudo-labels $\{\tilde{y}_i\}_{i=1}^{2\tilde{n}}$. We denote sample sizes for each label class by $\tilde{n}_1=\sum_{i=1}^{2\tilde n} 1(\tilde y_i=1)$ and $\tilde n_2=\sum_{i=1}^{2\tilde n} 1(\tilde y_i=-1)$ respectively. Then we estimate $w$ and $b$ respectively by 
\begin{align*}
 \tilde{w}=\frac{1}{2\tilde{n}_1}\sum_{\tilde y_i=1}\vartheta(\tilde x_i)-\frac{1}{2\tilde{n}_2}\sum_{\tilde y_i=-1}^{\tilde{n}}\vartheta(\tilde{x}_i)\\
 \tilde{b} =\frac{1}{2\tilde{n}_1}\sum_{\tilde y_i=1}\vartheta(\tilde x_i)+\frac{1}{2\tilde{n}_2}\sum_{\tilde y_i=-1}\vartheta(\tilde{x}_i).
\end{align*}
Given the pseudo-labels, these two estimators only depend on the shifted domain data. They are slightly different than those in the supervised setting, since the shifted domain data is not necessarily mixed uniformly. The classifier is then constructed as $f_{\tilde w,\tilde b}(x)=sgn(\tilde w^\top(\vartheta(x)-\tilde b))$. 
For the simplicity of theoretical analysis, we don't merge the original and out-of-domain datasets to get $\tilde w$ and $\tilde b$. However, as we show in Section~\ref{sec:exp}, merging both datasets for robust training lead to better empirical performance. 

Recall $\phi=\vartheta\circ\rho$, and the semi-supervised learning algorithm only involves $y$ and $\vartheta(x)$, we can equivalently view the input distribution as $\phi(yz)$ for $z\sim\cN(0,I_{s_1})$, $y\sim Bern(1/2)$, and the classifier class as $\cal G'=\big\{g |g(x)=sgn\big(w^\top x- b)\big), (w,b)\in \bR^d\times \bR ^d\big\}$ (such linearization is the common purpose of kernel tricks). For the simplicity of description, our later statements will use this equivalent setting and simply consider the distributions of $\vartheta(\tilde{x})$.

\begin{theorem}[Robust accuracy]\label{thm:robustaccuracy}
 Recall in Gaussian generative model, the marginal distribution of the input $x$ of labeled domain is a uniform mixture of two distributions with mean $\mu_{1}=\E[\phi(z)]$ and  $\mu_{2}=\E[\phi(-z)]$ respectively, where $z\sim\cN(0,\sigma^2I_{s_1})$. Suppose the marginal distribution of the input of unlabeled domain is a mixture of  two sub-gaussian distributions with mean $\tilde\mu_{1}$ and  $\tilde\mu_{2}$ with mixing probabilities $q$ and $1-q$ and $\left\|\E\left[\vartheta(\tilde x_i)-\E[\vartheta(\tilde x_i)]\mid a^T \vartheta(\tilde x_i)=b\right]\right\|\lesssim \sqrt d + |b|$ for fixed unit vector $a$.  Assuming the sub-gaussian norm for both labeled and unlabeled data are upper bounded by a universal quantity $\sigma_{\max}\asymp d^{1/4}$, 
 $\|\tilde\mu_1-\tilde\mu_2\|_2\asymp \sqrt d$,  $c<q<1-c$ for some constant $0<c<1/2$, and 
$$
d_\nu=\max\Big\{\frac{\|\tilde\mu_1-\mu_1\|}{\|\tilde\mu_1-\tilde\mu_2\|},\frac{\|\tilde\mu_2-\mu_2\|}{\|\tilde\mu_1-\tilde\mu_2\|}\Big\}<c_0,$$
for some constant $c_0\le 1/4$,
then the robust classification error is at most $1\%$ when $d$ is suffciently large, $n\ge C$ for some constant C (not depending on $d$ and $\epsilon$) and
$$\tilde n\gtrsim \varepsilon^2\log d\sqrt d.$$
\end{theorem}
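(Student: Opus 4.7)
The strategy is to decompose $(\tilde w, \tilde b)$ into pseudo-population targets plus finite-sample noise, exploit the $L_1'$-Lipschitz continuity of $\vartheta$ in $\ell_\infty$ to reduce the $\ell_\infty$-robust error on $\cD$ to a linear-classification problem in feature space, and then combine a margin lower bound with sub-gaussian concentration to obtain the stated sample size.

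First I would show that $\hat g(x) = \text{sgn}(\hat w^\top(\vartheta(x) - \hat b))$ has small pseudo-labeling error on the shifted domain. Theorem~\ref{thm:standardaccuracy} (with $n \ge C$ in place of $n=1$) gives that $\hat w$ concentrates around $\mu_1-\mu_2$ and $\hat b$ around $(\mu_1+\mu_2)/2$. Since $d_\nu \le c_0 \le 1/4$ forces $\|\tilde\mu_j - \mu_j\| \le c_0\|\tilde\mu_1-\tilde\mu_2\|$, the hyperplane $\{v : \hat w^\top(v - \hat b) = 0\}$ still separates $\tilde\mu_1$ from $\tilde\mu_2$ with signed margin $\gtrsim \sqrt d$ (after normalizing by $\|\hat w\|$); sub-gaussian concentration of $\vartheta(\tilde x)$ then produces small per-component pseudo-labeling error rates $p_+, p_-$.

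Next I would define the pseudo-labeled population targets $\mu_j^g \defeq \bE[\vartheta(\tilde x) \mid \hat g(\tilde x) = (-1)^{j+1}]$, together with $\bar w \defeq (\mu_1^g - \mu_2^g)/2$, $\bar b \defeq (\mu_1^g + \mu_2^g)/2$. The conditional-mean hypothesis $\|\bE[\vartheta(\tilde x) - \bE\vartheta(\tilde x) \mid a^\top \vartheta(\tilde x) = b]\| \lesssim \sqrt d + |b|$, combined with the smallness of $p_\pm$, yields $\|\mu_j^g - \tilde\mu_j\| \lesssim p_\pm \|\tilde\mu_1 - \tilde\mu_2\| \ll \sqrt d$, so $\bar w$ and $\bar b$ are close to $(\tilde\mu_1-\tilde\mu_2)/2$ and $(\tilde\mu_1+\tilde\mu_2)/2$ respectively. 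Writing $\tilde w = \bar w + \Delta_w$ and $\tilde b = \bar b + \Delta_b$, each coordinate of $\Delta_w$ is (after Hoeffding-based control of $\tilde n_\pm$ around $q\tilde n$ and $(1-q)\tilde n$, which holds since $c < q < 1-c$) an average of $\Theta(\tilde n)$ centered sub-gaussians of parameter $\sigma_{\max}\asymp d^{1/4}$, so a coordinate-wise tail bound and a union bound over $d$ coordinates give $\|\Delta_w\|_\infty \lesssim d^{1/4}\sqrt{\log d}/\sqrt{\tilde n}$ and hence $\|\Delta_w\|_1 \le d\|\Delta_w\|_\infty \lesssim d^{5/4}\sqrt{\log d}/\sqrt{\tilde n}$, with analogous bounds for $\|\Delta_b\|$.

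Finally, for a test $x$ from $\cD$ with $y=1$, the $L_1'$-Lipschitz property of $\vartheta$ gives
$$\min_{\|\delta\|_\infty \le \varepsilon}\tilde w^\top\bigl(\vartheta(x+\delta)-\tilde b\bigr) \ge \tilde w^\top\bigl(\vartheta(x)-\tilde b\bigr) - L_1'\varepsilon\|\tilde w\|_1,$$
so it suffices to show the right-hand side is positive with probability $\ge 0.99$. Expanding $\tilde w^\top(\vartheta(x)-\tilde b) = \bar w^\top(\vartheta(x)-\bar b) - \bar w^\top\Delta_b + \Delta_w^\top(\vartheta(x)-\bar b) - \Delta_w^\top\Delta_b$, the main term has mean $\ge \|\tilde\mu_1-\tilde\mu_2\|^2/4 - O(d_\nu)\|\tilde\mu_1-\tilde\mu_2\|^2 \gtrsim d$ (using $c_0 \le 1/4$) and sub-gaussian norm $\lesssim d^{3/4}$, so it exceeds $d/16$ with high probability; the three cross-terms are $o(d)$ once $\tilde n \gtrsim \varepsilon^2\sqrt d \log d$; and $\|\tilde w\|_1 \lesssim d$, so the robustness requirement $L_1'\varepsilon\|\tilde w\|_1 \ll d$ reduces precisely to $\tilde n \gtrsim \varepsilon^2\sqrt d\log d$. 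The main obstacle is the bias control in Step 2: the conditioning half-space $\{\hat g(\tilde x) = \pm 1\}$ is itself random through its dependence on the labeled data, so invoking the conditional-mean hypothesis requires a covering/union argument over admissible $(\hat w, \hat b)$ that is the key technical innovation beyond the in-domain semi-supervised analysis of \cite{carmon2019unlabeled}.
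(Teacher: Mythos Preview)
There is a genuine gap in Step~2 of your plan. You claim that the conditional-mean hypothesis together with the smallness of the pseudo-labeling error rates $p_\pm$ gives $\|\mu_j^g-\tilde\mu_j\|\lesssim p_\pm\|\tilde\mu_1-\tilde\mu_2\|\ll\sqrt d$. This does not follow. Even on the \emph{correctly} pseudo-labeled part of the mixture, the event $\{\hat g(\tilde x)=1\}$ is a half-space constraint $\{a^\top\vartheta(\tilde x)>t\}$ on $\vartheta(\tilde x)$, and by the tower property
\[
\bigl\|\E[\vartheta(\tilde x)-\tilde\mu_1\mid a^\top\vartheta(\tilde x)>t,\ z=1]\bigr\|
\le \E\bigl[\sqrt d+|a^\top\vartheta(\tilde x)|\,\bigm|\, a^\top\vartheta(\tilde x)>t\bigr]\lesssim\sqrt d.
\]
The hypothesis only guarantees an $O(\sqrt d)$ bias, not $o(\sqrt d)$; the factor $p_\pm$ weights only the \emph{mislabeled} contribution and does not damp the conditioning bias on the correctly labeled bulk. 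Consequently $\bar w$ may differ from $\tilde\mu=(\tilde\mu_1-\tilde\mu_2)/2$ by a vector $e_w$ with $\|e_w\|\asymp\sqrt d$, and then in your Step~3 the cross term $|e_w^\top(\mu_1-\bar b)|$ can be of order $\sqrt d\cdot\sqrt d=d$, the same order as the putative margin $\|\tilde\mu\|^2$, so the lower bound does not survive.

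The paper circumvents exactly this issue by a different decomposition and a ratio identity. It writes $\tilde w=\gamma\tilde\mu+\tilde\delta$ with $\gamma$ a data-dependent ``signed accuracy'' of the pseudo-labels, shows $\gamma\ge 2/5$ with high probability, and then uses
\[
\frac{\|\tilde w\|^2}{(\tilde\mu^\top\tilde w)^2}
=\frac{1}{\|\tilde\mu\|^2}+\frac{\|\tilde\delta\|^2-(\tilde\mu^\top\tilde\delta)^2/\|\tilde\mu\|^2}{(\gamma\|\tilde\mu\|^2+\tilde\mu^\top\tilde\delta)^2}
\le\frac{1}{\|\tilde\mu\|^2}+\frac{\|\tilde\delta\|^2}{\|\tilde\mu\|^4\bigl(\gamma+\tilde\mu^\top\tilde\delta/\|\tilde\mu\|^2\bigr)^2}.
\]
This identity tolerates $\|\tilde\delta\|^2\lesssim d$ (precisely the $O(d)$ bias the conditional-mean hypothesis delivers, plus the $d\sigma_{\max}^2\log d/\tilde n$ variance) as long as $|\tilde\mu^\top\tilde\delta|\ll\|\tilde\mu\|^2$, i.e.\ $d^{3/4}\ll d$, which is what the paper separately establishes. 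In other words, the paper controls the \emph{normalized} projection $\tilde\mu^\top\tilde w/\|\tilde w\|$ rather than $\|\tilde w-\tilde\mu\|$; your route would need the latter to be $o(\sqrt d)$, and that is simply false under the stated assumption. Finally, your diagnosis of the ``main obstacle'' is off: the randomness of the half-space through $(\hat w,\hat b)$ is handled in the paper by conditioning on the labeled sample (labeled and unlabeled data are independent), so no covering or union over admissible $(\hat w,\hat b)$ is needed.
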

\begin{remark}
We remark here that $\sigma_{\max}$, the upper bound of the sub-gaussian norm of the Gaussian generative model, is upper bounded by $L_1L_2\sigma$. Comparing to the Theorem~\ref{thm:lowerbound}, which shows that the sample complexity of order ${\varepsilon^2\sqrt{d}}/{\log d}$ is necessary to achieve small robust error,
the above theorem shows that, by incorporating the same order of similar unlabeled data (up to a logarithm factor), which is generally cheaper, one can achieve the same robust accuracy. We further note that the sub-Gaussian assumption in Theorem \ref{thm:robustaccuracy} is quite relaxed. For example, any dataset where the feature values are bounded are automatically sub-Gaussian. This includes all image data since the pixel values are bounded. 
\end{remark}
\begin{remark}

Moreover, by using the same technique, we can extend our theoretical results to a much more general family of distributions in $\R^d$ whose tails are bounded by any strictly decreasing function $g$. Define \begin{align*}
D_g(\mu,\sigma^2)=\{& X\in\R^d: \forall v\in\R^d, \|v\|_2=1, \text{Var}(X_j)\le\sigma^2\\
 &\Pro(|v^T(X-\mu)|>\sigma\cdot t)\le g(t) \}.
    \end{align*} For example letting $g(t)=C_1 e^{-C_2t^2}$ reduces $D_g$ to the family of sub-Gaussian distributions. The in-domain distribution is now assumed to be $x_1,...,x_n\stackrel{i.i.d.}{\sim}{1}/{2}\cdot D_g(\mu_1,\sigma^2)+{1}/{2}\cdot D_g(\mu_2,\sigma^2)$ and the out-of-domain distribution is assumed to be $\tilde x_1,...,\tilde x_{\tilde n}\stackrel{i.i.d.}{\sim}q\cdot D_g(\tilde\mu_1,\tilde\sigma^2)+(1-q)\cdot D_g(\tilde\mu_2,\tilde\sigma^2)$ where $q\in(0,1/2)$. We remark here that this extension allows the in-domain and out-of-domain distributions to be very different as long as they are all in the family $D_g(\cdot,\cdot)$. 
    We present the following proposition for this extension.
    \begin{proposition}
    Under the similar assumptions to those in Theorem 3.3, that is, $\left\|\E\left[\tilde x_i-\E[\tilde x_i]\mid a^T \tilde x_i=b\right]\right\|\lesssim \sqrt d + |b|$ for fixed unit vector $a$, $\tilde\sigma\le\sigma_{\max}\asymp d^{1/4}$, 
 $\|\tilde\mu_1-\tilde\mu_2\|_2\asymp \sqrt d$,  $c<q<1-c$ for some constant $0<c<1/2$, and 
$$d_\nu=\max\Big\{\frac{\|\tilde\mu_1-\mu_1\|}{\|\tilde\mu_1-\tilde\mu_2\|},\frac{\|\tilde\mu_2-\mu_2\|}{\|\tilde\mu_1-\tilde\mu_2\|}\Big\}<c_0,$$
for some constant $c_0\le 1/4$,
    then    the robust classification error is at most $1\%$ when $d$ is sufficiently large, $n\ge C$ for some constant C (not depending on $d$ and $\varepsilon$) and $$\tilde n\gtrsim \varepsilon^2\cdot(g^{-1}(1/d\log d))^2\cdot\sqrt d.$$
       \end{proposition}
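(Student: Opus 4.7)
The plan is to mirror the proof of Theorem~\ref{thm:robustaccuracy} and replace every application of sub-Gaussian tail concentration by the corresponding general quantile produced by $g$. Under the sub-Gaussian hypothesis one controls $\bP(|v^T(X-\mu)|>\sigma t)\le 2e^{-ct^2}$, and a union bound over $O(d\log d)$ events then supplies a $\sqrt{\log d}$ deviation quantile. For the broader family $D_g$, the same union bound supplies the quantile $g^{-1}(1/(d\log d))$. Since the remainder of the argument is purely structural, substituting $\sqrt{\log d}\mapsto g^{-1}(1/(d\log d))$ throughout produces the claimed bound $\tilde n\gtrsim \varepsilon^2 (g^{-1}(1/(d\log d)))^2 \sqrt d$.

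First, using the $n\ge C$ labeled samples together with the shift condition $d_\nu<c_0\le 1/4$, I would show that the supervised pseudo-labeler $\hat g=f_{\hat w,\hat b}$ correctly labels all but an $o(1)$ fraction of the out-of-domain samples. Here the gap $\|\tilde\mu_1-\tilde\mu_2\|\asymp\sqrt d$ dominates both the finite-sample fluctuations in $\hat w,\hat b$ (now controlled by $g$ rather than by Gaussian tails) and the mean shift $d_\nu\|\tilde\mu_1-\tilde\mu_2\|$, so each fresh out-of-domain point lies on the correct side of the supervised hyperplane with probability $1-o(1)$.

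Next, conditional on the pseudo-labels being essentially correct, $\tilde w$ and $\tilde b$ reduce to averages of samples drawn from $D_g(\tilde\mu_1,\tilde\sigma^2)$ and $D_g(\tilde\mu_2,\tilde\sigma^2)$, biased only by the conditioning on the pseudo-label event. The assumption $\|\E[\tilde x-\E[\tilde x]\mid a^T \tilde x=b]\|\lesssim \sqrt d+|b|$ bounds the conditional-mean bias uniformly in $(a,b)$, so the analysis reduces to that of i.i.d.\ empirical means. Applying the $D_g$ tail condition in each canonical direction and taking a union bound over the $d$ coordinates with a $\log d$ slack gives
\begin{equation*}
|v^T(\tilde w-(\tilde\mu_1-\tilde\mu_2))|\lesssim \tilde\sigma\cdot\frac{g^{-1}(1/(d\log d))}{\sqrt{\tilde n}}
\end{equation*}
uniformly over the relevant directions $v$, together with the analogous bound on $\tilde b$. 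The robust-accuracy computation from Theorem~\ref{thm:robustaccuracy}, which trades this deviation against the population margin of order $d$ and the $\ell_\infty$-adversarial budget $\varepsilon\|\tilde w\|_1$, then yields the claimed sample-complexity.

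The main obstacle is handling the uniform control over adversarial directions in the absence of a Gaussian $\chi^2$-type concentration. In the sub-Gaussian case the Lipschitz property combined with a $\sqrt{\log d}$-quantile tail suffices; for generic $g$ the corresponding quantile $g^{-1}(1/(d\log d))$ may be much larger, and one must verify that the supervised-stage pseudo-labeling error remains $o(1)$ under this weaker tail and that the coordinate-wise union bound (rather than a net-based argument) remains tight enough. The conditional-mean assumption is exactly what allows the problem to be reduced to coordinate-wise concentration; beyond that reduction, the remaining steps are routine substitutions of the new quantile into the Theorem~\ref{thm:robustaccuracy} bookkeeping.
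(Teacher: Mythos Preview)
Your proposal is correct and matches the paper's approach essentially line for line: the paper's proof of Proposition~1 is explicitly a rerun of the Theorem~\ref{thm:robustaccuracy} argument with every sub-Gaussian tail bound $e^{-C\tilde n t^2}$ replaced by $g(C\sqrt{\tilde n}\,t)$, after which the coordinate-wise union bound over $d$ events at level $1/(d\log d)$ produces exactly the quantile substitution $\sqrt{\log d}\mapsto g^{-1}(1/(d\log d))$ you describe. One small point worth flagging: the paper's appendix restatement of the proposition quietly adds the hypothesis that $D_g$ is closed under independent summation, which is what lets the tail bound transfer from the individual $\varepsilon_i$ to the empirical averages $\tfrac{1}{\tilde n}\sum_i 1(\tilde y_i=1)\varepsilon_i^{(j)}$; you implicitly use this when you ``apply the $D_g$ tail condition'' to the averages, so you should state it.
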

\end{remark}


\paragraph{Connections to statistical measures.} 
A key quantity in Theorem~\ref{thm:robustaccuracy} is $d_\nu$, which quantifies the difference between the labeled and unlabeled domain. In this section, we make connections between some commonly used statistical measures and $d_\nu$ via some more specific examples.We establish connections to Wasserstein Distance, Maximal Information and $\cH$-Divergence. Due to the limit of space, we only demonstrate the result for Wasserstein Distance here and put the other results in the appendix. Throughout this paragraph, we consider the distribution of the labeled domain with positive distribution $\cP_1$, negative distribution $\cP_2$, and $y\sim Bern(1/2)$. The marginal distributions of shifted domain is assumed to be a uniform mixtures of $\tilde{\cP}_1$ and $\tilde{\cP}_2$. 


\textit{Wasserstein Distance}: the Wasserstein Distance induced by metric $\rho$ between distributions $\cP_1$ and $\cP_2$ over $\bR^d$ is defined as $$W_\rho(\cP_1,\cP_2)=\sup_{\|f\|_{\text{Lip}}\leq 1}\left[\int fd\cP_1-fd\cP_2\right],$$
where $\|f\|_{\text{Lip}}\leq 1$ indicates the class of $f:\bR^d\mapsto\bR$ such that for any $x,x'\in \bR^d$, $|f(x)-f(x)|\leq \rho(x,x').$ Let us consider $\rho(x,x')=\|x-x'\|.$ 
\begin{proposition}
Under the assumption that $\max\{W_\rho(\cP_1,\tilde{\cP}_1),W_\rho(\cP_2,\tilde{\cP}_2)\}\leq \tau$, for $ \tau\geq 0$, then we have $\|\mu_i-\tilde{\mu_i}\|\leq\tau$ for $i=1,2.$ 
As a result, 
$$d_\nu\leq\frac{\tau}{\|\tilde{\mu}_1-\tilde{\mu}_2\|}.$$
If we further have $ \tau\leq \|\mu_1-\mu_2\|/2$, we will then have $d_\nu\leq\tau/(\|\mu_1-\mu_2\|-2\tau).$
\end{proposition}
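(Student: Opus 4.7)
The proof should be a short application of Kantorovich--Rubinstein duality followed by the definition of $d_\nu$ and the triangle inequality. The plan is as follows.

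First, I would reduce the bound on $\|\mu_i - \tilde\mu_i\|$ to a bound on $W_\rho(\cP_i,\tilde\cP_i)$ by using a well-chosen class of test functions. Specifically, for any unit vector $v\in\bR^d$, the linear map $f_v(x)=v^\top x$ is $1$-Lipschitz with respect to $\rho(x,x')=\|x-x'\|$, so by the definition of $W_\rho$,
$$v^\top(\mu_i-\tilde\mu_i)=\int f_v\,d\cP_i-\int f_v\,d\tilde\cP_i\le W_\rho(\cP_i,\tilde\cP_i)\le \tau.$$
Taking the supremum over $\|v\|=1$ yields $\|\mu_i-\tilde\mu_i\|\le \tau$ for $i=1,2$, which is the first claimed inequality.

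Next, I would plug this bound directly into the definition
$$d_\nu=\max\Bigl\{\tfrac{\|\tilde\mu_1-\mu_1\|}{\|\tilde\mu_1-\tilde\mu_2\|},\tfrac{\|\tilde\mu_2-\mu_2\|}{\|\tilde\mu_1-\tilde\mu_2\|}\Bigr\}$$
to conclude $d_\nu\le \tau/\|\tilde\mu_1-\tilde\mu_2\|$.

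Finally, for the refined bound under the additional assumption $\tau\le \|\mu_1-\mu_2\|/2$, I would lower bound the denominator $\|\tilde\mu_1-\tilde\mu_2\|$ using the triangle inequality:
$$\|\tilde\mu_1-\tilde\mu_2\|\ge \|\mu_1-\mu_2\|-\|\tilde\mu_1-\mu_1\|-\|\tilde\mu_2-\mu_2\|\ge \|\mu_1-\mu_2\|-2\tau,$$
which is strictly positive by assumption. Substituting gives the stated bound $d_\nu\le \tau/(\|\mu_1-\mu_2\|-2\tau)$.

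There is no real obstacle here; the only point worth being careful about is the choice of Lipschitz test functions in the Kantorovich--Rubinstein representation and that $\rho$ is the ambient norm (so linear maps $v^\top x$ with $\|v\|\le 1$ are admissible). Everything else is the triangle inequality.
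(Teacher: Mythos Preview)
Your proposal is correct and follows essentially the same approach as the paper: the paper's proof simply notes that ``$f(x)=x$'' is $1$-Lipschitz to obtain $\|\mu_i-\tilde\mu_i\|\le\tau$, then plugs into the denominator via the triangle inequality. Your version is actually more careful, since you correctly use scalar test functions $f_v(x)=v^\top x$ with $\|v\|=1$ rather than the vector-valued identity, but the argument is the same in spirit.
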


As we can see, when the Wasserstein distance get smaller, the quantity $d_\nu$ decreases. 

\paragraph{Data from a shifted domain can work even better.}
Theorem \ref{thm:robustaccuracy} demonstrates the sample complexity gap in Section \ref{subsec:sg} can be bridged via out-of-domain data. Next, we  show that in certain settings,
one can achieve even better adversarial robustness when the unlabeled data comes from a shifted domain rather than the same domain as the labeled data. 
To illustrate this phenomenon,  let us analyze a specific example of our model --- the Gaussian model proposed in \cite{schmidt2018adversarially}.

\begin{theorem}\label{thm:enhance}
Suppose the distribution of the labeled domain has positive distribution $\cN(\mu,\sigma^2I_{d})$ and negative distribution $\cN(-\mu,\sigma^2I_{d})$ with $\mu\in\R^d$ and $y\sim Bern(1/2)$. Samples $(x_1,y_1),\cdots,(x_n,y_n)$ are i.i.d. drawn from the labeled domain.  Suppose we have unlabeled inputs from the same domain $ x_{n+1},\cdots, x_{n+\tilde n}$, 
and also have unlabeled shifted domain inputs $\tilde x_1,\cdots,\tilde x_{\tilde n}$, which are drawn from a uniform mixture of $\cN(\tilde{\mu},\sigma^2I_{d})$ and $\cN(-\tilde{\mu},\sigma^2I_{d})$ with $\tilde{\mu}\in\R^d$. Denote the parameter $\theta=(w,b)$ of the classifier obtained through semi-supervised algorithm by $\hat\theta_{\text{same}}$ and  $\hat\theta_{\text{shifted}}$, when we use $\{{x}_i\}_{i=n+1}^{n+\tilde n}$ and $\{\tilde{x}_i\}$ respectively.
If we let $\mu=2\varepsilon 1_d$ and   $\tilde\mu=\varepsilon 1_d$, where $1_d$ is a $d$-dimensional vector with every entry equals to $1$, when  $(1/d+1/\tilde n)\cdot\sigma^2/\epsilon^2\to0$ as $d$, $\tilde n\to\infty$, we then have 
$$
\beta^{\varepsilon,\infty}(\hat{\theta}_{\text{shifted}})\le \beta^{\varepsilon,\infty}(\hat{\theta}_{\text{same}}).
$$
\end{theorem}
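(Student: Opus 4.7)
The plan is to derive explicit asymptotic expressions for $\beta^{\varepsilon,\infty}(\hat\theta_{\text{same}})$ and $\beta^{\varepsilon,\infty}(\hat\theta_{\text{shifted}})$ and show that their difference is nonpositive in the stated limit. The key structural observation is that $\mu = 2\varepsilon 1_d$ and $\tilde\mu = \varepsilon 1_d$ are collinear with a common direction $1_d/\sqrt{d}$, so the Bayes-robust classifier is identical on both label-generation distributions; one only needs to compare how closely each semi-supervised procedure approaches it.

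First I would characterize the pilot estimators from the labeled samples: $\hat w = \mu + \bar v$ with $\bar v \sim \cN(0,(\sigma^2/n) I_d)$, and $\hat b$ centered near the origin. Because $u := \hat w/\|\hat w\|$ is then near $1_d/\sqrt{d}$ and both $u^\top \mu \approx 2\varepsilon\sqrt{d}$ and $u^\top \tilde\mu \approx \varepsilon\sqrt{d}$ grow with $\varepsilon\sqrt d/\sigma \to \infty$, applying the pilot $\hat g$ to either unlabeled pool yields pseudo-labels that are asymptotically accurate. Using truncated-Gaussian expectations, I would then compute the conditional mean and variance of $(\tilde w, \tilde b)$ given the pilot, obtaining $\mathbb{E}[\tilde w\mid\hat w,\hat b]=c\, 1_d +$ (exponentially small component along $u$), with $c=2\varepsilon$ for $\hat\theta_{\text{same}}$ and $c=\varepsilon$ for $\hat\theta_{\text{shifted}}$, sampling covariance of order $\sigma^2/\tilde n$ per coordinate in both cases, and $\mathbb{E}[\tilde b]\approx 0$.

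Second, I would substitute $\tilde w = c\, 1_d + \delta$ and $\tilde b \approx 0$ into the closed-form robust error for the symmetric Gaussian model,
\begin{equation*}
\beta^{\varepsilon,\infty}(w,b)=\tfrac{1}{2}\Phi\!\Bigl(\tfrac{w^\top b+\varepsilon\|w\|_1-w^\top\mu}{\sigma\|w\|_2}\Bigr)+\tfrac{1}{2}\Phi\!\Bigl(\tfrac{-w^\top b+\varepsilon\|w\|_1-w^\top\mu}{\sigma\|w\|_2}\Bigr),
\end{equation*}
and Taylor expand around $\delta = 0$, $\tilde b = 0$. The leading contribution in both cases should be $\Phi(-\varepsilon\sqrt{d}/\sigma)$, the Bayes-robust error on the original domain. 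The next-order correction depends only on the cosine alignment of $\tilde w$ with $1_d$ and on $\tilde w^\top \tilde b/(\sigma\|\tilde w\|_2)$, both of which I would bound via concentration of $\|\delta\|_2^2$ and $\sum_i \delta_i$ together with the assumption $(1/d+1/\tilde n)\sigma^2/\varepsilon^2\to 0$.

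The main obstacle is controlling the sign of the second-order comparison. Since the shifted signal $c = \varepsilon$ is smaller than the in-domain signal $c = 2\varepsilon$, the relative sampling noise in the shifted case is larger, which at first sight should increase its error. The critical analytical step is to identify a cancellation within the numerator $\varepsilon\|\tilde w\|_1-\tilde w^\top\mu$: both terms inherit the same leading first-order noise $c\sum_i\delta_i$, and because the adversarial radius $\varepsilon$ is matched to the original-domain mean magnitude $\mu = 2\varepsilon 1_d$, those first-order contributions partially cancel in a way that depends on $c$. Making this cancellation together with the denominator expansion of $\|\tilde w\|_2$ precise --- and showing that the resulting net second-order correction in the shifted case does not exceed that in the same-domain case --- is the key technical step that would deliver the claimed inequality.
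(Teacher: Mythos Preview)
Your proposal misses the paper's central structural observation and, as a consequence, gets tangled in a second-order comparison that actually points the wrong way.

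The paper's proof proceeds by first characterizing the optimal robust direction explicitly: for $f_w(x)=\mathrm{sgn}(w^\top x)$ one has
\[
\beta^{\varepsilon,\infty}(w)=Q\!\left(\frac{\mu^\top w-\varepsilon\|w\|_1}{\sigma\|w\|_2}\right),
\]
and the minimizer over unit $w$ is $T_\varepsilon(\mu)/\|T_\varepsilon(\mu)\|$, where $T_\varepsilon$ is the soft-threshold operator $(T_\varepsilon(\mu))_j=\mathrm{sgn}(\mu_j)\max\{|\mu_j|-\varepsilon,0\}$. The punchline is that for $\mu=2\varepsilon 1_d$ one has $T_\varepsilon(\mu)=\varepsilon 1_d=\tilde\mu$ exactly: the shifted-domain mean \emph{is} the robust-optimal direction. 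The paper then only needs to show that $\hat\theta_{\text{shifted}}/\|\hat\theta_{\text{shifted}}\|\to\tilde\mu/\|\tilde\mu\|$ under the stated asymptotics, which follows from $\|\tilde\mu\|^2=d\varepsilon^2$ and a rotation-invariance bound on the noise $\frac{1}{\tilde n}\sum_i\tilde y_i\varepsilon_i$ of order $\sigma^2(1/\tilde n+1/d)$ in squared norm. Since the shifted estimator converges to the minimizer of the robust error, its robust error is asymptotically $\le\beta^{\varepsilon,\infty}(f_\mu)$, to which the same-domain estimator converges. No second-order comparison is needed.

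Your plan instead attempts a direct Taylor expansion of the robust error around $c\,1_d$ for $c\in\{\varepsilon,2\varepsilon\}$ and looks for a cancellation in the numerator $\varepsilon\|\tilde w\|_1-\tilde w^\top\mu$. But carry your own expansion one step further: writing $\tilde w=c\,1_d+\delta$ with all coordinates positive, the argument of $Q$ reduces to
\[
\frac{\varepsilon\sqrt d}{\sigma}\cdot\frac{c+\bar\delta}{\sqrt{(c+\bar\delta)^2+\bigl(\tfrac{1}{d}\|\delta\|^2-\bar\delta^2\bigr)}},\qquad \bar\delta=\tfrac{1}{d}\sum_j\delta_j,
\]
so the first-order fluctuations in numerator and denominator cancel for \emph{every} $c$, and the leading deviation from the optimum $\varepsilon\sqrt d/\sigma$ is governed by $\bigl(\tfrac{1}{d}\|\delta\|^2-\bar\delta^2\bigr)/c^2\approx \sigma^2/(\tilde n c^2)$. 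This is four times \emph{larger} for the shifted estimator ($c=\varepsilon$) than for the same-domain one ($c=2\varepsilon$): your hoped-for cancellation is real but it is symmetric in $c$, and the residual second-order term goes against you. The theorem's weak inequality survives only because both deviations vanish in the regime $(1/d+1/\tilde n)\sigma^2/\varepsilon^2\to 0$; your route can recover that, but not via the sign argument you outline. The paper's soft-thresholding insight is what makes the proof short and the intuition (``robust training is implicitly regularized, and $\tilde\mu$ already supplies that regularization'') transparent.
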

This seemingly surprising result can be explained intuitively. Heuristically, when one tries to minimize the robust error, the robust optimizer will behave similarly to a regularized version of the standard optimizer. In our semi-supervised setting, the shifted domain data also act as regularization. Such an intuition is rigorously justified in the proof.  Further, we illustrate the results in Theorem~\ref{thm:enhance} by experiments with synthetic data, the experiment set-up and results are presented in the appendix, where we find that the robust error by incorporating the out-of-domain unlabeled data is smaller than than incorporating the same amount of unlabeled data from the same domain as the labeled data. 


\paragraph{Too irrelevant unlabeled data hurts robustness.} 
In the results above, we demonstrate that incorporating unlabeled data from a shifted domain can improve robust accuracy in the original domain, if the shifted domain is not too different from the original (as measured by $d_{\nu}$). Here we show that there is no free lunch; if the shifted domain is too different from the original, then incorporating its unlabeled data through pseudo-labeling could decrease the robust accuracy in the original domain.

\begin{theorem}\label{thm:lb}
Suppose that the distribution of labeled domain's positive  and negative distribution area uniform two symmetric sub-gaussian distribution with means $\mu_{2}=-\mu_{1}$, and $y\sim Bern(1/2)$. The distribution of unlabeled domain $\cP'$ is a mixture of two sub-gaussian distributions with mean $\tilde{\mu}_{1}$ and  $\tilde{\mu}_{2}$. Let $\Xi=\{\tilde\mu_{1},\tilde\mu_{2}: d_\nu\ge\frac{1}{2}
\}$. Then for $\cP'$ with $(\tilde{\mu}_1,\tilde{\mu_2})\in \Xi$, with high probability, the worst case  robust misclassification error $\beta^{\varepsilon,\infty}(\tilde{w},\tilde{b})$ via the previous semi-supervised learning satisfies
$$
\sup_{\cP':(\tilde{\mu}_1,\tilde{\mu_2})\in  \Xi}\beta^{\varepsilon,\infty}(\tilde{w},\tilde{b})\ge 49\%.
$$
\end{theorem}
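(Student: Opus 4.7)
The strategy is to exhibit a single adversarial choice $(\tilde\mu_1,\tilde\mu_2)\in\Xi$ for which, with high probability over the training draws, the resulting semi-supervised classifier performs no better than random on the labeled domain; the supremum over $\Xi$ must then exceed $49\%$. Concretely, I would pick $\tilde\mu_1 = R\,e$ and $\tilde\mu_2 = -R\,e$ with mixing probability $q=1/2$, where $e$ is any unit vector orthogonal to $\mu_1$ and $R>0$ is a free scale parameter. Because $\|\tilde\mu_1-\mu_1\|=\sqrt{R^2+\|\mu_1\|^2}$ and $\|\tilde\mu_1-\tilde\mu_2\|=2R$, we have $d_\nu\ge 1/2$ for every $R>0$, so this choice lies in $\Xi$.

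Next, by the analysis behind Theorem~\ref{thm:standardaccuracy}, the supervised estimates concentrate around an $\hat w$ pointing in the $\mu_1$-direction and $\hat b\approx\bm 0$, so $\hat g(\tilde x)\approx\mathrm{sgn}(\mu_1^\top\vartheta(\tilde x))$. Specializing to $\vartheta=\mathrm{id}$ and writing $\tilde x_i = Y_i R\,e+\sigma\xi_i$, with $Y_i\in\{\pm1\}$ the hidden cluster indicator and $\xi_i$ sub-Gaussian noise, one has $\mu_1^\top\tilde x_i = \sigma\mu_1^\top\xi_i$ because $e\perp\mu_1$. Hence the pseudo-label $\tilde y_i$ depends only on $\xi_i$ and is statistically independent of $Y_i$. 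Conditioning on $\tilde y_i=+1$ then shifts $\tilde x_i$ only in the $\mu_1$-direction by a half-normal amount of order $\sigma$, while the $e$-direction conditional variance retains $\mathrm{Var}(Y_iR)=R^2$. Averaging over the $\tilde n$ samples yields $\tilde w\approx\alpha\,\mu_1/\|\mu_1\|+\gamma\,e$, where $\alpha=\Theta(\sigma)$ is deterministic and $\gamma$ is mean-zero sub-Gaussian with standard deviation $\Theta(R/\sqrt{\tilde n})$; a parallel computation shows that $\tilde b$ has vanishing mean with $e$-direction fluctuation of the same order.

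Applying $f_{\tilde w,\tilde b}$ to a test point $x\sim\cN(\pm\mu_1,\sigma^2 I)$, the decision statistic has signal of size $\alpha\|\mu_1\|$ against a noise contribution $|\gamma|\sigma$ coming from the $e$-direction. Choosing $R\gg \|\mu_1\|\sqrt{\tilde n}$ (permitted because the sub-Gaussian norm of each cluster is invariant under shifts of its mean) makes $|\gamma|\sigma$ dominate $\alpha\|\mu_1\|$ with high probability, so the prediction becomes essentially independent of $y$ and the standard error tends to $1/2$; since $\beta^{\varepsilon,\infty}\ge\beta^{0,\infty}$, the robust error also exceeds $49\%$. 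The main obstacle will be the concentration step: the pseudo-labels $\tilde y_i$ are themselves random functions of the data, so the empirical means defining $\tilde w$ and $\tilde b$ are not sums of i.i.d.\ vectors. I would handle this by conditioning on the random pseudo-label partition and applying sub-Gaussian concentration uniformly over its realizations, combined with explicit truncated sub-Gaussian moment computations to pin down the scales $\alpha$ and $\mathrm{Var}(\gamma)$.
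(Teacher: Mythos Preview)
Your construction coincides with the paper's: both pick the unlabeled means along a direction $e$ orthogonal to $\mu_1$ (the paper takes $\mu=e_1$, $v=a^{-1}e_2$), verify that this lands in $\Xi$, and observe that the pseudo-labels $\tilde y_i$ then depend only on the $\mu_1$-component of the noise and are independent of the true cluster indicator $Y_i$.

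Where you diverge is in how the $49\%$ conclusion is extracted. The paper keeps the scale of $\tilde\mu_1-\tilde\mu_2$ fixed and argues that the dominant contribution to $\tilde w_{\text{final}}$ comes from the half-normal truncation of the noise in the $\mu_1$-direction, so that $\tilde w_{\text{final}}/\|\tilde w_{\text{final}}\|\approx e_1$ (up to $O_p(\sqrt{d/n})+O_p(\sqrt{1/\tilde n})$ corrections); the final error computation then relies on the signal in the decision statistic being negligible. Your route instead treats $R=\|\tilde\mu_1\|$ as a free knob and drives it to infinity, so that the unaveraged $Y_iR$ fluctuation in the $e$-direction gives $\tilde w$ a random $e$-component of size $\Theta(R/\sqrt{\tilde n})$ that swamps the deterministic $\Theta(\sigma)$ component along $\mu_1$. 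This is a genuinely different mechanism: the paper's classifier points (approximately) in the \emph{correct} labeled direction and is shown to be useless anyway, whereas yours points mostly in the \emph{orthogonal} direction.

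Your variant has the advantage of making the labeled-domain error computation completely transparent: once $|\gamma|\gg\|\mu_1\|$, the signal-to-noise ratio $\alpha\|\mu_1\|/(\sigma\|\tilde w\|)$ collapses and the standard error tends to $1/2$ on $\cN(\pm\mu_1,\sigma^2 I)$ directly. The cost is that you exploit the freedom to take $\|\tilde\mu_1-\tilde\mu_2\|$ arbitrarily large, which the theorem statement permits but which would be disallowed under the $\|\tilde\mu_1-\tilde\mu_2\|\asymp\sqrt d$ normalization used elsewhere in the paper; the paper's fixed-scale argument does not need this. One small point to tighten: since $\gamma$ is mean-zero, you should phrase the high-probability claim as ``for any $\delta>0$ there exists $R=R(\delta,\tilde n,\|\mu_1\|)$ such that $|\gamma|>\|\mu_1\|$ with probability at least $1-\delta$,'' which follows because $\bP(|\gamma|<t)\to 0$ as $R\to\infty$ for fixed $t$.
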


\paragraph{Shifted domain with unknown sparsity.}
In Theorem~\ref{thm:robustaccuracy}, we show how reasonably close shifted domain data helps in improving adversarial robustness. However, sometimes, the shifted domain is not so close to the original domain in terms of $d_\nu$, but they still share some similarity. For instance, both domains can share some structural commonness. Here, we consider the case where the two distributions have common salient feature set; that is, the labeled and unlabeled domains share discriminant features, though the corresponding coefficients can be far apart. This setting is common in practice. For example, when one tries to classify images of different kinds of cats, the discriminant features include the eyes, ears, shapes etc. These discriminant features also applies when one aims to classify dogs, though the weights on these features might be very different.

Specifically, we consider the distributions of the labeled domain's positive and negative parts are $\cN(\mu_1,\sigma^2I_{d})$ and $\cN(\mu_2,\sigma^2I_{d})$, and the labels $y\sim Bern(1/2)$. The samples $(x_1,y_1),...,(x_n,y_n)$ are drawn i.i.d. from 
this labeled domain. Suppose we have unlabeled samples $\tilde x_1,...,\tilde x_{\tilde n}$, which are drawn from a uniform mixture of $\cN(\tilde{\mu},\sigma^2I_{d})$ and $\cN(-\tilde{\mu},\sigma^2I_{d})$ with $\tilde{\mu}\in\R^d$. Here, we assume the two domains share the support information, that is,  $\supp(\mu_1-\mu_2)=\supp(\tilde\mu_1-\tilde\mu_2)$, though the distance between $\mu_1-\mu_2$ and $\tilde\mu_1-\tilde\mu_2$ is not necessarily small. For such a case, we propose to use the following algorithm to help improving adversarial robustness.

\textit{Algorithm of unknown sparsity:} we first apply the high-dimensional EM algorithm \citep{cai2019chime} to estimate $\supp(\tilde\mu_1-\tilde\mu_2)$ from the unlabeled data. This high-dimensional EM algorithm is an extension of the traditional EM algorithm with the M-step being replaced by a regularized maximization. The detailed description can be found in the Appendix. After implementing the high-dimensional EM  to estimate the support $\hat S$ from the unlabeled data, we then project the labeled data to this support $\hat S$ and therefore reduce the dimension. Finally, we apply the algorithm in the supervised setting on the labeled samples with reduced dimensionality to get the estimated $\hat w_{sparse}=1/n\sum_{i=1}^n y_i[\vartheta(x_i)]_{\hat S}$ and $\hat b_{sparse}=1/n\sum_{i=n+1}^{2n}[\vartheta(x_i)]_{\hat S}$. 
The following theorem provides  theoretical guarantee for the robust classification error for  $\hat{\theta}_{\text{sparse}}$ by this algorithm. 

\begin{theorem}\label{thm:sparsity}
Under the conditions of Theorem 3.3 on parameters $\mu_1, \mu_2, \tilde\mu_1,\tilde\mu_2,\sigma$ and $q$. 
Suppose $|\supp(\mu_1-\mu_2)|=|\supp(\tilde\mu_1-\tilde\mu_2)|=m$, and $\min_{\tilde{\mu}_{1,j}-\tilde{\mu}_{2j}\neq 0}|\tilde{\mu}_{1j}-\tilde{\mu}_{2,j}|\ge\sigma\sqrt{ 2m\log d/\tilde n}$, where $\tilde{\mu}_{i,j}$ is the $j$-th entry in vector $\tilde{\mu}_i$. If  $n\gtrsim \varepsilon^2\log d\sqrt m$, we have 
$$
 \beta^{\varepsilon,\infty}(\hat w_{sparse}, \hat b_{sparse})\le 10^{-3}+O_{\bP}(\frac{1}{ n}+\frac{1}{d}).
$$
  \end{theorem}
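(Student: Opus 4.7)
}
The plan is to decompose the argument into two stages that mirror the two-step algorithm: (i) show that the high-dimensional EM procedure recovers the true support exactly with high probability, and (ii) on the event of exact support recovery, reduce the problem to an effective dimension $m$ and invoke the supervised analysis from Section \ref{subsec:sg} with $d$ replaced by $m$. The key conceptual point is that although the labeled sample size $n$ is too small to achieve robust accuracy in ambient dimension $d$, the unlabeled data buys us a dimensionality reduction for free, after which $n\gtrsim\varepsilon^2\log d\sqrt m$ labeled samples suffice.

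For stage (i), I would invoke the variable-selection consistency result for the high-dimensional EM algorithm of \cite{cai2019chime}. Under the Gaussian mixture model $\tfrac12\cN(\tilde\mu_1,\sigma^2 I_d)+\tfrac12\cN(\tilde\mu_2,\sigma^2 I_d)$ on the unlabeled domain, with $\tilde n$ samples, the regularized M-step recovers the support $S=\supp(\tilde\mu_1-\tilde\mu_2)$ with probability $1-O(1/d)$ provided the minimum signal condition $\min_{j\in S}|\tilde\mu_{1,j}-\tilde\mu_{2,j}|\gtrsim\sigma\sqrt{m\log d/\tilde n}$ holds, which is exactly what is assumed. So $\Pr(\hat S=S)\ge 1-O(1/d)$. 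By the shared-support assumption $\supp(\mu_1-\mu_2)=S$, the projection $[\cdot]_{\hat S}$ preserves the entire discriminative signal of the labeled domain.

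For stage (ii), condition on the event $\{\hat S=S\}$. Then the projected labeled data $\{([\vartheta(x_i)]_S,y_i)\}$ follows an $m$-dimensional Gaussian generative model with mean separation $\|[\mu_1-\mu_2]_S\|=\|\mu_1-\mu_2\|\asymp\sqrt d$ (or, after using the identity $\alpha\sqrt d$ assumption, the same separation order), while the sub-Gaussian norm on the reduced coordinates is still bounded by $\sigma_{\max}$. In particular, the signal-to-noise ratio on the reduced problem is at least as good as in the full $d$-dimensional problem, and the relevant ambient dimension is now $m$. Running the supervised estimator $\hat w_{\text{sparse}},\hat b_{\text{sparse}}$ is then exactly the supervised algorithm of Section \ref{subsec:sg} in dimension $m$, so the $\ell_\infty$-robust-error analysis underlying Theorem \ref{thm:robustaccuracy} (applied with $d\leftarrow m$) yields robust error at most $10^{-3}$ as soon as $n\gtrsim\varepsilon^2\log d\sqrt m$. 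The extra $\log d$ (rather than $\log m$) comes from needing a union bound over the original $d$ coordinates to control $\|\vartheta(x)-\vartheta(x')\|_\infty$ perturbations.

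Finally, combine the two stages. On the good event $\{\hat S=S\}$, the conditional robust error is at most $10^{-3}+O_{\bP}(1/n)$ by the concentration of $\hat w_{\text{sparse}}$ and $\hat b_{\text{sparse}}$ around their means (the standard Hoeffding/sub-Gaussian deviation bound used in the proof of Theorem \ref{thm:standardaccuracy}). The failure event $\{\hat S\neq S\}$ has probability $O(1/d)$ and contributes at most $O_{\bP}(1/d)$ to the unconditional robust error, giving the stated $10^{-3}+O_{\bP}(1/n+1/d)$ bound. The main obstacle I expect is stage (i): verifying that the assumptions of the \cite{cai2019chime} support-recovery theorem are met by our sub-Gaussian / Gaussian mixture setup (in particular the initialization and signal-to-noise requirements), and calibrating constants so that the minimum-signal condition in our Theorem matches theirs; the remaining steps are direct translations of earlier results.
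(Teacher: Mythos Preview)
Your proposal is correct and follows essentially the same two-stage approach as the paper: first invoke the high-dimensional EM result of \cite{cai2019chime} (the paper phrases this as an $\ell_2$ bound on $\hat\bbeta^{(T_0)}-(\tilde\mu_1-\tilde\mu_2)$, then passes to $\ell_\infty$ and uses the minimum-signal condition to get exact support recovery), and then project the labeled data onto $\hat S$ to reduce to an $m$-dimensional problem where the earlier supervised/robust analysis applies. Your identification of the main obstacle---checking that the CHIME hypotheses are met---is also exactly where the paper leans on \cite{cai2019chime} and \cite{hardt2015tight} for initialization.
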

Comparing to the result in Theorem~\ref{thm:lowerbound}, which shows that the sample complexity $O(\varepsilon^2\sqrt{d}/(\log d))$ is necessary to achieve small robust error, the above theorem suggests that by utilizing the shared structural information from the unlabeled domain, one can reduce the sample complexity from $O(\sqrt{d})$ to $O(\sqrt m)$. Corresponding simulation results are put in the Appendix.



\section{Experiments}\label{sec:exp}
\begin{figure*}
    \centering
    \includegraphics[width=6.5in]{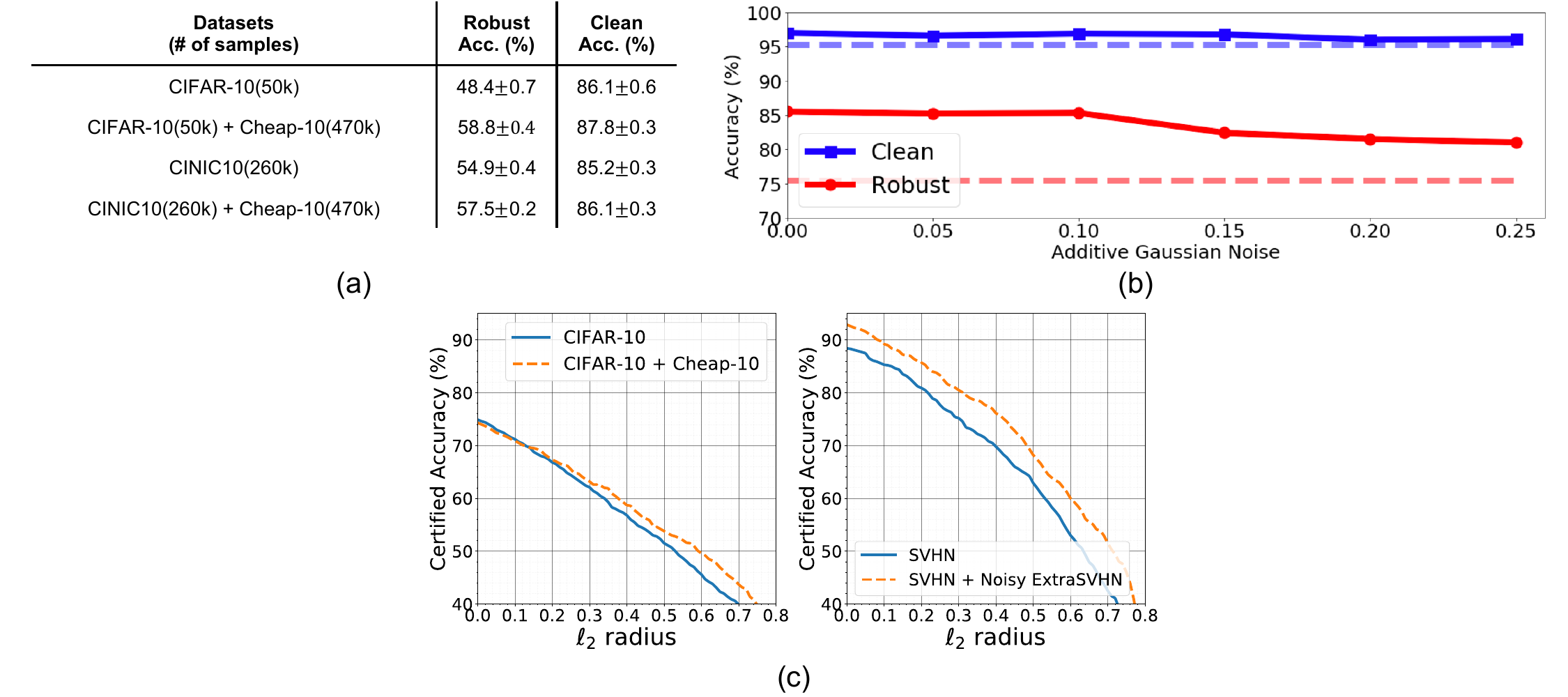} 
    \caption{(a) \textbf{$\ell_\infty$ robustness.}  Each row shows the  test accuracy on clean and adversarially perturbed images ($\epsilon = 8/255$) when the original datasets are used versus when there is additional unlabeled source of data (Cheap-10). The robustness performance when we use the out-of-domain dataset is  significantly better than the original training set, in agreement with our theory. (b) \textbf{ SVHN dataset.} Dashed lines stand for the baseline performance of using only the labeled data. Each point on the x-axis shows a different model that is robustly trained using the original data and the unlabeled set of images with additive Gaussian noise with std ($\sigma$) equal to the axis value. The dashed lines indicate the clean and robust accuracy achieved using only the SVHN data. Adding noisy data improved robust accuracy. (c) \textbf{$\ell_2$ certified robustness.}
    Each point in the plot shows the percentage of test images that are certified to be classified correctly at that $\ell_2$ radius. Adding out-of-domain datasets consistently improves the certified raidii.
    \label{fig:results}  \vspace{-0.4cm} }
\end{figure*}
In this section, we provide empirical support for our theory and show that using unlabeled data from shifted domains can consistently improve robust accuracy for three widely-used benchmark datasets: CIFAR-10~\citep{cifar10}, CINIC-10~\citep{cinic10} and SVHN~\citep{svhn}.

\paragraph{Datasets.} The CIFAR-10 dataset has a training set of $50k$ images and test set of $10k$ images. The CINIC-10 dataset is a subset of ImageNet~\cite{russakovsky2015imagenet} of objects that are similar to CIFAR-10 objects; it has $260k$ images\footnote{After removing CIFAR-10 test images that are in CINIC-10}. As our source of unlabeled data, we use the Cheap-10 dataset that we created to be a benchmark for using very cheap unlabeled out-of-domain data (avaiable at \url{ https://tinyurl.com/mere5j0x}). We created Cheap-10 by searching keywords related to CIFAR-10 objects on the Bing image search engine\footnote{\url{https://www.bing.com/images/}}. A more detailed pipeline for creating Cheap-10 is described the Appendix. The important thing to note about Cheap-10 is that it is very fast to generate (hours) and can be quite noisy due to the lack of expert curation. Therefore it is a good illustration of the power of cheap, out-of-domain data.

A model trained on original CIFAR-10 data has a $68\%$ accuracy on predicting Cheap-10 labels. The number is $75\%$ for a model trained on CINIC-10 data. Both results mean that Cheap-10 is a related out of domain datasets with respect to both Cheap-10 and CIFAR-10.
The SVHN dataset had $73k$ training and $26k$ test images. For SVHN task, the original dataset contains an extra $513k$ set of training images. We use this extra images as our source of unlabeled data and synthetically push the data out of domain by adding random Gaussian noise to it.

\paragraph{Methods.}For each task, we first train a classification model on the original labeled data using cross-entropy loss function. We then use the trained model to assign pseudo-labels to unlabeled images. We next aggregate the two datasets to train a robust model using robust training. Following~\cite{carmon2019unlabeled}, we sample half of each batch from the original data and the other half from the additional pseudo-labeled data during training. 
We use the robustness regularization loss introduced in ~\cite{zhang2019theoretically}. For a maximum allowed $\ell_p$-norm perturbation of size $\epsilon$, we use the training loss function:
\begin{align*}
 \mathscr{L}(x, y; \theta) &= -\log p_\theta(y|x) \\
 &+ \beta \max_{\hat{x} \in \bB_p(x,\varepsilon)} D_{KL}(p_\theta(y|x)||p_\theta(\hat{y}|\hat{x}))   
\end{align*}
where the regularization parameter $\beta$ balances the loss between accurate classification and stability within the $\varepsilon$ $\ell_p$-norm ball. We approximate the  maximization in the second term as follows:
\begin{itemize}
    \item Similar to ~\cite{madry2017towards}, for $\ell_\infty$ perturbations, we focus on empirical robustness of the models and use an inner loop of projected gradient descent for the maximization.
    \item Following~\cite{carmon2019unlabeled}, for $\ell_2$ perturbations, we focus on certified robustness and use the idea of stability training~\cite{zheng2016improving,li2018second}. We replace the maximization with large additive noise draws: $\mathbb{E}_{\hat{x}\sim\mathscr{N(0, \sigma)}}D_{KL}(p_\theta(y|x)||p_\theta(\hat{y}|\hat{x})) $. The idea is to have a model that is robust to large random perturbations. Using Cohen et al.'s method~\cite{cohen2019certified}, in test time, we can find a safe radius of certified robust prediction for each sample.
\end{itemize}

As our first experiment, we focused on empirical robustness against $\ell_\infty$ perturbations. We used a Wide ResNet 28-10 architecture~\cite{zagoruyko2016wide}.  Following the literature, for $\ell_{\infty}$ perturbation, we set $\varepsilon = 8/255$. Results for empirical robustness against $l_\infty$ perturbations are shown in Fig.~\ref{fig:results}(a). The clean accuracy is the model's performance on non-perturbed images. The robust accuracy is the model's performance on adversarially perturbed images. We use the strongest known adversarial attack methods, iterative projected gradient descent (PGD), to create the $l_\infty$ perturbations. We fine-tuned the attack hyperparameters and found that using $40$ iterations results in the smallest robust accuracy. More details are in the Appendix. We find that using Cheap-10 consistently improves the robust accuracy.
Note that, for CIFAR-10, while Cheap 10 was created in a few hours, it produced significantly better robust accuracy ($58.8 \pm 0.4 \%$) compared to using only the original data, and similarly for CINIC-10. This strategy of  using cheap noisy data to improve robustness compares favorably to state-of-the-art existing defenses applied to CIFAR-10: TRADES method ($55.4\%$, ~\cite{zhang2019theoretically}) and Adversarial Pretraining ($57.4\%$, ~\cite{hendrycks2019using}).

As our second experiment,  we use the SVHN dataset and a Wide ResNet 16-8 as our model architecture. SVHN  has a training set of $73k$ real digit images and an extended set of $531k$ images that come with the dataset. The extra set is a synthetically generated set of digits that to mimic the original dataset closely. We use the extended set as our source of unlabeled data. The model we trained (normal training) on the original training set has an accuracy of $96.8\%$ on SVHN test set and an accuracy of $98.4\%$ on the extra training set; this means that the extra data is very similar to the original SVHN dataset. To push the unlabeled data out-of-domain, we add four different levels of additive Gaussian noise to the images. We focus on $\ell_\infty$ perturbations with $\epsilon=4/255$. Fig.~\ref{fig:results}bc) describes the results. The dashed lines are the baselines for not having any additional unlabeled data. They show clean and robust accuracies when only the original training set is used. It can be observed that adding the unlabeled data robustly improves robust accuracy. As the unlabeled data distribution gets more distant from SVHN data, the improvement achieved from adding the extra set of unlabeled images becomes smaller.

As our final experiment, we focus on certified robustness. For stability training, we used $\sigma=0.25$. Fig.~\ref{fig:results}(c) shows the percentage of images that are certified to be classified correctly at each $\ell_2$ radius. First, use the CIFAR-10 dataset as the labeled data and the Cheap-10 data set as the unlabeled data. Secondly, we use the original SVHN tranining set as the labeled data and the extra set of SVHN images with additive Gaussian noise ($std=0.15$) as the unlabeled data source.
It demonstrates that adding cheap out-of-domain data consistently improves certified robustness compared to only using the original training set.  More implementation details are described in the Appendix.  



\section{Further Discussions}\label{sec:conclusion}
Incorporating cheap unlabeled data is a popular way to improve the prediction performance in machine learning. In this work, we show that this substantially improves adversarial robustness, even when the unlabeled data come from a different domain. 

We prove our theoretical results for Gaussian generative models, which are very flexible (e.g. it includes common deep generative models such as GANs and VAEs). Moreover our theory is supported by our experiments using a new  dataset Cheap-10. This suggests that the vast amount of noisy out-of-domain data is a relatively untapped resource that could substantially improve the reliability of many machine learning tasks.   


In this work, we showed that, in general, the adversarial robustness of a semi-supervised algorithm will be improved when the out-of-domain distribution is similar to the labeled data, and the robustness will be hurt if the out-of-domain distribution is too different. One possible extension of our work is to use the aggregation idea in \cite{li2020transfer} to deal with the challenging setting where the similarity between the out-of-domain distribution and labeled data distribution is unknown a priori.
 Such an extension will make the results applied to more general settings. 
 Further, our theoretical results and analysis also lay the foundation of studying the adversarial robustness of other tasks, such as multi-class classification and linear/kernel regression in the semi-supervised setting when the unlabeled data come from a different domain.  

The focus of this work is on the effects of out-of-domain unlabeled data, and we use the popular and simple pseudo-labeling method to capture the key insights. An interesting direction of future work is to investigate how to improve robustness with other  semi-supervised learning methods. For example, one could apply several iteration of pseudo-labeling to improve label quality. 



\bibliography{DA_cite}
\bibliographystyle{iclr2021_conference}
\onecolumn
\newpage
\appendix
\noindent\textbf{\Large Appendix}

\section{Omitted Proofs}
\subsection{Notation}
We begin with notations. For a random variable $X$, its sub-gaussian norm/Orlicz norm is defined as $\|X\|_{\psi_2}=\inf\{t>0:\E[e^{{X^2}/{t^2}}]\le 1 \}$. For a $d$-dimensional random vector $Y$, the sub-gaussian norm of $Y$ is defined as $\|Y\|_{\psi_2}=\sup_{v\in S^{d-1}}\|\langle Y,v\rangle\|$, where $S^{d-1}$ denotes the sphere of a unit ball in $\R^d$.
For two sequences of positive numbers $a_n$ and $b_n$, $a_n \lesssim b_n$ means that  for some constant $c>0$, $a_n \leq c b_n$ for all $n$, and $a_n \asymp b_n $ if $a_n \lesssim b_n$ and $b_n \lesssim a_n$. Further, we use the notion $o_p$ and $O_p$, where for a sequence of random variables $X_n$, $X_n=o_p(a_n)$ means $X_n/a_n\to0$ in probability, $X_n=O_p(b_n)$ means that for any $\varepsilon>0$, there is a constant $K$, such that $\mathbb P(|X_n|\le K\cdot b_n)\ge 1-\varepsilon$, and $X_n=\Omega_p(b_n)$ means that for any $\varepsilon>0$, there is a constant $K$, such that $\mathbb P(|X_n|\ge K\cdot b_n)\ge 1-\varepsilon$. Finally, we use $c_0, c_1, c_2, C_1, C_2, \ldots$ to denote generic positive constants that may vary from place to place. 

Besides, let $L=L_1L_2$, then $\phi(\cdot)$ is $L_1L_2$-Lipchitz in $\ell_2$-norm.

\subsection{Proof of Theorem \ref{thm:standardaccuracy}}

We firstly consider to prove a bound for
$$\bP\Big(\big\|\frac{1}{n}\sum_{i=1}^n\phi(z_i)-\bE\phi(z)\big\|\geq t \Big),$$
where $z_i\sim \cN(\mu, \sigma^2I)$. 
\begin{lemma}\label{lemma:handy}
There exists a universal constant $c$, such that
$$\bP\Big(\big\|\frac{1}{n}\sum_{i=1}^n\phi(z_i)-\bE\phi(z)\big\|\geq c\sigma \big(\frac{\sqrt{d}\tilde{L}}{\sqrt{n/2}}+L\sqrt{\frac{2\log(2/\delta)}{n}}\big) \Big)\leq \delta.$$
\end{lemma}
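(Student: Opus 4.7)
The plan is to prove the lemma by combining Gaussian concentration of measure with a moment bound on the expectation, a standard two-step recipe for vector-valued empirical averages. Set $F(z_1,\ldots,z_n) = \bigl\|\tfrac{1}{n}\sum_{i=1}^n \phi(z_i) - \E\phi(z)\bigr\|$ and view $F$ as a function on $\R^{n s_1}$ evaluated at the concatenated Gaussian vector $\mathbf{z} = (z_1,\ldots,z_n)\sim \cN\bigl((\mu,\ldots,\mu),\sigma^2 I_{ns_1}\bigr)$. I would then control $F$ by controlling its fluctuation around $\E F$ and separately controlling $\E F$.

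First, I would establish that $F$ is Lipschitz on $\R^{ns_1}$ with constant $L/\sqrt{n}$, where $L = L_1 L_2$. Indeed, $\phi = \vartheta \circ \rho$ is $L$-Lipschitz in $\ell_2$ by assumption, so perturbing the $i$-th coordinate $z_i$ by $\delta_i$ changes $F$ by at most $\tfrac{L}{n}\|\delta_i\|$ (triangle inequality applied inside the norm). Summing these coordinate-wise Lipschitz constants in the Euclidean sense gives the $L/\sqrt{n}$ bound. Applying the Gaussian concentration inequality (Borell--Tsirelson--Ibragimov--Sudakov) to the standardized Gaussian vector $\mathbf{z}/\sigma$ then yields
\[
\pp\bigl(F \geq \E F + t\bigr) \;\leq\; \exp\!\Bigl(-\frac{n\, t^2}{2 \sigma^2 L^2}\Bigr),
\]
so choosing $t = \sigma L \sqrt{2 \log(2/\delta)/n}$ gives a tail of $\delta/2$, which accounts for the second summand in the stated bound.

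Second, I would bound $\E F$ via the $L^2$-moment. By Jensen's inequality, $\E F \leq \sqrt{\E F^2}$, and by independence of the $z_i$,
\[
\E F^2 \;=\; \frac{1}{n}\,\mathrm{tr}\!\bigl(\mathrm{Cov}(\phi(z))\bigr).
\]
For each coordinate $j$, the function $z\mapsto \phi_j(z)$ is $L$-Lipschitz (coordinate projections are $1$-Lipschitz), so Gaussian concentration for Lipschitz functions makes $\phi_j(z)$ sub-Gaussian with parameter $L\sigma$, which implies $\mathrm{Var}(\phi_j(z)) \lesssim L^2\sigma^2$ (the Gaussian Poincaré inequality gives the sharp constant, with a standard smoothing argument to handle non-smooth Lipschitz functions). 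Summing over the $d$ coordinates gives $\mathrm{tr}(\mathrm{Cov}(\phi(z))) \lesssim d L^2 \sigma^2$ and hence $\E F \lesssim \sigma L \sqrt{d/n}$, matching the first summand up to a constant (absorbable into $c$, with the factor $\sqrt{n/2}$ versus $\sqrt{n}$ hidden in the universal constant and the harmless introduction of $\tilde L$ as a possibly tighter per-coordinate Lipschitz constant).

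Combining the two bounds via a union-bound-free argument (just add $\E F$ to the concentration deviation $t$) yields the claim. The main technical subtlety, not a deep one, is justifying the Gaussian Poincaré step for $\phi_j$ when $\vartheta$ and $\rho$ are only assumed Lipschitz (not $C^1$); the standard fix is to mollify $\phi_j$ by convolving with a small Gaussian, apply Poincaré to the mollification, and pass to the limit using Lipschitz continuity. Everything else is routine concentration of measure.
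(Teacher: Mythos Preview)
Your proof is correct but takes a different route from the paper. The paper writes the norm as a supremum $\sup_{\|u\|=1}\langle\frac{1}{n}\sum_i\phi(z_i)-\E\phi(z),u\rangle$, verifies that the resulting process indexed by $u$ is sub-Gaussian with increment scale $\sigma L/\sqrt{n/2}$, and then applies Dudley's entropy integral over the unit ball in $\R^d$ to obtain the $\sqrt{d}$ factor; the tail around the expected supremum is handled by the generic chaining tail bound. You instead work directly with $F$ as an $L/\sqrt{n}$-Lipschitz function of the full Gaussian input, invoke Borell--TIS for the deviation term, and bound $\E F$ by the second moment together with the Gaussian Poincar\'e inequality applied coordinatewise, so that the $\sqrt{d}$ arises from summing $d$ coordinate variances rather than from metric entropy of the sphere. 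Your argument is more elementary and sidesteps chaining entirely, at the cost of being specific to the Gaussian input (the paper's chaining would transfer to any sub-Gaussian seed); the resulting constants are of the same order, and since the paper defines $\tilde L=\text{const}\cdot L$, the discrepancy between $L$ and $\tilde L$ in the first term is immaterial.
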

From the above inequality, we can immediately obtain
$$\bP\Big(\big\|\frac{1}{n}\sum_{i=1}^n\phi(z_i)\big\|\geq\big\|\bE\phi(z)\big\|+ c\sigma \big(\frac{\sqrt{d}\tilde{L}}{\sqrt{n/2}}+L\sqrt{\frac{2\log(2/\delta)}{n}}\big) \Big)\leq \delta.$$
\begin{remark}
Note that the concentration bound still holds for $y\phi(yz)$ and $\phi(yz)$ by simply applying conditional probability.
\end{remark}
\begin{proof}
Let $\vartheta_u(z)=\langle\phi(z)-\bE\phi(z),u\big\rangle$
\begin{align*}
\bP\Big(\big\|\frac{1}{n}\sum_{i=1}^n\phi(z_i)-\bE\phi(z)\big\|\geq t \Big)&=\bP\Big(\sup_{\|u\|=1}\big\langle\frac{1}{n}\sum_{i=1}^n\phi(z_i)-\bE\phi(z),u\big\rangle\geq t \Big)\\
&=\bP\Big(\sup_{\|u\|=1}\frac{1}{n}\sum_{i=1}^n\vartheta_u(z_i)\geq t\Big)\\
&\leq \bP\Big(\sup_{u,u' \in \bB(0,1)}\big|\frac{1}{n}\sum_{i=1}^n\vartheta_u(z_i)-\sum_{i=1}^n\vartheta_{u'}(z_i)\big|\geq t\Big)
\end{align*}
Let us use chaining and Orlicz-processes to obtain a bound. We prove $\{\frac{1}{n}\sum_{i=1}^n\vartheta_u(z_i), u\in \bB(0,1)\}$ is a $\vartheta_2$-process with respect to a rescaled distance $\|\cdot \|/\lambda$ for some $\lambda>0$. If so, we will have 
$$\bE \exp\Big(\frac{\lambda^2|\frac{1}{n}\sum_{i=1}^n\vartheta_u(z_i)-\frac{1}{n}\sum_{i=1}^n\vartheta_{u'}(z_i)|^2}{\|u-u'\|^2}\Big)\leq 2. $$

The LHS 
\begin{align*}
\bE \exp\Big(\frac{\lambda^2|\frac{1}{n}\sum_{i=1}^n\vartheta_u(z_i)-\frac{1}{n}\sum_{i=1}^n\vartheta_{u'}(z_i)|^2}{\|u-u'\|^2}\Big)&\leq  \int_{0}^\infty e^t\bP\Big(\frac{|\frac{1}{n}\sum_{i=1}^n\vartheta_u(z_i)-\frac{1}{n}\sum_{i=1}^n\vartheta_{u'}(z_i)|}{\|u-u'\|}\geq \frac{\sqrt{t}}{\lambda}\Big)dt\\
&\leq \int_{0}^\infty e^t 2\exp\big(-\frac{tn}{\lambda^2\sigma^2L^2}\big)dt.
\end{align*}
As long as 
$$\frac{tn}{\lambda^2\sigma^2L^2}\geq 2,~~\text{i.e.}~~\lambda\leq \frac{\sqrt{n/2}}{\sigma L},$$
we would obtain the Dudley entropy integral as
$$J(D)=\frac{1}{\lambda}\int_{0}^1\sqrt{\log(1+\exp(d\log\frac{1}{\delta}))}d\delta \approx const\cdot \frac{\sqrt{d}}{\lambda}.$$
We let $\lambda = \sqrt{n/2}/(\sigma L)$, it gives us $J(D)=(\sigma\tilde{L})/\sqrt{n/2}$, where $\tilde{L}=const \cdot L$.
\end{proof}

Next, let us consider bounding 
$$\bP\big(\hat{w}^\top (\phi(z)-\hat{b})\leq 0 \big)=\bP\Big(\frac{\hat{w}^\top}{\|\hat{w}\|} (\phi(z)-\hat{b})\leq 0 \Big).\quad \text{(notice $\hat{w}=0$ is of zero probability)}$$
We further denote $\nu_w=\big[\bE\phi(z_i^+) -\bE\phi(z_i^-)\big]/2$, $\nu_b= \big[\bE\phi(z_i^+) +\bE\phi(z_i^-)\big]/2$ and 
$$\gamma =  c\sigma \big(\frac{\sqrt{d}\tilde{L}}{\sqrt{n/2}}+L\sqrt{\frac{2\log(2/\delta)}{n}} \big).$$
From Lemma \ref{lemma:handy}, we can obtain
\begin{align*}
&\bP\left(\big|\|\hat{w}-\nu_w\|\big|\geq \gamma\right) \leq \delta,\\
&\bP\left(\|\hat{b}-\nu_b\|\geq \gamma \right) \leq \delta.
\end{align*}
Notice that for any unit vector $v$, $v^\top (\phi(z)-\hat{b})$ is a $\sigma L$-Lipschitz function of $(z^\top, z_1^\top,\cdots,z_n^\top)^\top\sim \cN(0,I_{(n+1)m})$, by standard concentration, we have the following lemma.
\begin{lemma}
For any $t>0$ and unit vector $v$
$$\bP\Big(|v^\top (\phi(z)-\hat{b})-v^\top \nu_w|\geq t\Big)\leq 2\exp(-\frac{t^2}{2\sigma^2 L^2}).$$
\end{lemma}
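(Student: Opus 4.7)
The plan is to express $v^\top(\phi(z) - \hat b)$ as a Lipschitz function of a standard Gaussian vector and then invoke the classical Gaussian concentration inequality for Lipschitz functions. Because $z$ and the samples $z_{n+1},\ldots,z_{2n}$ entering $\hat b$ are independent $\cN(\mu,\sigma^2 I_{s_1})$ vectors, I would first reparameterize them as $z=\mu+\sigma g$ and $z_i=\mu+\sigma g_i$ with $g,g_{n+1},\ldots,g_{2n}\stackrel{i.i.d.}{\sim}\cN(0,I_{s_1})$, and condition on the labels $y_{n+1},\ldots,y_{2n}$ used to form $\hat b$. After conditioning, the map
\[
F(g,g_{n+1},\ldots,g_{2n}) \;=\; v^\top\phi(\mu+\sigma g) \;-\; \frac{1}{n}\sum_{i=n+1}^{2n} v^\top\phi\bigl(y_i(\mu+\sigma g_i)\bigr)
\]
is a deterministic function on $\R^{(n+1)s_1}$ whose input has a standard Gaussian distribution.

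Next I would bound the Lipschitz constant of $F$ in the Euclidean norm. Since $\phi$ is $L$-Lipschitz in $\ell_2$ and $\|v\|=1$, the partial gradient of $F$ with respect to $g$ has norm at most $\sigma L$, while each partial gradient with respect to $g_i$ has norm at most $\sigma L/n$. Summing the squared block-norms gives a total Lipschitz constant of $\sigma L\sqrt{1+1/n}$, which is of order $\sigma L$. Applying the standard Gaussian Lipschitz concentration inequality (Borell--TIS) then yields $\pp(|F - \ee F|\geq t)\leq 2\exp(-t^2/(2\sigma^2 L^2))$ after absorbing the benign $\sqrt{1+1/n}$ factor; alternatively, one can treat the two independent pieces $v^\top\phi(z)$ and $v^\top\hat b$ separately, using that the latter has Lipschitz constant $\sigma L/\sqrt{n}$, and combine them via sub-Gaussian addition.

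Finally, the centering is identified by independence of the test point and training samples: $\ee F = v^\top\ee\phi(z) - v^\top\ee\hat b = v^\top\mu_1 - v^\top(\mu_1+\mu_2)/2 = v^\top(\mu_1-\mu_2)/2 = v^\top\nu_w$, exactly the quantity appearing on the left-hand side of the lemma. Since the bound does not depend on the realization of $y_{n+1},\ldots,y_{2n}$, unconditioning gives the stated inequality. The only real subtlety I anticipate is the discreteness of the labels, which obstructs a one-shot Gaussian argument; the remedy is the conditioning step above, and it works precisely because the $\ell_2$-Lipschitz constant of $u\mapsto\phi(y_i u)$ equals $L$ regardless of whether $y_i=+1$ or $-1$, so the Lipschitz bound on $F$ is uniform across label realizations.
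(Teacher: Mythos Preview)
Your proposal is correct and follows essentially the same approach as the paper: the paper's entire argument is the one-line observation that $v^\top(\phi(z)-\hat b)$ is a $\sigma L$-Lipschitz function of the stacked standard Gaussian $(z^\top,z_{n+1}^\top,\ldots,z_{2n}^\top)^\top$, followed by an appeal to Gaussian Lipschitz concentration. You have simply filled in the details the paper omits---the explicit reparameterization, the conditioning on the discrete labels $y_i$, the honest Lipschitz constant $\sigma L\sqrt{1+1/n}$ (which the paper rounds to $\sigma L$), and the identification of the mean with $v^\top\nu_w$---so there is nothing to correct.
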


Next, we provide a bound for $\langle\hat{w} ,\nu_w\rangle$. 
\begin{lemma} 
For any $t>0$
\begin{equation*}
\bP\big(|\langle\hat{w},\nu_w \rangle- \|\nu_w\|^2|\geq t\big)\leq 2\exp(\frac{-nt^2}{2\sigma^2L^2\|\nu_w\|^2})
\end{equation*}
Taking $\delta=2\exp(\frac{-nt^2}{2\sigma^2L^2\|\nu_w\|^2}) $, we have
$$\bP\big(|\langle\hat{w},\nu_w \rangle- \|\nu_w\|^2|\geq \sigma L \|\nu_w\|\sqrt{\frac{2\log(2/\delta)}{n}}\big)\leq \delta$$
\end{lemma}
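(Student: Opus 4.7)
The plan is to view $\langle \hat{w},\nu_w\rangle - \|\nu_w\|^2$ as the centered empirical mean of i.i.d.\ terms that are each Lipschitz functions of a Gaussian seed, and then to apply Gaussian concentration (or equivalently, a standard sub-Gaussian Hoeffding bound). Concretely, write $\hat w = \tfrac{1}{n}\sum_{i=1}^n y_i\vartheta(x_i) = \tfrac{1}{n}\sum_{i=1}^n y_i\phi(y_iz_i)$, so that $\langle\hat w,\nu_w\rangle = \tfrac{1}{n}\sum_{i=1}^n X_i$ with $X_i := \langle y_i\phi(y_iz_i),\nu_w\rangle$. A direct computation using $y_i \sim \mathrm{Bern}(1/2)$ and the definition of $\nu_w$ gives $\E X_i = \langle \nu_w,\nu_w\rangle = \|\nu_w\|^2$, so the target is concentration of the empirical mean of the $X_i$ around $\|\nu_w\|^2$.

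Next I would establish that each $X_i$ is sub-Gaussian with parameter of order $\sigma L\|\nu_w\|$. Conditionally on $y_i=\pm 1$, the map $z\mapsto \langle y_i\phi(y_iz),\nu_w\rangle$ is Lipschitz in $z$ (in $\ell_2$-norm) with Lipschitz constant at most $L\|\nu_w\|$, since $\phi = \vartheta\circ\rho$ is $L$-Lipschitz and Cauchy--Schwarz contributes the factor $\|\nu_w\|$. Because $z_i\sim\cN(\mu,\sigma^2 I_{s_1})$, we can reparametrize $z_i = \mu + \sigma g_i$ with $g_i\sim\cN(0,I_{s_1})$; then $X_i$ becomes, conditionally on $y_i$, a Lipschitz function of a standard Gaussian with Lipschitz constant $\sigma L\|\nu_w\|$. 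By the classical Gaussian concentration inequality for Lipschitz functions, $X_i - \E[X_i \mid y_i]$ is sub-Gaussian with parameter $\sigma L\|\nu_w\|$ conditionally on $y_i$, and since this holds uniformly in $y_i \in \{\pm 1\}$, $X_i - \E X_i$ is (unconditionally) sub-Gaussian with the same parameter up to constants.

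Finally, because $(z_i,y_i)$ are i.i.d., the $X_i$ are i.i.d.\ centered (after subtracting $\|\nu_w\|^2$) sub-Gaussian random variables, and the standard Hoeffding bound for sums of independent sub-Gaussians yields
\[
\bP\!\left(\left|\tfrac{1}{n}\sum_{i=1}^n (X_i-\|\nu_w\|^2)\right|\ge t\right)\le 2\exp\!\left(-\frac{nt^2}{2\sigma^2 L^2\|\nu_w\|^2}\right),
\]
which is exactly the stated bound. Setting the right-hand side equal to $\delta$ and solving for $t$ gives the second form of the inequality. The only mildly delicate point is justifying that the constant in the Gaussian Lipschitz concentration matches the stated denominator $2\sigma^2 L^2\|\nu_w\|^2$ exactly; this is routine since the Gaussian concentration constant is $2$ (not $2e$ or similar), and the conditioning on $y_i$ does not inflate the parameter because the Lipschitz bound holds uniformly in $y_i$. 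No essential obstacle arises; the argument is a standard Lipschitz-of-Gaussian plus Hoeffding computation, mirroring the structure already used in Lemma~\ref{lemma:handy}.
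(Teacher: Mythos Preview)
Your proposal is correct and follows essentially the same route as the paper: rewrite $\langle\hat w,\nu_w\rangle-\|\nu_w\|^2=\langle\hat w-\nu_w,\nu_w\rangle=\frac{1}{n}\sum_{i=1}^n\langle y_i\phi(y_iz_i)-\nu_w,\nu_w\rangle$, observe that each summand is sub-Gaussian with parameter $\sigma L\|\nu_w\|$ (via the $L$-Lipschitzness of $\phi$ and the Gaussian seed $z_i$), and apply the standard sub-Gaussian tail bound for the average. The paper's proof simply asserts the sub-Gaussianity of the summands with constant $\sigma L$ after normalizing by $\|\nu_w\|$, whereas you spell out the Gaussian--Lipschitz justification and the conditioning on $y_i$; otherwise the arguments coincide.
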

\begin{proof}
LHS is equivalent to
$$\bP\big(|\langle\hat{w}-\nu_w,\nu_w \rangle|\geq t\big).$$
Besides,  we have $\langle\hat{w} -\nu_w,\nu_w/\|\nu_w\|\rangle=\frac{1}{n}\sum_{i=1}^n\langle y_i\phi(y_iz_i) -\nu_w,\nu_w/\|\nu_w\|\rangle$ is a sum of sub-gaussian variables with constant $\sigma L$, then by sub-gaussian tail bound we have 
$$\bP\big(|\langle\hat{w}-\nu_w,\nu_w \rangle|\geq t\big)\leq 2\exp(\frac{-nt^2}{2\sigma^2L^2\|\nu_w\|^2}).$$
\end{proof}
\paragraph{[Proof of Theorem \ref{thm:standardaccuracy}] } If we denote $E=A\cup B$, where $A=\Big\{|\langle\hat{w},\nu_w \rangle- \|\nu_w\|^2|\leq \sigma L \|\nu_w\|\sqrt{\frac{2\log(2/\delta_1)}{n}}\Big\}$, $B=\{|\|\hat{w}-\nu_w\|\big|\leq \gamma\}$, then with probability $\bP(E)$, we have for $t>0$
\begin{align*}
\bP\Big(\frac{\hat{w}^\top}{\|\hat{w}\|} (\phi(z)-\hat{b})\leq 0\Big|E \Big)
&\leq  \bP\Big(\frac{\hat{w}^\top\nu_w}{\|\nu_w\|+\gamma}\leq t\Big|E\Big)+2\exp(-\frac{t^2}{2\sigma^2 L^2}).
\end{align*}
As long as we choose $\gamma$ and $t$ such that 
$$\frac{\hat{w}^\top\nu_w}{\|\nu_w\|+\gamma}> t$$
we have
$$\bP\Big(\frac{\hat{w}^\top}{\|\hat{w}\|} (\phi(z)-\hat{b})\leq 0\Big|E \Big)\leq 2\exp(-\frac{t^2}{2\sigma^2 L^2}).$$
We take 
$$\delta =2\exp(-(\tilde{L}/L)^2d),~~\delta_1=2\exp(-\frac{d}{8\sigma^2L^2})$$
then 
$$\gamma=2\sqrt{2}c\sigma\tilde{L}\sqrt{\frac{d}{n}}.$$
As a result, we obtain
$$|\langle\hat{w},\nu_w \rangle- \|\nu_w\|^2|\leq \frac{d}{2\sqrt{n}},~~ |\|\hat{w}-\nu_w\|\big|\leq \gamma$$
with probability at least $1-\delta_1-\delta$.

We can choose 
$$t=\frac{\sqrt{d}(\sqrt{n}-1/2)}{\sqrt{n}+2\sqrt{2}c\sigma\tilde{L}},~~$$
so that
$$\bP\Big(\frac{\hat{w}^\top}{\|\hat{w}\|} (\phi(z)-\hat{b})\leq 0 \Big|E\Big)\leq 2\exp\left(\frac{-d(\sqrt{n}-1/2)^2}{2\sigma^2L^2(\sqrt{n}+2\sqrt{2}c\sigma\tilde{L})^2}\right).$$
Thusly, 
$$\bP\left(\beta(\hat{w},\hat{b})\geq 2\exp\left(\frac{-d(\sqrt{n}-1/2)^2}{2\sigma^2L^2(\sqrt{n}+2\sqrt{2}c\sigma\tilde{L})^2}\right)\right)\leq \bP(E^c),$$
which gives us the final result stated in the theorem.


\subsection{Proof of Theorem 2}
Since we know $\vartheta$ is $L'_2$-Lipchitz continuous in $\ell_\infty$-norm. Then, we know $$\bP_{(x,y)\sim \cP} [\exists u \in \bB_p(x,\varepsilon):\big\langle \hat{w}, y\cdot(\vartheta(u)-\hat{b})\big\rangle\leq 0]\geq \bP_{(zy,y)\sim \cP'} [\exists u \in \bB_p(yz,\varepsilon/L'_2):\big\langle \hat{w}, y\cdot(\phi(u)-\hat{b})\big\rangle\leq 0]$$
since the pre-image of  $bB_p(x,\varepsilon)$ via $\vartheta$ includes the set $\bB_p(yz,\varepsilon/L'_2)$. Then following the argument in \cite{schmidt2018adversarially}, the result follows.

\begin{remark}
As a side interest, we also provide an analysis to show the lower bound result in Theorem 3.2 is achievable up to a logarithm factor, by purely using labeled data.  This scale matches the result in \cite{schmidt2018adversarially}, but under a more general model considered in our paper.
\begin{align*}
\bP_{(x,y)\sim \cP} [\exists u \in \bB_p(x,\varepsilon): f_{\hat{w},\hat{b}}(u)\neq y]&=\bP_{(x,y)\sim \cP} [\exists u \in \bB_p(x,\varepsilon):\big\langle \hat{w}, y\cdot(\vartheta(u)-\hat{b})\big\rangle\leq 0]\\
&=\bP_{(x,y)\sim \cP} [\exists \eta \in \bB_p(0,\varepsilon):\big\langle \hat{w}, y\cdot(\vartheta(x+\eta)-\hat{b})\big\rangle\leq 0]\\
&\leq\bP_{(x,y)\sim \cP} [\big\langle \hat{w}, (\phi(z)-\hat{b})\big\rangle+\min_{\eta \in \bB_p(0,\varepsilon)}\langle\eta L_1,\hat{w}\rangle\leq 0]\\
&=\bP_{(x,y)\sim \cP} [\big\langle \hat{w}, (\phi(z)-\hat{b})\big\rangle\leq \varepsilon L_1\|\hat{w}\|_q]
\end{align*}
where $1/p+1/q=1$.

When $p=\infty$, $q=1$, it leads to $\|\hat{w}\|_1 /\|\hat{w}\|\leq \sqrt{d}$. Recall in Theorem \ref{thm:standardaccuracy}, $E=A\cup B$, where $A=\Big\{|\langle\hat{w},\nu_w \rangle- \|\nu_w\|^2|\leq \sigma L \|\nu_w\|\sqrt{\frac{2\log(2/\delta_1)}{n}}\Big\}$, $B=\{|\|\hat{w}-\nu_w\|\big|\leq \gamma\}$, then with probability $\bP(E)$, we have
\begin{align*}
\bP\Big(\frac{\hat{w}^\top}{\|\hat{w}\|} (\phi(z)-\hat{b})\leq  \varepsilon L_1\frac{\|\hat{w}\|_q}{\|\hat{w}\|}\Big|E \Big)
&\leq \bP\Big(\frac{\hat{w}^\top\nu_w}{\|\nu_w\|+\gamma}\leq t+ \varepsilon L_1\frac{\|\hat{w}\|_q}{\|\hat{w}\|}\Big|E\Big)+2\exp(-\frac{t^2}{2\sigma^2 L^2})\\
&\leq \bP\Big(\frac{\hat{w}^\top\nu_w}{\|\nu_w\|+\gamma}\leq t+ \varepsilon L_1\sqrt{d}\Big|E\Big)+2\exp(-\frac{t^2}{2\sigma^2 L^2}).
\end{align*}
We still choose 
$$\delta =2\exp(-(\tilde{L}/L)^2d),~~\delta_1=2\exp(-\frac{d}{8\sigma^2L^2})$$
such that
$$\gamma=2\sqrt{2}c\sigma\tilde{L}\sqrt{\frac{d}{n}}.$$
As a result, we obtain
$$|\langle\hat{w},\nu_w \rangle- \|\nu_w\|^2|\leq \frac{d}{2\sqrt{n}},~~ |\|\hat{w}-\nu_w\|\big|\leq 2\sqrt{2}c\sigma\tilde{L}\sqrt{\frac{d}{n}}$$
with probability at least $1-\delta_1-\delta$. We choose $t$ such that
$$\frac{\hat{w}^\top\nu_w}{\|\nu_w\|+\gamma}> t+\varepsilon L_1\sqrt{d}.$$

We let 
$$t=\frac{(\sqrt{d})^2-d/(2\sqrt{n})}{\sqrt{d}/2+2\sqrt{2}c\sigma\tilde{L}\sqrt{\frac{d}{n}}}-\varepsilon L_1\sqrt{d}=\frac{\sqrt{d}(\sqrt{n}-1/2)}{\sqrt{n}/2+2\sqrt{2}c\sigma\tilde{L}}-\varepsilon L_1\sqrt{d}.$$
As long as 
$$\varepsilon\leq (\frac{\sqrt{n}-1/2}{\sqrt{n}/2+2\sqrt{2}c\sigma\tilde{L}}-\frac{\sigma L \sqrt{2\log(1/\beta)}}{\sqrt{d}})/L_1,$$
we have 
$$\beta^{\cR}(\hat{w},\hat{b})\leq 2\exp(-\frac{t^2}{2\sigma^2 L^2})\leq \beta$$

\end{remark}

\subsection{Statistical Measures}
Recall the definition of 
$$d_\nu=\max\Big\{\frac{\|\tilde\mu_1-\mu_1\|}{\|\tilde\mu_1-\tilde\mu_2\|},\frac{\|\tilde\mu_2-\mu_2\|}{\|\tilde\mu_1-\tilde\mu_2\|}\Big\}.$$
We now make connections to commonly used statistical measures and provide a sketch of proof.

(a). \textit{Wasserstein Distance}: the Wasserstein Distance induced by metric $\rho$ between distributions $\cP_1$ and $\cP_2$ over $\bR^d$ is defined as $$W_\rho(\cP_1,\cP_2)=\sup_{\|f\|_{\text{Lip}}\leq 1}[\int fd\cP_1-fd\cP_2],$$
where $\|f\|_{\text{Lip}}\leq 1$ indicates the class of $f:\bR^d\mapsto\bR$ such that for any $x,x'\in \bR^d$, $|f(x)-f(x)|\leq \rho(x,x').$ Let us consider $\rho(x,x')=\|x-x'\|.$ 
\begin{proposition}
Suppose $\max\{W_\rho(\cP_1,\tilde{\cP}_1),W_\rho(\cP_2,\tilde{\cP}_2)\}\leq \tau$, for $ \tau\geq 0$, then we have $\|\mu_i-\tilde{\mu_i}\|\leq\tau,\quad i=1,2.$
As a result, 
$$d_\nu\leq\frac{\tau}{\|\tilde{\mu}_1-\tilde{\mu}_2\|}.$$
If we further have $ \tau\leq \|\mu_1-\mu_2\|/2$, we have $d_\nu\leq\tau/(\|\mu_1-\mu_2\|-2\tau).$
\end{proposition}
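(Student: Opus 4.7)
The plan is to exploit the Kantorovich--Rubinstein dual representation of the $1$-Wasserstein distance under the Euclidean metric $\rho(x,x') = \|x - x'\|$. Concretely, for any unit vector $v \in \R^d$, the linear functional $f_v(x) = \langle v, x\rangle$ is $1$-Lipschitz with respect to $\rho$, so it is admissible in the supremum defining $W_\rho$. Applying this to the pair $(\cP_i, \tilde\cP_i)$ yields
\begin{equation*}
\langle v, \mu_i - \tilde\mu_i\rangle \;=\; \int f_v \, d\cP_i - \int f_v \, d\tilde\cP_i \;\leq\; W_\rho(\cP_i, \tilde\cP_i) \;\leq\; \tau,
\end{equation*}
and the same bound holds for $-v$. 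Taking the supremum over unit vectors gives $\|\mu_i - \tilde\mu_i\| \leq \tau$ for $i = 1,2$, which is the first claim.

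Next I would substitute this mean-distance bound directly into the definition of $d_\nu$. Since both numerators $\|\tilde\mu_1 - \mu_1\|$ and $\|\tilde\mu_2 - \mu_2\|$ are at most $\tau$, we immediately get
\begin{equation*}
d_\nu \;=\; \max\!\Big\{\tfrac{\|\tilde\mu_1 - \mu_1\|}{\|\tilde\mu_1 - \tilde\mu_2\|},\; \tfrac{\|\tilde\mu_2 - \mu_2\|}{\|\tilde\mu_1 - \tilde\mu_2\|}\Big\} \;\leq\; \frac{\tau}{\|\tilde\mu_1 - \tilde\mu_2\|},
\end{equation*}
which proves the second claim.

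For the third claim, the goal is to replace the unknown quantity $\|\tilde\mu_1 - \tilde\mu_2\|$ in the denominator by the known quantity $\|\mu_1 - \mu_2\|$. I would apply the reverse triangle inequality together with the first claim:
\begin{equation*}
\|\tilde\mu_1 - \tilde\mu_2\| \;\geq\; \|\mu_1 - \mu_2\| - \|\mu_1 - \tilde\mu_1\| - \|\mu_2 - \tilde\mu_2\| \;\geq\; \|\mu_1 - \mu_2\| - 2\tau.
\end{equation*}
The hypothesis $\tau \leq \|\mu_1 - \mu_2\|/2$ guarantees that the right-hand side is strictly positive, so we may divide; combining with the second claim yields $d_\nu \leq \tau/(\|\mu_1 - \mu_2\| - 2\tau)$.

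There is no real obstacle here --- the proposition is essentially a two-line calculation once one recognizes that linear functionals are $1$-Lipschitz and thus admissible test functions in the Kantorovich--Rubinstein dual. The only subtlety worth noting is that the argument relies on using $\rho(x,x') = \|x - x'\|$ (Euclidean), as stated in the setup; it would not directly extend, for instance, to $W_\rho$ induced by a bounded metric, where linear functionals need not be $1$-Lipschitz.
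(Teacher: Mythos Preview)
Your proof is correct and follows essentially the same approach as the paper: the paper's proof simply notes that ``$f(x)=x$ satisfies $\|f\|_{\text{Lip}}\le 1$'' to deduce $\|\mu_i-\tilde\mu_i\|\le\tau$, then plugs into the denominator via the triangle inequality. Your version is in fact slightly more careful, since the dual formulation requires scalar-valued test functions and you correctly pass through $f_v(x)=\langle v,x\rangle$ for unit $v$ rather than the vector-valued identity, but the underlying idea is identical.
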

\begin{proof}
Notice $f(x)=x$ also satisfies $\|f\|_{Lip}\leq 1$, then we know  $\|\mu_i-\tilde{\mu_i}\|\leq\tau,\quad i=1,2.$ If we further have $ \tau\leq \|\mu_1-\mu_2\|/2$, plugging into the denominator, the result follows. 
\end{proof}

(b). \textit{Maximal Information}: Maximal Information between distributions $\cP_1$ and $\cP_2$ over $\bR^d$ is defined as 
$$MI(\cP_1,\cP_2)=\sup_{O\subseteq \bR^d}\max\left\{\frac{\bP_{x\sim\cP_1}(x\in O)}{\bP_{x\sim\cP_2}(x\in O)}, \frac{\bP_{x\sim\cP_2}(x\in O)}{\bP_{x\sim\cP_1}(x\in O)}\right\}.$$
\begin{proposition}
Suppose $\max\{MI(\cP_1,\tilde{\cP}_1),MI(\cP_2,\tilde{\cP}_2)\}\leq \tau$
for $1\leq\tau\leq 1+\|\mu_1-\mu_2\|/(2\|\mu_1\|+2\|\mu_2\|)$, then we have $\|\mu_i-\tilde{\mu}_i\|\leq(\tau-1)\|\mu_i\|,~ i=1,2.$
As a result, we have
$$d_\nu\leq\frac{(\tau-1)\max\{ \|\mu_1\|, \|\mu_2\|\}}{\|\mu_1-\mu_2\|-2(\tau-1)(\|\mu_1\|+\|\mu_2\|)}.$$
\end{proposition}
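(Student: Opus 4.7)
The plan is to mirror the Wasserstein proposition closely: first obtain pointwise control on how much $\cP_i$ and $\tilde{\cP}_i$ can differ, next translate that into a bound on the mean gap $\|\mu_i - \tilde{\mu}_i\|$, and finally push this through a triangle inequality to bound $d_\nu$.

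First, I would observe that the MI definition implies a pointwise bound on the Radon--Nikodym derivative. For any measurable set $A$ with arbitrarily small $\tilde{\cP}_i$-measure, the ratio $\cP_i(A)/\tilde{\cP}_i(A)$ lies in $[1/\tau, \tau]$; by a standard differentiation-of-measures argument (essentially, MI is the $L^\infty$ norm of the density ratio), we get $h(x) := d\cP_i/d\tilde{\cP}_i \in [1/\tau, \tau]$ almost everywhere. This plays the role the $1$-Lipschitz test-function condition played in the Wasserstein argument.

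Next, I would establish $\|\mu_i - \tilde{\mu}_i\| \le (\tau - 1)\|\mu_i\|$ by writing
\[
\mu_i - \tilde{\mu}_i = \int x\,(h(x) - 1)\,d\tilde{\cP}_i(x),
\]
pairing with the unit vector $v = (\mu_i - \tilde{\mu}_i)/\|\mu_i - \tilde{\mu}_i\|$, and using that both $(h-1)_+$ and $(h-1)_-$ are bounded by $\tau - 1$. The cleanest route is to split the integrand according to the sign of $v^T x$ and separately handle $\{h > 1\}$ and $\{h < 1\}$, applying the upper and lower density-ratio bounds asymmetrically so that the resulting absolute-value integral converts directly to $\|\mu_i\|$ rather than the cruder $\mathbb{E}_{\tilde{\cP}_i}[\|x\|]$. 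Once this per-direction bound is in hand for the particular choice of $v$, the claim $\|\mu_i - \tilde{\mu}_i\| \le (\tau-1)\|\mu_i\|$ follows.

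Finally, the $d_\nu$ bound follows by the reverse triangle inequality: $\|\tilde{\mu}_1 - \tilde{\mu}_2\| \ge \|\mu_1 - \mu_2\| - \|\tilde{\mu}_1 - \mu_1\| - \|\tilde{\mu}_2 - \mu_2\|$, combined with the per-component bound from the previous step (the factor of $2$ in the stated denominator arises from a triangle-inequality passage between $\|\mu_i\|$ and $\|\tilde{\mu}_i\|$, analogous to the Wasserstein calculation). The hypothesized range $\tau \le 1 + \|\mu_1 - \mu_2\|/(2(\|\mu_1\| + \|\mu_2\|))$ is exactly what is needed to keep the resulting denominator positive. Dividing the numerator bound $(\tau - 1)\max\{\|\mu_1\|, \|\mu_2\|\}$ by this lower bound then yields the claimed inequality. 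The main obstacle is the careful bookkeeping in the second step: the naive bound gives only $(\tau-1)\mathbb{E}_{\tilde{\cP}_i}[\|x\|]$, and sharpening it to $(\tau-1)\|\mu_i\|$ is where I expect to spend the most effort, via the asymmetric sign-based decomposition described above.
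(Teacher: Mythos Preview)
Your overall architecture---density-ratio control, then a mean-gap bound, then the reverse triangle inequality for $d_\nu$---matches the paper's. The gap is in the second step: the ``asymmetric sign-based decomposition'' does not deliver the sharpening you need. With $v=(\mu_i-\tilde\mu_i)/\|\mu_i-\tilde\mu_i\|$ and $h=d\cP_i/d\tilde\cP_i\in[1/\tau,\tau]$,
\[
\|\mu_i-\tilde\mu_i\|=\int (v^\top x)\,(h(x)-1)\,d\tilde\cP_i(x).
\]
Splitting on the sign of $v^\top x$ and using $-(\tau-1)\le 1/\tau-1\le h-1\le \tau-1$ gives at best
\[
\int_{\{v^\top x>0\}}(v^\top x)(h-1)\,d\tilde\cP_i+\int_{\{v^\top x<0\}}(v^\top x)(h-1)\,d\tilde\cP_i\ \le\ (\tau-1)\int |v^\top x|\,d\tilde\cP_i,
\]
which is precisely the ``crude'' bound you wanted to avoid, not $(\tau-1)\|\mu_i\|$. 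Further splitting on $\{h>1\}$ versus $\{h<1\}$ does not help: the integrand $(v^\top x)(h-1)$ is nonnegative on $\{v^\top x>0,\,h>1\}$ \emph{and} on $\{v^\top x<0,\,h<1\}$, so there is no cancellation to exploit, and you are still left with the absolute first moment rather than the norm of the mean.

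Worse, the inequality $\|\mu_i-\tilde\mu_i\|\le(\tau-1)\|\mu_i\|$ cannot be derived from the MI hypothesis alone: if $\cP_i$ has mean zero (e.g.\ uniform on $[-1,1]$) the right side vanishes, yet one can easily build $\tilde\cP_i$ with density ratio in $[1/\tau,\tau]$ and nonzero mean. So some extra structure on $\cP_i$ is being used tacitly. The paper's own argument is different in form---it passes to coordinate-wise marginals (whose MI is also at most $\tau$) and asserts the one-dimensional bound entrywise---but it is a one-line claim and does not spell out this additional structure either. The upshot is that your proposed mechanism for sharpening from $\mathbb{E}|v^\top x|$ to $\|\mu_i\|$ will not succeed as written; you would need either an extra assumption on $\cP_i$ or to settle for the moment-based bound.
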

As we can see, as $\tau\rightarrow 1$, $d_\nu\rightarrow 0$.
\begin{proof}
Let $X_1\sim \cP_1$, $X_2\sim\cP_2$. By the definition of Maximal Information, 
$$\sup_{x\in\bR^d}\max\left\{\frac{\bP(X_1=x)}{\bP(X_2=x)},\frac{\bP(X_2=x)}{\bP(X_1=x)}\right\}\leq \tau.$$
Then, we know $\|\mu_i-\tilde{\mu}_i\|\leq(\tau-1)\|\mu_i\|,~ i=1,2$ once we notice for all corresponding entries of the vector of $X_1$ and $X_2$, their maximal information is bounded by $\tau.$
So,
$$d_\nu\leq\frac{(\tau-1)\max\{ \|\mu_1\|, \|\mu_2\|\}}{\|\tilde{\mu}_1-\tilde{\mu}_2\|}.$$
If we further have $\tau\leq 1+\|\mu_1-\mu_2\|/(2\|\mu_1\|+2\|\mu_2\|)$, plugging into the denominator, the result follows.
\end{proof}

(c). $\cH$-Divergence: let $\cH$ be a class of binary classifiers, then $\cH$-divergence between distributions $\cP$ and $\cP'$ over $\bR^d$ is defined as 
$$D_\cH(\cP,\cP')=\sup_{h\in \cH}|\bP_{x\sim\cP}(h(x)=1)-\bP_{x\sim\cP'}(h(x)=1)|.$$
To illustrate the connection between Theorem \ref{thm:robustaccuracy} and $\cH$-divergence, we consider a specific hypothesis class
\begin{equation}\label{eq:hypothesis}
\cal H=\big\{h |h(t)=sgn(w^\top(t- b)), (w,b)\in \bR^d\times \bR^d \big\}.
\end{equation}
\begin{proposition}
 Suppose for $X_i\sim\cP_i$ and $\tilde{X}_i\sim\tilde{\cP}_i$ $i=1,2$, the sub-gaussian norm of $\|X_i-\mu_i\|_{\psi_2}$ and $\|\tilde{X}_i-\tilde{\mu}_i\|_{\psi_2}$ are bounded by $\sigma$ and $\tilde{\sigma}$, where $X_i\sim\cP_i$, $\tilde{X}_i\sim\tilde{\cP}_i$ and $\mu_i$, $\tilde{\mu}_i$ are the corresponding means. Let $\alpha=\zeta\sqrt{\log(4/(1-\tau))}$, where $\zeta=\max\{\sigma,\tilde{\sigma}\}$, if $\max\{D_{\cH}(\cP_1,\tilde{\cP}_1),D_{\cH}(\cP_2,\tilde{\cP}_2)\}\leq \tau,$ for $\tau\leq 1$, we have $\|\mu_i-\tilde{\mu}_i\|\leq \alpha$, $i=1,2$. As a result,
 $$d_\nu\leq \frac{\alpha}{\|\tilde{\mu}_1-\tilde{\mu}_2\|}.$$
 If we further have $\tau\leq 1-4\exp(-\|\mu_1-\mu_2\|^2/4\zeta^2)$, then $d_\nu\leq \alpha/(\|\mu_1-\mu_2\|-2\alpha).$
\end{proposition}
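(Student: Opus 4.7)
The plan is to exploit the $\cH$-divergence assumption by constructing, for each $i\in\{1,2\}$, a single halfspace in $\cH$ that maximally discriminates $\cP_i$ from $\tilde{\cP}_i$, and then to translate the resulting probability gap into a mean-separation bound via sub-gaussian concentration. Assuming $\mu_i\ne\tilde{\mu}_i$ (otherwise there is nothing to prove), I set $v=(\mu_i-\tilde{\mu}_i)/\|\mu_i-\tilde{\mu}_i\|$ and pick any $b\in\bR^d$ with $v^\top b=v^\top(\mu_i+\tilde{\mu}_i)/2$, so that $h(t)=\sgn(v^\top(t-b))$ lies in $\cH$ from~(\ref{eq:hypothesis}).

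Next I would apply the one-dimensional sub-gaussian tail bound to the projections $v^\top(X_i-\mu_i)$ and $v^\top(\tilde{X}_i-\tilde{\mu}_i)$, which inherit sub-gaussian norms at most $\sigma$ and $\tilde{\sigma}$ respectively. Writing $r=\|\mu_i-\tilde{\mu}_i\|/2$, this gives
\[
\Pr_{\cP_i}(h(X_i)=-1)\le 2e^{-r^2/\sigma^2},\qquad \Pr_{\tilde{\cP}_i}(h(\tilde{X}_i)=+1)\le 2e^{-r^2/\tilde{\sigma}^2},
\]
and hence, with $\zeta=\max\{\sigma,\tilde\sigma\}$,
\[
\bigl|\Pr_{\cP_i}(h(X_i)=1)-\Pr_{\tilde{\cP}_i}(h(\tilde{X}_i)=1)\bigr|\ge 1-4e^{-r^2/\zeta^2}.
\]
The $\cH$-divergence bound forces the left side to be at most $\tau$, so rearranging yields $r\le\zeta\sqrt{\log(4/(1-\tau))}$, i.e.\ $\|\mu_i-\tilde{\mu}_i\|\le\alpha$ up to an absorbable constant.

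Plugging the mean-deviation bound directly into the definition of $d_\nu$ gives the first claim $d_\nu\le\alpha/\|\tilde{\mu}_1-\tilde{\mu}_2\|$. For the refined form, I would use the reverse triangle inequality to lower-bound the denominator as $\|\tilde{\mu}_1-\tilde{\mu}_2\|\ge\|\mu_1-\mu_2\|-\|\mu_1-\tilde{\mu}_1\|-\|\mu_2-\tilde{\mu}_2\|\ge\|\mu_1-\mu_2\|-2\alpha$; the supplementary hypothesis $\tau\le 1-4\exp(-\|\mu_1-\mu_2\|^2/(4\zeta^2))$ is exactly what is needed to keep this positive and recover the stated bound.

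The main subtlety is constant tracking: with the sub-gaussian tail bound $\Pr(|v^\top(X-\mu)|>t)\le 2e^{-t^2/\sigma^2}$, the argument literally produces $\|\mu_i-\tilde{\mu}_i\|\le 2\zeta\sqrt{\log(4/(1-\tau))}$, differing by a factor of $2$ from the stated $\alpha$; this discrepancy can be absorbed into the constant inside the logarithm, or removed by offsetting $b$ slightly away from the midpoint so that only one of the two tail contributions needs to be bounded tightly. Beyond this bookkeeping, no other obstacles arise, since the argument is essentially a standard two-sample discrimination estimate using a single halfspace test rather than the more delicate concentration machinery needed elsewhere in the paper.
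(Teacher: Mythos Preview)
Your approach is correct and essentially identical to the paper's: both construct the separating hyperplane perpendicular to $\mu_i-\tilde\mu_i$ and argue via sub-gaussian concentration that if the means were too far apart this hyperplane would distinguish $\cP_i$ from $\tilde{\cP}_i$ by more than $\tau$, contradicting the $\cH$-divergence bound. The only cosmetic difference is that the paper phrases the concentration as a bound on $\Pr(\|X_i-\mu_i\|\ge t)$ rather than on the one-dimensional projection as you do (your version is arguably the more careful one given the vector sub-gaussian norm in play), and the paper's own constant bookkeeping exhibits the same factor-of-two slack you identify.
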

\begin{proof}
It follows a simple geometric argument -- a hyperplane cannot distinguish the two distributions too well. 
Recall if $\|X_i-\mu_i\|_{\psi_2}\leq \sigma_i$ and $\|\tilde{X}_i-\tilde{\mu}_i\|_{\psi_2}\leq \tilde{\sigma}_i$, then for $i=1,2$
$$\bP(\|X_i-\mu_i\|\geq t)\leq 2\exp(-\frac{t^2}{\sigma^2}),~~\bP(\|\tilde{X}_i-\tilde{\mu}_i\|\geq t)\leq 2\exp(-\frac{t^2}{\tilde{\sigma}^2})$$ 
Consider $t^*$ such that 
$$2\exp(-\frac{t^{*2}}{\zeta^2})=(1-\tau)/2,~~i.e. ~t^*=\alpha/2.$$
It is easy to see the distance $\|(\mu_1-\mu_2)/2-(\tilde{\mu}_1-\tilde{\mu}_2)/2\|$ should be upper bounded by $2t^*$, otherwise, there exists a hyperplane such that the probability mass of $X_1\sim\cP_1$ and $\tilde{X}_1\sim\tilde{\cP}_1$ has high probability mass on difference side of the hyperplane.

\end{proof}

As we can see in the case for Wasserstein Distance, as $\tau\rightarrow 0$, $d_\nu\rightarrow 0$. However, for $\cH$-Divergence when $\tau\rightarrow 0$, $d_\nu$ will not go to $0$.  That is due to the constraint of capacity of $\cH$. Even if $\tau=0$, $\cP_i$ and $\tilde{\cP}_i$ can still be quite different.

\subsection{Proof of Theorem 3 and Proposition 1}
Let us recall the statement of Theorem 3 with some specified constants.
\begin{theorem}[Robust accuracy]
 Consider the Gaussian generative model, where the marginal distribution of the input $x$ of labeled domain is a uniform mixture of two distributions with mean $\mu_{1}=\E[\phi(z)]$ and  $\mu_{2}=\E[\phi(-z)]$ respectively, where $z\sim\cN(0,\sigma^2I_{s_1})$. Suppose the marginal distribution of the input of unlabeled domain is a mixture of  two sub-gaussian distributions with mean $\tilde\mu_{1}$ and  $\tilde\mu_{2}$ with mixing probabilities $q$ and $1-q$ and $\left\|\E\left[\vartheta(\tilde x_i)-\E[\vartheta(\tilde x_i)]\mid a^T \vartheta(\tilde x_i)=b\right]\right\|\le c_\varepsilon\cdot (\sqrt d + |b|)$ for fixed unit vector $a$.  Assuming the sub-gaussian norm for both labeled and unlabeled data are upper bounded by a universal quantity $\sigma_{\max}:=C_\sigma d^{1/4}$, 
  $c_q<q<1-c_q$, $\|\tilde\mu_1-\tilde\mu_2\|_2=C_\mu \sqrt d$, for some constants $C_\sigma>0$,$0<c_q<1/2$, $C_\mu>0$ sufficiently large, and 
$$
d_\nu=\max\Big\{\frac{\|\tilde\mu_1-\mu_1\|}{\|\tilde\mu_1-\tilde\mu_2\|},\frac{\|\tilde\mu_2-\mu_2\|}{\|\tilde\mu_1-\tilde\mu_2\|}\Big\}<c_0,$$
for some constant $c_0\le 1/4$,
then the robust classification error is at most $1\%$ when $d$ is sufficiently large, $n\ge C$ for some constant C (not depending on $d$ and $\varepsilon$) and
$$\tilde n\gtrsim \varepsilon^2\log d\sqrt d.$$
\end{theorem}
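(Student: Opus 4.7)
The plan is to control the robust error of the semi-supervised classifier $f_{\tilde w,\tilde b}$ in three stages: (i) show that the supervised classifier $\hat g=f_{\hat w,\hat b}$ assigns correct pseudo-labels to the unlabeled shifted-domain data with high probability; (ii) use sub-gaussian concentration on the unlabeled data to show that $\tilde w$ and $\tilde b$ are close to their target population quantities $\frac12(\tilde\mu_1-\tilde\mu_2)$ and $\frac12(\tilde\mu_1+\tilde\mu_2)$; (iii) translate these geometric estimates into a bound on $\beta^{\varepsilon,\infty}(\tilde w,\tilde b)$ via the standard worst-case-perturbation inequality for $\ell_\infty$ adversaries.

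For step (i), the supervised classifier built from only $n\ge C$ samples satisfies $\hat w/\|\hat w\|$ close (up to $o(1)$ angular error) to the direction of $\mu_1-\mu_2$, by a variant of Lemma 1 and the calculation in the proof of Theorem 3.1 (standard error $\le 1\%$ requires only a constant number of labeled samples because $\sigma\lesssim d^{1/4}$). Under the assumption $d_\nu<c_0\le 1/4$ together with $\|\tilde\mu_1-\tilde\mu_2\|\asymp\sqrt d$, the direction $\mu_1-\mu_2$ also separates the two unlabeled clusters by a margin of order $\sqrt d$, i.e., at least $(1-2c_0)\|\tilde\mu_1-\tilde\mu_2\|$. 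Combined with the sub-gaussian tails of $\vartheta(\tilde x)$ around $\tilde\mu_1,\tilde\mu_2$ with norm $\sigma_{\max}\asymp d^{1/4}$, the pseudo-label $\hat g(\tilde x_i)$ agrees with the true cluster membership except on an event of probability $\exp(-c\sqrt d)$, which is negligible even after a union bound over all $\tilde n$ samples.

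For step (ii), condition on the (high-probability) event that all pseudo-labels are correct. Then $\tilde n_1$ of the $\tilde x_i$'s are genuine draws from the $\tilde\mu_1$-component and $\tilde n_2$ from the $\tilde\mu_2$-component, with $\tilde n_1/\tilde n\to q\in(c_q,1-c_q)$ by Hoeffding. Applying a Gaussian-chaining / Dudley-entropy bound (the one already used in Lemma 1, re-stated for sub-gaussian random vectors with norm $\sigma_{\max}$) gives
\begin{equation*}
 \Bigl\|\tilde w-\tfrac12(\tilde\mu_1-\tilde\mu_2)\Bigr\|+\Bigl\|\tilde b-\tfrac12(\tilde\mu_1+\tilde\mu_2)\Bigr\|\;\lesssim\;\sigma_{\max}\sqrt{d/\tilde n}\;\asymp\;d^{3/4}/\sqrt{\tilde n}.
\end{equation*}
The residual contribution from the $\exp(-c\sqrt d)$ fraction of possibly mislabeled points is controlled using the conditional mean assumption $\|\E[\vartheta(\tilde x_i)-\E\vartheta(\tilde x_i)\mid a^\top \vartheta(\tilde x_i)=b]\|\lesssim\sqrt d+|b|$, which prevents the mislabeled tail from producing a bias of order larger than $\sqrt d\cdot\exp(-c\sqrt d)$; this is what makes the argument work for sub-gaussian (rather than Gaussian) data.

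For step (iii), fix a clean sample $(x,y)$ from the labeled domain, so $x=\phi(yz)$. The worst-case $\ell_\infty$ adversary reduces the margin by exactly $\varepsilon\|\tilde w\|_1$, so
\begin{equation*}
\beta^{\varepsilon,\infty}(\tilde w,\tilde b)\;\le\;\Pr\bigl[y\,\tilde w^\top(\phi(yz)-\tilde b)\le \varepsilon\|\tilde w\|_1\bigr].
\end{equation*}
Writing $\tilde w=\tfrac12(\tilde\mu_1-\tilde\mu_2)+\Delta_w$ and $\tilde b=\tfrac12(\tilde\mu_1+\tilde\mu_2)+\Delta_b$ with the step-(ii) bounds on $\Delta_w,\Delta_b$, and using $\mu_i$ to stand for $\E\phi(yz\mid y=\pm1)$, the mean of $y\,\tilde w^\top(\phi(yz)-\tilde b)$ equals $\tfrac14\|\tilde\mu_1-\tilde\mu_2\|^2$ plus cross terms of order $\|\tilde\mu_1-\tilde\mu_2\|\cdot(d_\nu\|\tilde\mu_1-\tilde\mu_2\|+\|\Delta_w\|+\|\Delta_b\|)$. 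Since $d_\nu\le 1/4$ and $\|\tilde\mu_1-\tilde\mu_2\|\asymp\sqrt d$, the signal is of order $d$. The fluctuation is sub-gaussian with parameter $\sigma_{\max}\|\tilde w\|\lesssim d^{1/4}\sqrt d=d^{3/4}$, and the adversarial penalty is $\varepsilon\|\tilde w\|_1\le\varepsilon\sqrt d\,\|\tilde w\|_2\lesssim \varepsilon d$. Choosing $\tilde n\gtrsim\varepsilon^2\sqrt d\log d$ makes $\|\Delta_w\|,\|\Delta_b\|=o(\sqrt d/(\varepsilon\sqrt{\log d}))$ so the signal $\gtrsim d$ dominates the sum of the adversarial penalty and the tail, giving failure probability at most $\exp(-c\log d)\le 1\%$ by a sub-gaussian tail bound.

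The main obstacle I expect is step (iii), specifically getting an honest upper bound on $\varepsilon\|\tilde w\|_1$ that does not waste a $\sqrt d$ factor: one wants $\|\tilde w\|_1/\|\tilde w\|_2\lesssim\sqrt d$ with the correct constant, and one must ensure the signal-to-noise comparison accommodates both the $d_\nu$-induced bias and the sub-gaussian concentration error of $\tilde w,\tilde b$ simultaneously. The pseudo-label error handling in step (ii) is the second subtlety, but the conditional moment assumption is precisely tailored to close that gap.
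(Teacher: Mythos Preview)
There is a genuine gap in step (i). With only $n\ge C$ constant labeled samples and $\sigma_{\max}\asymp d^{1/4}$, the intermediate direction $\hat w$ is \emph{not} aligned with $\mu_1-\mu_2$ up to $o(1)$ angular error. Indeed, by the same Lemma~1 computation you cite, $\|\hat w-\nu_w\|\asymp\sigma_{\max}\sqrt{d/n}\asymp d^{3/4}$, which \emph{dominates} $\|\nu_w\|\asymp\sqrt d$. The reason Theorem~\ref{thm:standardaccuracy} still gives small standard error is that $\langle\hat w,\nu_w\rangle\approx\|\nu_w\|^2\asymp d$ while $\|\hat w\|\asymp d^{3/4}$, so the normalized margin is $\asymp d^{1/4}$; dividing by the noise scale $\sigma_{\max}\asymp d^{1/4}$ leaves an $O(1)$ signal-to-noise ratio. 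Hence the pseudo-label error rate on the unlabeled domain is a small \emph{constant}, not $\exp(-c\sqrt d)$, and conditioning on the event ``all pseudo-labels correct'' is not available.

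This changes the structure of step (ii) fundamentally. The paper does not condition on correct labels; it writes $\tilde w=\gamma\tilde\mu+\tilde\delta$ with $\gamma=\frac{1}{2\tilde n_1}\sum_{\tilde y_i=1}(1-2b_i)+\frac{1}{2\tilde n_2}\sum_{\tilde y_i=-1}(1-2b_i)$ and proves only that $\gamma\ge\tfrac{2}{5}$ with high probability. The residual $\tilde\delta$ is a sum of terms of the form $1(\tilde y_i=1)\eps_i$, which are \emph{not} mean-zero because $\tilde y_i$ depends on $\eps_i$ through $\hat w^\top\eps_i$. This is exactly where the conditional moment assumption enters: it bounds the bias $\|\E[1(\tilde y_i=1)\eps_i]\|\lesssim\sqrt d$, which does not vanish as $\tilde n\to\infty$ and produces the irreducible $O(d)$ term in $\|\tilde\delta\|^2\lesssim d+d\sigma_{\max}^2\log d/\tilde n$. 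Your use of the assumption---to control the contribution of an $\exp(-c\sqrt d)$ mislabeled fraction---misses this mechanism entirely, and a naive sub-gaussian concentration bound on $\tilde\delta$ would fail because of the non-vanishing bias. Step (iii) is essentially the same as the paper's final step, but it inherits the incorrect inputs from (i)--(ii): you need $\tilde w\approx\gamma\tilde\mu$ (not $\tilde w\approx\tilde\mu$), and the ratio $\|\tilde w\|^2/(\tilde\mu^\top\tilde w)^2$ must be analyzed with the correct $\|\tilde\delta\|^2$ bound.
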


Now let us proceed to the proof.

For simplicity of presentation. We first denote the distributions for the two classes of labeled data as ${subGaussian}(\mu_1,\sigma_{\max}^2)$, and ${subGaussian}(\mu_2,\sigma_{\max}^2)$ respectively. Similarly, we also denote the distributions for the two classes of unlabeled data as ${subGaussian}(\bm\tilde\mu_1,\sigma_{\max}^2)$, and ${subGaussian}(\bm\tilde\mu_2,\sigma_{\max}^2)$ respectively. Also, to avoid the visual similarity and emphasize the estimates constructed by the labeled and unlabeled data respectively, we write $\hat w$ as $\hat w_{\text{intermediate}}$, $\hat b$ as $\hat b_{\text{intermediate}}$, $\tilde w$ as $\tilde w_{\text{final}}$ and $\tilde b$ as $\tilde b_{\text{final}}$.


Then, let us write out the robust error of misclassifying class 1 against the $\ell_\infty$ attack  (the robust error of misclassifying class 2 can be bounded similarly) as
\begin{align*}
&\max_{\|\delta\|_\infty\le L_1'\varepsilon}\bP\Big(\frac{\finalest^\top}{\|\finalest\|} (\tilde x+\delta-\tilde{b}_{\text{final}})\leq 0 \mid \tilde x\sim subGaussian(\tilde\mu_1,\sigma_{\max}^2)\Big)\\
=&\bP\Big(\frac{\finalest^\top}{\|\finalest\|} \varepsilon\leq -\frac{\finalest^\top}{\|\finalest\|}(\mu_1-  \tilde b_{\text{final}}) +\varepsilon\frac{\|\finalest\|_1}{\|\finalest\|}\Big)\\
=&\bP\Big(\frac{\finalest^\top}{\|\finalest\|} \varepsilon\leq -\frac{\finalest^\top}{\|\finalest\|}(\mu_1-  \tilde b_{\text{final}}) +L_1'\cdot\varepsilon\sqrt d\Big)
\end{align*}

Denote $  \tilde b_{\text{final}}:=\frac{\tilde\mu_1+\tilde\mu_2}{2}+d_b$, we then have 
\begin{align*}
&\bP\Big(\frac{\finalest^\top}{\|\finalest\|} \varepsilon\leq -\frac{\finalest^\top}{\|\finalest\|}(\mu_1-  \tilde b_{\text{final}}) +L_1'\cdot\varepsilon\sqrt d\Big)\\
=&\bP\Big(\frac{\finalest^\top}{\|\finalest\|} \varepsilon\leq -\frac{\finalest^\top}{\|\finalest\|}((\mu_1-\tilde\mu_1)+\frac{\tilde\mu_1-\tilde\mu_2}{2}-d_b)+L_1'\cdot\varepsilon\sqrt d \Big)
\end{align*}

We are going to bound
$|\frac{\finalest^\top}{\|\finalest\|}(\mu_1-\tilde\mu_1)|$, $|\frac{\finalest^\top}{\|\finalest\|}(\tilde\mu_1-\tilde\mu_2)/2|$, and $|\frac{\finalest^\top}{\|\finalest\|}d_b|$ respectively.

Let $\tilde\mu=(\tilde\mu_1-\tilde\mu_2)/2$, $\tilde\nu=(\tilde\mu_1+\tilde\mu_2)/2$, $\mu=(\mu_1-\mu_2)/2$, $\nu=(\mu_1+\mu_2)/2$, and $b_{i}$ be the indicator that the $i$th pseudo-label $\tilde y_i$ is incorrect, so that $\xun_{i}\sim\tilde\nu+{subGaussian}\left(\left(1-2b_{i}\right)\yun_{i}\tilde\mu,\sigma^{2}\right)$

Let $\tilde n_1=\sum_{i=1}^{\tilde n}1\{\tilde y_i=1\}$, $\tilde n_2=\sum_{i=1}^{\tilde n}1\{\tilde y_i=-1\}$. 
We recall  the final direction estimator as
\begin{align*}
\semisupest=&\frac{1}{2\nunlab_1}\sum_{\tilde y_i=1}^{}\xun_{i}-\frac{1}{2\nunlab_2}\sum_{\tilde y_i=-1}^{}\xun_{i}\\
=&\frac{1}{2\nunlab_1}\sum_{\tilde y_i=1}^{}\left(1-2b_{i}\right)\tilde\mu+\frac{1}{2\nunlab_1}\sum_{\tilde y_i=1}\varepsilon_i+\frac{1}{2\nunlab_2}\sum_{\tilde y_i=-1}^{}\left(1-2b_{i}\right)\tilde\mu-\frac{1}{2\nunlab_2}\sum_{\tilde y_i=-1}\varepsilon_i,
\end{align*}
where $\eps_{i}\sim{subGaussian}\left(0,\sigma^{2}\right)$ independent
of each other. 

Now let  
\[
\gamma\defeq\frac{1}{2\nunlab_1}\sum_{\tilde y_i=1}^{}\left(1-2b_{i}\right)+\frac{1}{2\nunlab_2}\sum_{\tilde y_i=-1}^{}\left(1-2b_{i}\right),
\] and define
\[
\tilde{\delta}\defeq\semisupest-\gamma\tilde\mu=\frac{1}{2\nunlab_1}\sum_{\tilde y_i=1}\varepsilon_i-\frac{1}{2\nunlab_2}\sum_{\tilde y_i=-1}\varepsilon_i.
\]
We then have the decomposition and bound
\begin{align}
\frac{\left\Vert \semisupest\right\Vert ^{2}}{\left(\tilde\mu^{\top}\semisupest\right)^{2}} & =\frac{\left\Vert \tilde{\delta}+\gamma\tilde\mu\right\Vert ^{2}}{\left(\gamma\left\Vert \tilde\mu\right\Vert ^{2}+\tilde\mu^{\top}\tilde{\delta}\right)^{2}}=\frac{1}{\left\Vert \tilde\mu\right\Vert ^{2}}+\frac{\left\Vert \tilde{\delta}+\gamma\tilde\mu\right\Vert ^{2}-\frac{1}{\left\Vert \tilde\mu\right\Vert ^{2}}\left(\gamma\left\Vert \tilde\mu\right\Vert ^{2}+\tilde\mu^{\top}\tilde{\delta}\right)^{2}}{\left(\gamma\left\Vert \tilde\mu\right\Vert ^{2}+\tilde\mu^{\top}\tilde{\delta}\right)^{2}}\nonumber \\
 & =\frac{1}{\left\Vert \tilde\mu\right\Vert ^{2}}+\frac{\|\tilde{\delta}\|^{2}-\frac{1}{\left\Vert \tilde\mu\right\Vert ^{2}}\left(\tilde\mu^{\top}\tilde{\delta}\right)^{2}}{\left(\gamma\left\Vert\tilde \mu\right\Vert ^{2}+\tilde\mu^{\top}\tilde{\delta}\right)^2}\le\frac{1}{\left\Vert \tilde\mu\right\Vert ^{2}}+\frac{\|\tilde{\delta}\|^{2}}{\left\Vert \tilde\mu\right\Vert ^{4}\left(\gamma+\frac{1}{\left\Vert \tilde\mu\right\Vert ^{2}}\tilde\mu^{\top}\tilde{\delta}\right)^{2}}.\label{eq:semisup-deomp}
\end{align}

To write down concentration bounds for $\|\tilde{\delta}\|^{2}$ and
$\tilde\mu^{\top}\tilde{\delta}$ we must address their sub-Gaussianity.
To do so, write
\[
\tilde{\delta}=\frac{1}{2\sum_{i=1}^{\tilde n}1(\tilde y_i=1)}\sum_{i=1}^{\tilde n}1(\tilde y_i=1)\varepsilon_i-\frac{1}{2\sum_{i=1}^{\tilde n}1(\tilde y_i=-1)}\sum_{i=1}^{\tilde n}1(\tilde y_i=-1)\varepsilon_i,
\]
and
$$
\yun_{i}\stackrel{i.i.d.}{\sim}  \mathrm{sign}\left((z_i\tilde\mu+\tilde\nu- \hat b_{intermediate}+\varepsilon_i)^{\top}\labest\right),
$$
$$
\yun_{i}\eps_{i}\stackrel{i.i.d.}{\sim}  \mathrm{sign}\left((z_i\tilde\mu+\tilde\nu- \tilde b_{intermediate}+\varepsilon_i)^{\top}\labest\right)\cdot\eps_i,
$$
where $z_i$ is the true label of $\xun_i$ (taken value from $\pm 1$).

We then have 
\begin{align*}
\E[1(\tilde y_i=1)]=&\bP((z_i\tilde\mu+\tilde\nu- \tilde b_{intermediate}+\varepsilon_i)^{\top}\labest>0)\\
=&\frac{1}{2}\bP((\tilde\mu_1- \tilde b_{intermediate}+\varepsilon_i)^{\top}\labest>0)+\frac{1}{2}\bP((\tilde\mu_2- \tilde b_{intermediate}+\varepsilon_i)^{\top}\labest>0)\\
=&\frac{1}{2}\bP((\tilde\mu_1-\frac{\mu_1+\mu_2}{2}+e_b+\varepsilon_i)^{\top}\labest>0)+\frac{1}{2}\bP((\tilde\mu_2-\frac{\mu_1+\mu_2}{2}+e_b+\varepsilon_i)^{\top}\labest>0)\\
\ge&\frac{1}{2}\bP((\tilde\mu_1-\frac{\mu_1+\mu_2}{2}+e_b)^{\top}\labest+\varepsilon_i^{\top}\labest>0)\\
=& \frac{1}{2}\bP(\varepsilon_i^{\top}\labest>-(\tilde\mu_1-\mu_1+\frac{\mu_1-\mu_2}{2}+e_b)^{\top}\labest)
\end{align*}

The term in the last line can be bounded as follows. Let us recall $
d_\nu=\max\Big\{\frac{\|\tilde\mu_1-\mu_1\|}{\|\tilde\mu_1-\tilde\mu_2\|},\frac{\|\tilde\mu_2-\mu_2\|}{\|\tilde\mu_1-\tilde\mu_2\|}\Big\}<c_0$ implies that $
\|\tilde\mu-\mu\|< c_0\|\tilde\mu\|,
$ and therefore $\|\mu\|\ge\|\tilde\mu\|-c_0\|\tilde\mu\|=(1-c_0)\|\tilde\mu\|$. We then obtain 
$$
|(\tilde\mu_1-\mu_1)^\top\mu|\le\|\tilde\mu_1-\mu_1\|\cdot\|\mu\|\le c_0 \|\tilde\mu\|\cdot\|\mu\|\le\frac{c_0}{1-c_0}\|\mu\|^2.
$$

As a result, we have \begin{align*}
|(\tilde\mu_1-\mu_1+\frac{\mu_1-\mu_2}{2}+e_b)^{\top}\labest|=&|(\tilde\mu_1-\mu_1+\frac{\mu_1-\mu_2}{2}+e_b)^{\top}(\mu+e_w)|\\
\ge&\|\mu\|^2-|(\tilde\mu_1-\mu_1)^\top\mu|-|e_b^\top\mu|-|(\tilde\mu_1-\mu_1+\frac{\mu_1-\mu_2}{2}+e_b)^{\top}e_w|\\
\gtrsim& \Omega_p(\sqrt d).
\end{align*}

We then have
\begin{align}
\E[1(\tilde y_i=1)]
\notag\ge& \frac{1}{2}\bP(\varepsilon_i^{\top}\labest>-(\tilde\mu_1-\mu_1+\frac{\mu_1-\mu_2}{2}+e_b)^{\top}\labest)\\
\ge& \frac{1}{2}\bP(\varepsilon_i^{\top}\frac{\mu_1-\mu_2}{2}>0)\ge \frac{c}{2},\label{ineq:gamma}
\end{align}
for some constant $c$ close to $1$ when $d$ is sufficiently large. 

Therefore, we have
\begin{align*}
\frac{1}{\tilde n}\sum_{i=1}^{\tilde n}1(\tilde y_i=1)\ge c+o_p(1),
\end{align*}
and
$$
\|\frac{1}{\sum_{i=1}^{\tilde n}1(\tilde y_i=1)}\sum_{i=1}^{\tilde n}1(\tilde y_i=1)\varepsilon_i\|\lesssim\|\frac{1}{\tilde n}\sum_{i=1}^{\tilde n}1(\tilde y_i=1)\varepsilon_i\|
$$

In addition, we have
$$
\|\E[1(\tilde y_i=1)\eps_{i}]\|=\|\E[\E[1(\tilde y_i=1)\eps_{i}\mid \labest^\top\varepsilon_i]]\|\le\E[\sqrt{d}+| \labest^\top\varepsilon_i]|]\lesssim\sqrt{d}.
$$

Since $\|1(\tilde y_i=1)\eps_{i}^{(j)}-\E[1(\tilde y_i=1)\eps_{i}^{(j)}]\|_{\psi_2}\le 2\|1(\tilde y_i=1)\eps_{i}^{(j)} \|_{\psi_2}\le C\|\eps_{i}^{(j)}\|_{\psi_2}\le C\sigma_{\max}$, we have 
$$
\bP\left(\left(\frac{1}{\nunlab}\sum_{i=1}^{\nunlab}1(\tilde y_i=1)\eps_{i}\supind j\right)^{2}\ge (\E[1(\tilde y_i=1)\eps_{i}^{(j)}])^2+t^2\cdot\sigma_{\max}^2\right)\le e^{-C\nunlab t^2}.
$$

Therefore, by union bound, with probability at least $1-d^{-1}$,
 $$
\|\frac{1}{\sum_{i=1}^{\tilde n}1(\tilde y_i=1)}\sum_{i=1}^{\tilde n}1(\tilde y_i=1)\varepsilon_i\|^2\lesssim\|\frac{1}{\nunlab}\sum_{i=1}^{\nunlab}1(\tilde y_i=1)\eps_{i}\|^2=\sum_{j=1}^{d}\left(\frac{1}{\nunlab}\sum_{i=1}^{\nunlab}1(\tilde y_i=1)\eps_{i}\supind j\right)^{2}\lesssim d+d\cdot\frac{\log d}{\tilde n}\sigma_{\max}^2.
$$
Similarly, we have 
 $$
\|\frac{1}{\sum_{i=1}^{\tilde n}1(\tilde y_i=-1)}\sum_{i=1}^{\tilde n}1(\tilde y_i=-1)\varepsilon_i\|^2\lesssim\|\frac{1}{\nunlab}\sum_{i=1}^{\nunlab}1(\tilde y_i=-1)\eps_{i}\|^2=\sum_{j=1}^{d}\left(\frac{1}{\nunlab}\sum_{i=1}^{\nunlab}1(\tilde y_i=-1)\eps_{i}\supind j\right)^{2}\lesssim d+d\cdot\frac{\log d}{\tilde n}\sigma_{\max}^2.
$$

Then, since $\|\tilde{\delta}\|^{2}\le \|\frac{1}{2\sum_{i=1}^{\tilde n}1(\tilde y_i=1)}\sum_{i=1}^{\tilde n}1(\tilde y_i=1)\varepsilon_i\|^2+\|\frac{1}{2\sum_{i=1}^{\tilde n}1(\tilde y_i=-1)}\sum_{i=1}^{\tilde n}1(\tilde y_i=-1)\varepsilon_i\|^2$, we have
$$
\|\tilde{\delta}\|^{2}=O_p(d\cdot(1+\frac{\log d}{\tilde n}\sigma^2)).
$$

The same technique also yields a crude bound on $\tilde\mu^{\top}\tilde{\delta}=\frac{1}{2\nunlab_1}\sum_{i=1}^{\tilde n}1(\tilde y_i=1)\tilde\mu^{\top}\varepsilon_i-\frac{1}{2\nunlab_2}\sum_{i=1}1(\tilde y_i=-1)\tilde\mu^{\top}\varepsilon_i$. We can write
$$
1(\yun_{i}=1)\tilde\mu^\top\eps_{i}\stackrel{i.i.d.}{\sim}   1\left((z_i\tilde\mu+\tilde\nu-\hat b_{intermediate}+\varepsilon_i
)^{\top}\labest>0\right)\cdot\tilde\mu^\top\eps_i. 
$$

Since $\|1(\yun_{i}=1)\tilde\mu^\top\eps_{i}\|_{\vartheta_2}\le C\|\tilde\mu^\top\eps_{i}^{(j)}\|_{\vartheta_2}\le C\|\tilde\mu\|_2\sigma$, we have 
$$
\bP\left(\left(\frac{1}{\nunlab}\sum_{i=1}^{\nunlab}1(\yun_{i}=1)\tilde\mu^\top\eps_{i} \right)^{2}\ge t^2\cdot\|\tilde\mu\|^2\sigma^2+\|\tilde\mu\|^2\sigma^2\right)\le e^{-C\nunlab t^2}.
$$
and by the fact that $\left(\frac{1}{\nunlab_1}\sum_{i=1}^{\nunlab}1(\yun_{i}=1)\tilde\mu^\top\eps_{i} \right)^2\lesssim\left(\frac{1}{\nunlab}\sum_{i=1}^{\nunlab}1(\yun_{i}=1)\tilde\mu^\top\eps_{i} \right)^2$, we have 
\[
\bP\left(\left|\tilde\mu^{\top}\tilde{\delta}\right|\ge\sqrt{2}\sigma\left\Vert \tilde\mu\right\Vert+\|\tilde\mu\|\sigma \right)=\bP\left(\left|\tilde\mu^{\top}\tilde{\delta}\right|^{2}\ge C\sigma^{2}\left\Vert \tilde\mu\right\Vert ^{2}\right)\le e^{-\nunlab/8}.
\]
Finally, we need to argue that $\gamma$ is not too small. Recall
that $\gamma=\frac{1}{2\nunlab_1}\sum_{\tilde y_i=1}^{}\left(1-2b_{i}\right)+\frac{1}{2\nunlab_2}\sum_{\tilde y_i=-1}^{}\left(1-2b_{i}\right)$
 where
$b_{i}$ is the indicator that $\yun_{i}$ is incorrect and therefore
\begin{align*}
\E\left[1-2b_i\mid\labest,\tilde y_i=1\right]&=1-2\bP(f_{\labest}\mid \tilde x\sim subGaussian(\tilde\mu_1,\sigma^2))\\
&=2\bP(\varepsilon_i^{\top}\labest>-(\tilde\mu_1-\mu_1+\frac{\mu_1-\mu_2}{2}+e_b)^{\top}\labest)-1.
\end{align*}
This term can be lower bounded similarly as \eqref{ineq:gamma}, which satisfies 
\begin{align*}
&\E\left[1-2b_i\mid\labest,\tilde y_i=1\right]&\\
=&2\bP(\varepsilon_i^{\top}\labest>-(\tilde\mu_1-\mu_1+\frac{\mu_1-\mu_2}{2}+e_b)^{\top}\labest)-1\ge\frac{4}{5},
\end{align*}
with high probability when $d$ is sufficiently large.

Similarly, we have
\[
\E\left[1-2b_i\mid\labest,\tilde y_i=-1\right]\ge\frac{4}{5},
\]
with high probability when $d$ is sufficiently large.

Therefore we expect $\gamma$ to be reasonably large as long as $\E[\gamma]\ge\frac{4}{5}$.
Indeed, define $$
\tilde\gamma=\frac{1}{\tilde n}\sum_{i=1}^n(1-2b_i).
$$
We then have \begin{align*}
\E[\tilde\gamma]\ge&\E[\frac{1}{\tilde n} \sum_{y_i=1}(1-2b_i)+\frac{1}{\tilde n} \sum_{y_i=-1}(1-2b_i)]\\
\ge&\E[\frac{1}{\tilde n}\cdot \frac{4}{5}\tilde n_1+\frac{1}{\tilde n} \cdot \frac{4}{5}\tilde n_2]\ge\frac{4}{5}.
\end{align*}

By using $\gamma\ge\frac{1}{2}\tilde\gamma$, we have
\begin{align*}
&\bP(\gamma\ge\frac{1}{5})\ge\bP(\tilde\gamma\ge\frac{2}{5})=1-\bP(\tilde\gamma<\frac{2}{5})\\
\ge& 1-\bP(|\tilde\gamma-\E[\tilde\gamma]|>\frac{2}{5})\ge 1- e^{-c\tilde n},
\end{align*}
where the last inequality is due to Hoeffding's inequality.

As a result, we have $\gamma \ge \frac{2}{5}$ with high probability.

Define the event,
\[
\mathcal{E}=\left\{ \|\tilde{\delta}\|^{2}\le  \|\gamma(\frac{\|\mu\|}{\|\tilde\mu\|}\tilde\mu-\mu)\|^2+ \frac{d\cdot\sigma_{\max}^{2}}{\nunlab}{\log d}+d\xi_n^2,\ \left|\mu^{\top}\tilde{\delta}\right|\le\sqrt{2}\sigma_{\max}\left\Vert \mu\right\Vert+\gamma\mu^\top(\frac{\|\mu\|}{\|\tilde\mu\|}\tilde\mu-\mu)+\xi_n\|\mu\| \ \text{ and }\gamma\ge\frac{2}{5}\right\} ;
\]
by the preceding discussion,
\[
\bP\left(\mathcal{E}^{C}\right)\le \frac{1}{d}+e^{-\nunlab/8}+e^{-c\|\mu\|^2/8\sigma_{\max}^2}+2e^{-cn\|\mu\|/2\sigma_{\max}}+e^{-c\tilde n}
\]
%

Moreover, by the bound (\ref{eq:semisup-deomp}), $\mathcal{E}$ implies
\begin{align*}
\frac{\left\Vert \semisupest\right\Vert ^{2}}{\left(\tilde\mu^{\top}\semisupest\right)^{2}} & \le\frac{1}{\left\Vert \tilde\mu\right\Vert ^{2}}+\frac{\|\tilde{\delta}\|^{2}}{\left\Vert \tilde\mu\right\Vert ^{4}\left(\gamma+\frac{1}{\left\Vert \tilde\mu\right\Vert ^{2}}\tilde\mu^{\top}\tilde{\delta}\right)^{2}}\le\frac{1}{\left\Vert \tilde\mu\right\Vert ^{2}}+\frac{d\cdot(1+\frac{\log d}{\tilde n}\sigma_{\max}^2)}{\left\Vert \tilde\mu\right\Vert ^{4}\left(\frac{2}{5}+ \frac{1}{\left\Vert \tilde\mu\right\Vert ^{2}}\cdot\|\tilde\mu\|\sigma_{\max}\right)^{2}}.
\end{align*}

Therefore, \[
\frac{\tilde\mu^{\top}\finalest}{\sigma_{\max}\left\Vert \finalest\right\Vert }\ge\left(\frac{\sigma_{\max}^2}{\left\Vert \tilde\mu\right\Vert ^{2}}+\frac{d\cdot(1+\frac{\log d}{\tilde n}\sigma_{\max}^2)}{\left\Vert \tilde\mu\right\Vert ^{4}\left(\frac{2}{5}-\frac{\sigma_{\max}}{\|\tilde\mu\|_2}\right)^{2}}\right)^{-1/2}
\]
with probability $\ge1-(\frac{1}{d}+e^{-\nunlab/8}+e^{-c\|\mu\|^2/8\sigma_{\max}^2}+2e^{-cn\|\mu\|/2\sigma_{\max}})$.

Recall that we take  $\sigma_{\max}:=C_\sigma d^{1/4}$ and $\|\tilde\mu_1-\tilde\mu_2\|_2=C_\mu \sqrt d$ for sufficiently large $C_\mu$, we than have when $\tilde n\gtrsim \varepsilon^2 d\log d$, $$
\frac{\tilde\mu^{\top}\finalest}{\left\Vert \finalest\right\Vert }=\Omega_P(\sqrt d).
$$

Then let us consider
\begin{align*}
\hat b_{\text{final}}=&\frac{1}{2\nunlab_1}\sum_{\tilde y_i=1}^{}\xun_{i}+\frac{1}{2\nunlab_2}\sum_{\tilde y_i=-1}^{}\xun_{i}\\
=&\tilde\nu+\frac{1}{2\nunlab_1}\sum_{\tilde y_i=1}^{}\left(1-2b_{i}\right)\tilde\mu-\frac{1}{2\nunlab_2}\sum_{\tilde y_i=-1}^{}\left(1-2b_{i}\right)\tilde\mu+\frac{1}{2\nunlab_1}\sum_{\tilde y_i=1}\varepsilon_i+\frac{1}{2\nunlab_2}\sum_{\tilde y_i=-1}\varepsilon_i\\
=&\tilde\nu+\left[\frac{1}{2\nunlab_1}\sum_{\tilde y_i=1}^{}\left(1-2b_{i}\right)-\frac{1}{2\nunlab_2}\sum_{\tilde y_i=-1}^{}\left(1-2b_{i}\right)\right]\tilde\mu+\frac{1}{2\nunlab_1}\sum_{\tilde y_i=1}\varepsilon_i+\frac{1}{2\nunlab_2}\sum_{\tilde y_i=-1}\varepsilon_i,
\end{align*}

Let
\[
\lambda\defeq\frac{1}{\nunlab_1}\sum_{\tilde y_i=1}^{}\left(1-2b_{i}\right)-\frac{1}{\nunlab_2}\sum_{\tilde y_i=-1}^{}\left(1-2b_{i}\right)
\] 
When $n>C$ for sufficiently large $C$, we have $\lambda\le0.01$.

Also, let us denote $\tilde\delta_2=\frac{1}{2\nunlab_1}\sum_{\tilde y_i=1}\varepsilon_i+\frac{1}{2\nunlab_2}\sum_{\tilde y_i=-1}\varepsilon_i$, we then have $$
|\tilde\delta^\top\tilde\delta_2|=\|\frac{1}{2\nunlab_1}\sum_{\tilde y_i=1}\varepsilon_i\|^2-\|\frac{1}{2\nunlab_2}\sum_{\tilde y_i=-1}\varepsilon_i\|^2\le\|\frac{1}{2\nunlab_1}\sum_{\tilde y_i=1}\varepsilon_i\|^2\le c_\varepsilon(d+d\cdot\frac{\log d}{\tilde n}\sigma_{\max}^2)
$$

We also have 
\begin{align*}
|\frac{\finalest^\top}{\|\finalest\|}d_b|\le& \lambda|\frac{\finalest^\top}{\|\finalest\|}\tilde\mu|+|\frac{\finalest^\top}{\|\finalest\|}\tilde\delta_2|\\
\le&\lambda\|\tilde\mu\|+|\frac{( \tilde{\delta}+\gamma\tilde\mu)^\top}{\| \tilde{\delta}+\gamma\tilde\mu\|}\tilde\delta_2|\\
\le&\lambda\|\tilde\mu\|+\frac{| \tilde{\delta}^\top \tilde\delta_2+\gamma\tilde\mu^\top\tilde\delta_2|}{\|\gamma\tilde\mu\|-\|\tilde{\delta}\|}\\
\le&\lambda\|\tilde\mu\|+\frac{c_\varepsilon(d+d\cdot\frac{\log d}{\tilde n}\sigma^2)+O_P(\|\tilde\mu\|\sigma)}{\gamma\|\tilde\mu\|-c_\varepsilon(d+d\cdot\frac{\log d}{\tilde n}\sigma^2)}\\
\le& (\lambda C_\mu+\frac{c_\varepsilon}{\gamma C_\mu-c_\varepsilon})\cdot\sqrt d
\end{align*}

Therefore, when the constant $C_\mu$ is sufficiently large,
\begin{align*}
&\frac{\finalest^\top}{\|\finalest\|}((\mu_1-\tilde\mu_1)+\frac{\tilde\mu_1-\tilde\mu_2}{2}-d_b)\\
\ge& |\frac{\finalest^\top}{\|\finalest\|}\tilde\mu|-|\frac{\finalest^\top}{\|\finalest\|}(\mu_1-\tilde\mu_1)|-|\frac{\finalest^\top}{\|\finalest\|}d_b|\\
\ge&\left(\frac{\sigma^2}{\left\Vert \tilde\mu\right\Vert ^{2}}+\frac{d\cdot(1+\frac{\log d}{\tilde n}\sigma_{\max}^2)}{\left\Vert \tilde\mu\right\Vert ^{4}\left(\frac{1}{6}-\frac{\sigma_{\max}}{\|\tilde\mu\|_2}\right)^{2}}\right)^{-1/2}-2c_0\|\tilde\mu\|-(\lambda C_\mu+\frac{c_\varepsilon}{\gamma C_\mu-c_\varepsilon})\cdot\sqrt d\\
=&\Omega_P(\sqrt d).
\end{align*}




The robust error is then
\begin{align*}
&\bP\Big(\frac{\finalest^\top}{\|\finalest\|} \varepsilon\leq -\frac{\finalest^\top}{\|\finalest\|}(\mu_1-\hat b) +L_1'\cdot\varepsilon\sqrt d\Big)\\
=&\bP\Big(\frac{\finalest^\top}{\|\finalest\|} \varepsilon\leq -\frac{\finalest^\top}{\|\finalest\|}((\mu_1-\tilde\mu_1)+\frac{\tilde\mu_1-\tilde\mu_2}{2}-d_b)+L_1'\cdot\varepsilon\sqrt d \Big)\\
\le& \exp(-C\sqrt{d})\le0.01,
\end{align*}
when $d$ is sufficiently large. 

{\subsection{Proof of Proposition 1}
The proof of Proposition 1 is very similar to those of Theorem 3 except for the tail probabilities changed from subgaussian to $g(\cdot)$. For completeness, we present the proof below.
 
 We first recall the definition of $D_g$: \begin{align*}
D_g(\mu,\sigma^2)=\{& X\in\R^d: \forall v\in\R^d, \|v\|_2=1, \text{Var}(X_j)\le\sigma^2\\
 &\Pro(|v^T(X-\mu)|>\sigma\cdot t)\le g(t) \},
    \end{align*}
    and restate Proposition 1.
    
   \textbf{Proposition 1}
     Suppose $D_g$ is closed under independent summation, and assume $\left\|\E\left[\tilde x_i-\E[\tilde x_i]\mid a^T \tilde x_i=b\right]\right\|\lesssim \sqrt d + |b|$ for fixed unit vector $a$, $\tilde\sigma\le\sigma_{\max}\asymp d^{1/4}$, 
  $\|\tilde\mu_1-\tilde\mu_2\|_2\asymp \sqrt d$,  $c<q<1-c$ for some constant $0<c<1/2$, and 
 $$d_\nu=\max\Big\{\frac{\|\tilde\mu_1-\mu_1\|}{\|\tilde\mu_1-\tilde\mu_2\|},\frac{\|\tilde\mu_2-\mu_2\|}{\|\tilde\mu_1-\tilde\mu_2\|}\Big\}<c_0,$$
 for some constant $c_0\le 1/4$,
     then    the robust classification error is at most $1\%$ when $d$ is sufficiently large, $n\ge C$ for some constant C (not depending on $d$ and $\varepsilon$) and $$\tilde n\gtrsim \varepsilon^2\cdot(g^{-1}(1/d\log d))^2\cdot\sqrt d.$$

Now let us proceed to the proof. 

 We first recall the distributions for the two classes of labeled data as ${D_g}(\mu_1,\sigma_{\max}^2)$, and ${D_g}(\mu_2,\sigma_{\max}^2)$ respectively. Similarly, we also denote the distributions for the two classes of unlabeled data as ${D_g}(\bm\tilde\mu_1,\sigma_{\max}^2)$, and ${D_g}(\bm\tilde\mu_2,\sigma_{\max}^2)$ respectively. Also, to avoid the visual similarity and emphasize the estimates constructed by the labeled and unlabeled data respectively, we write $\hat w$ as $\hat w_{\text{intermediate}}$, $\hat b$ as $\hat b_{\text{intermediate}}$, $\tilde w$ as $\tilde w_{\text{final}}$ and $\tilde b$ as $\tilde b_{\text{final}}$.


Then, let us write out the robust error of misclassifying class 1 against the $\ell_\infty$ attack  (the robust error of misclassifying class 2 can be bounded similarly) as
\begin{align*}
&\max_{\|\delta\|_\infty\le L_1'\varepsilon}\bP\Big(\frac{\finalest^\top}{\|\finalest\|} (\tilde x+\delta-\tilde{b}_{\text{final}})\leq 0 \mid \tilde x\sim D_g(\tilde\mu_1,\sigma_{\max}^2)\Big)\\
=&\bP\Big(\frac{\finalest^\top}{\|\finalest\|} \varepsilon\leq -\frac{\finalest^\top}{\|\finalest\|}(\mu_1-  \tilde b_{\text{final}}) +\varepsilon\frac{\|\finalest\|_1}{\|\finalest\|}\Big)\\
=&\bP\Big(\frac{\finalest^\top}{\|\finalest\|} \varepsilon\leq -\frac{\finalest^\top}{\|\finalest\|}(\mu_1-  \tilde b_{\text{final}}) +L_1'\cdot\varepsilon\sqrt d\Big)
\end{align*}

Denote $  \tilde b_{\text{final}}:=\frac{\tilde\mu_1+\tilde\mu_2}{2}+d_b$, we then have 
\begin{align*}
&\bP\Big(\frac{\finalest^\top}{\|\finalest\|} \varepsilon\leq -\frac{\finalest^\top}{\|\finalest\|}(\mu_1-  \tilde b_{\text{final}}) +L_1'\cdot\varepsilon\sqrt d\Big)\\
=&\bP\Big(\frac{\finalest^\top}{\|\finalest\|} \varepsilon\leq -\frac{\finalest^\top}{\|\finalest\|}((\mu_1-\tilde\mu_1)+\frac{\tilde\mu_1-\tilde\mu_2}{2}-d_b)+L_1'\cdot\varepsilon\sqrt d \Big)
\end{align*}

We are going to bound
$|\frac{\finalest^\top}{\|\finalest\|}(\mu_1-\tilde\mu_1)|$, $|\frac{\finalest^\top}{\|\finalest\|}(\tilde\mu_1-\tilde\mu_2)/2|$, and $|\frac{\finalest^\top}{\|\finalest\|}d_b|$ respectively.

Let $\tilde\mu=(\tilde\mu_1-\tilde\mu_2)/2$, $\tilde\nu=(\tilde\mu_1+\tilde\mu_2)/2$, $\mu=(\mu_1-\mu_2)/2$, $\nu=(\mu_1+\mu_2)/2$, and $b_{i}$ be the indicator that the $i$th pseudo-label $\tilde y_i$ is incorrect, so that $\xun_{i}\sim\tilde\nu+{D_g}\left(\left(1-2b_{i}\right)\yun_{i}\tilde\mu,\sigma^{2}\right)$

Let $\tilde n_1=\sum_{i=1}^{\tilde n}1\{\tilde y_i=1\}$, $\tilde n_2=\sum_{i=1}^{\tilde n}1\{\tilde y_i=-1\}$. 
We recall  the final direction estimator as
\begin{align*}
\semisupest=&\frac{1}{2\nunlab_1}\sum_{\tilde y_i=1}^{}\xun_{i}-\frac{1}{2\nunlab_2}\sum_{\tilde y_i=-1}^{}\xun_{i}\\
=&\frac{1}{2\nunlab_1}\sum_{\tilde y_i=1}^{}\left(1-2b_{i}\right)\tilde\mu+\frac{1}{2\nunlab_1}\sum_{\tilde y_i=1}\varepsilon_i+\frac{1}{2\nunlab_2}\sum_{\tilde y_i=-1}^{}\left(1-2b_{i}\right)\tilde\mu-\frac{1}{2\nunlab_2}\sum_{\tilde y_i=-1}\varepsilon_i,
\end{align*}
where $\eps_{i}\sim{D_g}\left(0,\sigma^{2}\right)$ independent
of each other. 

Now let  
\[
\gamma\defeq\frac{1}{2\nunlab_1}\sum_{\tilde y_i=1}^{}\left(1-2b_{i}\right)+\frac{1}{2\nunlab_2}\sum_{\tilde y_i=-1}^{}\left(1-2b_{i}\right),
\] and define
\[
\tilde{\delta}\defeq\semisupest-\gamma\tilde\mu=\frac{1}{2\nunlab_1}\sum_{\tilde y_i=1}\varepsilon_i-\frac{1}{2\nunlab_2}\sum_{\tilde y_i=-1}\varepsilon_i.
\]
We then have the decomposition and bound
\begin{align}
\frac{\left\Vert \semisupest\right\Vert ^{2}}{\left(\tilde\mu^{\top}\semisupest\right)^{2}} & =\frac{\left\Vert \tilde{\delta}+\gamma\tilde\mu\right\Vert ^{2}}{\left(\gamma\left\Vert \tilde\mu\right\Vert ^{2}+\tilde\mu^{\top}\tilde{\delta}\right)^{2}}=\frac{1}{\left\Vert \tilde\mu\right\Vert ^{2}}+\frac{\left\Vert \tilde{\delta}+\gamma\tilde\mu\right\Vert ^{2}-\frac{1}{\left\Vert \tilde\mu\right\Vert ^{2}}\left(\gamma\left\Vert \tilde\mu\right\Vert ^{2}+\tilde\mu^{\top}\tilde{\delta}\right)^{2}}{\left(\gamma\left\Vert \tilde\mu\right\Vert ^{2}+\tilde\mu^{\top}\tilde{\delta}\right)^{2}}\nonumber \\
 & =\frac{1}{\left\Vert \tilde\mu\right\Vert ^{2}}+\frac{\|\tilde{\delta}\|^{2}-\frac{1}{\left\Vert \tilde\mu\right\Vert ^{2}}\left(\tilde\mu^{\top}\tilde{\delta}\right)^{2}}{\left(\gamma\left\Vert\tilde \mu\right\Vert ^{2}+\tilde\mu^{\top}\tilde{\delta}\right)^2}\le\frac{1}{\left\Vert \tilde\mu\right\Vert ^{2}}+\frac{\|\tilde{\delta}\|^{2}}{\left\Vert \tilde\mu\right\Vert ^{4}\left(\gamma+\frac{1}{\left\Vert \tilde\mu\right\Vert ^{2}}\tilde\mu^{\top}\tilde{\delta}\right)^{2}}.\label{eq:semisup-deomp}
\end{align}

To write down concentration bounds for $\|\tilde{\delta}\|^{2}$ and
$\tilde\mu^{\top}\tilde{\delta}$ we must control their tail bound.
To do so, write
\[
\tilde{\delta}=\frac{1}{2\sum_{i=1}^{\tilde n}1(\tilde y_i=1)}\sum_{i=1}^{\tilde n}1(\tilde y_i=1)\varepsilon_i-\frac{1}{2\sum_{i=1}^{\tilde n}1(\tilde y_i=-1)}\sum_{i=1}^{\tilde n}1(\tilde y_i=-1)\varepsilon_i,
\]
and
$$
\yun_{i}\stackrel{i.i.d.}{\sim}  \mathrm{sign}\left((z_i\tilde\mu+\tilde\nu- \hat b_{intermediate}+\varepsilon_i)^{\top}\labest\right),
$$
$$
\yun_{i}\eps_{i}\stackrel{i.i.d.}{\sim}  \mathrm{sign}\left((z_i\tilde\mu+\tilde\nu- \tilde b_{intermediate}+\varepsilon_i)^{\top}\labest\right)\cdot\eps_i,
$$
where $z_i$ is the true label of $\xun_i$ (taken value from $\pm 1$).

We then have 
\begin{align*}
\E[1(\tilde y_i=1)]=&\bP((z_i\tilde\mu+\tilde\nu- \tilde b_{intermediate}+\varepsilon_i)^{\top}\labest>0)\\
=&\frac{1}{2}\bP((\tilde\mu_1- \tilde b_{intermediate}+\varepsilon_i)^{\top}\labest>0)+\frac{1}{2}\bP((\tilde\mu_2- \tilde b_{intermediate}+\varepsilon_i)^{\top}\labest>0)\\
=&\frac{1}{2}\bP((\tilde\mu_1-\frac{\mu_1+\mu_2}{2}+e_b+\varepsilon_i)^{\top}\labest>0)+\frac{1}{2}\bP((\tilde\mu_2-\frac{\mu_1+\mu_2}{2}+e_b+\varepsilon_i)^{\top}\labest>0)\\
\ge&\frac{1}{2}\bP((\tilde\mu_1-\frac{\mu_1+\mu_2}{2}+e_b)^{\top}\labest+\varepsilon_i^{\top}\labest>0)\\
=& \frac{1}{2}\bP(\varepsilon_i^{\top}\labest>-(\tilde\mu_1-\mu_1+\frac{\mu_1-\mu_2}{2}+e_b)^{\top}\labest)
\end{align*}

The term in the last line can be bounded as follows. Let us recall $
d_\nu=\max\Big\{\frac{\|\tilde\mu_1-\mu_1\|}{\|\tilde\mu_1-\tilde\mu_2\|},\frac{\|\tilde\mu_2-\mu_2\|}{\|\tilde\mu_1-\tilde\mu_2\|}\Big\}<c_0$ implies that $
\|\tilde\mu-\mu\|< c_0\|\tilde\mu\|,
$ and therefore $\|\mu\|\ge\|\tilde\mu\|-c_0\|\tilde\mu\|=(1-c_0)\|\tilde\mu\|$. We then obtain 
$$
|(\tilde\mu_1-\mu_1)^\top\mu|\le\|\tilde\mu_1-\mu_1\|\cdot\|\mu\|\le c_0 \|\tilde\mu\|\cdot\|\mu\|\le\frac{c_0}{1-c_0}\|\mu\|^2.
$$

As a result, we have \begin{align*}
|(\tilde\mu_1-\mu_1+\frac{\mu_1-\mu_2}{2}+e_b)^{\top}\labest|=&|(\tilde\mu_1-\mu_1+\frac{\mu_1-\mu_2}{2}+e_b)^{\top}(\mu+e_w)|\\
\ge&\|\mu\|^2-|(\tilde\mu_1-\mu_1)^\top\mu|-|e_b^\top\mu|-|(\tilde\mu_1-\mu_1+\frac{\mu_1-\mu_2}{2}+e_b)^{\top}e_w|\\
\gtrsim& \Omega_p(\sqrt d).
\end{align*}

We then have
\begin{align}
\E[1(\tilde y_i=1)]
\notag\ge& \frac{1}{2}\bP(\varepsilon_i^{\top}\labest>-(\tilde\mu_1-\mu_1+\frac{\mu_1-\mu_2}{2}+e_b)^{\top}\labest)\\
\ge& \frac{1}{2}\bP(\varepsilon_i^{\top}\frac{\mu_1-\mu_2}{2}>0)\ge \frac{c}{2},\label{ineq:gamma}
\end{align}
for some constant $c$ close to $1$ when $d$ is sufficiently large. 

Therefore, we have
\begin{align*}
\frac{1}{\tilde n}\sum_{i=1}^{\tilde n}1(\tilde y_i=1)\ge c+o_p(1),
\end{align*}
and
$$
\|\frac{1}{\sum_{i=1}^{\tilde n}1(\tilde y_i=1)}\sum_{i=1}^{\tilde n}1(\tilde y_i=1)\varepsilon_i\|\lesssim\|\frac{1}{\tilde n}\sum_{i=1}^{\tilde n}1(\tilde y_i=1)\varepsilon_i\|
$$

In addition, we have
$$
\|\E[1(\tilde y_i=1)\eps_{i}]\|=\|\E[\E[1(\tilde y_i=1)\eps_{i}\mid \labest^\top\varepsilon_i]]\|\le\E[\sqrt{d}+| \labest^\top\varepsilon_i]|]\lesssim\sqrt{d}.
$$

By the definition of $D_g$, we have 
$$
\bP\left(\left(\frac{1}{\nunlab}\sum_{i=1}^{\nunlab}1(\tilde y_i=1)\eps_{i}\supind j\right)^{2}\ge (\E[1(\tilde y_i=1)\eps_{i}^{(j)}])^2+t^2\cdot\sigma_{\max}^2\right)\le g(C\sqrt{\nunlab} t).
$$

Therefore, by union bound, with probability at least $1-(\log d)^{-1}$,
 $$
\|\frac{1}{\sum_{i=1}^{\tilde n}1(\tilde y_i=1)}\sum_{i=1}^{\tilde n}1(\tilde y_i=1)\varepsilon_i\|^2\lesssim\|\frac{1}{\nunlab}\sum_{i=1}^{\nunlab}1(\tilde y_i=1)\eps_{i}\|^2=\sum_{j=1}^{d}\left(\frac{1}{\nunlab}\sum_{i=1}^{\nunlab}1(\tilde y_i=1)\eps_{i}\supind j\right)^{2}\lesssim d+d\cdot\frac{(g^{-1}(1/d\log d))^2}{\tilde n}\sigma_{\max}^2.
$$
Similarly, we have 
 $$
\|\frac{1}{\sum_{i=1}^{\tilde n}1(\tilde y_i=-1)}\sum_{i=1}^{\tilde n}1(\tilde y_i=-1)\varepsilon_i\|^2\lesssim\|\frac{1}{\nunlab}\sum_{i=1}^{\nunlab}1(\tilde y_i=-1)\eps_{i}\|^2=\sum_{j=1}^{d}\left(\frac{1}{\nunlab}\sum_{i=1}^{\nunlab}1(\tilde y_i=-1)\eps_{i}\supind j\right)^{2}\lesssim d+d\cdot\frac{(g^{-1}(1/d\log d))^2}{\tilde n}\sigma_{\max}^2.
$$

Then, since $\|\tilde{\delta}\|^{2}\le \|\frac{1}{2\sum_{i=1}^{\tilde n}1(\tilde y_i=1)}\sum_{i=1}^{\tilde n}1(\tilde y_i=1)\varepsilon_i\|^2+\|\frac{1}{2\sum_{i=1}^{\tilde n}1(\tilde y_i=-1)}\sum_{i=1}^{\tilde n}1(\tilde y_i=-1)\varepsilon_i\|^2$, we have
$$
\|\tilde{\delta}\|^{2}=O_p(d\cdot(1+\frac{(g^{-1}(1/d\log d))^2}{\tilde n}\sigma^2)).
$$

The same technique also yields a crude bound on $\tilde\mu^{\top}\tilde{\delta}=\frac{1}{2\nunlab_1}\sum_{i=1}^{\tilde n}1(\tilde y_i=1)\tilde\mu^{\top}\varepsilon_i-\frac{1}{2\nunlab_2}\sum_{i=1}1(\tilde y_i=-1)\tilde\mu^{\top}\varepsilon_i$. We can write
$$
1(\yun_{i}=1)\tilde\mu^\top\eps_{i}\stackrel{i.i.d.}{\sim}   1\left((z_i\tilde\mu+\tilde\nu-\hat b_{intermediate}+\varepsilon_i
)^{\top}\labest>0\right)\cdot\tilde\mu^\top\eps_i. 
$$

By definition of $D_g$, we have 
$$
\bP\left(\left(\frac{1}{\nunlab}\sum_{i=1}^{\nunlab}1(\yun_{i}=1)\tilde\mu^\top\eps_{i} \right)^{2}\ge t^2\cdot\|\tilde\mu\|^2\sigma^2+\|\tilde\mu\|^2\sigma^2\right)\le g(C\sqrt{\nunlab} t).
$$
and by the fact that $\left(\frac{1}{\nunlab_1}\sum_{i=1}^{\nunlab}1(\yun_{i}=1)\tilde\mu^\top\eps_{i} \right)^2\lesssim\left(\frac{1}{\nunlab}\sum_{i=1}^{\nunlab}1(\yun_{i}=1)\tilde\mu^\top\eps_{i} \right)^2$, we have 
\[
\bP\left(\left|\tilde\mu^{\top}\tilde{\delta}\right|\ge\sqrt{2}\sigma\left\Vert \tilde\mu\right\Vert+\|\tilde\mu\|\sigma \right)=\bP\left(\left|\tilde\mu^{\top}\tilde{\delta}\right|^{2}\ge C\sigma^{2}\left\Vert \tilde\mu\right\Vert ^{2}\right)\le g(C\sqrt{\nunlab}).
\]
Finally, we need to argue that $\gamma$ is not too small. Recall
that $\gamma=\frac{1}{2\nunlab_1}\sum_{\tilde y_i=1}^{}\left(1-2b_{i}\right)+\frac{1}{2\nunlab_2}\sum_{\tilde y_i=-1}^{}\left(1-2b_{i}\right)$
 where
$b_{i}$ is the indicator that $\yun_{i}$ is incorrect and therefore
\begin{align*}
\E\left[1-2b_i\mid\labest,\tilde y_i=1\right]&=1-2\bP(f_{\labest}\mid \tilde x\sim subGaussian(\tilde\mu_1,\sigma^2))\\
&=2\bP(\varepsilon_i^{\top}\labest>-(\tilde\mu_1-\mu_1+\frac{\mu_1-\mu_2}{2}+e_b)^{\top}\labest)-1.
\end{align*}
This term can be lower bounded similarly as \eqref{ineq:gamma}, which satisfies 
\begin{align*}
&\E\left[1-2b_i\mid\labest,\tilde y_i=1\right]&\\
=&2\bP(\varepsilon_i^{\top}\labest>-(\tilde\mu_1-\mu_1+\frac{\mu_1-\mu_2}{2}+e_b)^{\top}\labest)-1\ge\frac{4}{5},
\end{align*}
with high probability when $d$ is sufficiently large.

Similarly, we have
\[
\E\left[1-2b_i\mid\labest,\tilde y_i=-1\right]\ge\frac{4}{5},
\]
with high probability when $d$ is sufficiently large.

Therefore we expect $\gamma$ to be reasonably large as long as $\E[\gamma]\ge\frac{4}{5}$.
Indeed, define $$
\tilde\gamma=\frac{1}{\tilde n}\sum_{i=1}^n(1-2b_i).
$$
We then have \begin{align*}
\E[\tilde\gamma]\ge&\E[\frac{1}{\tilde n} \sum_{y_i=1}(1-2b_i)+\frac{1}{\tilde n} \sum_{y_i=-1}(1-2b_i)]\\
\ge&\E[\frac{1}{\tilde n}\cdot \frac{4}{5}\tilde n_1+\frac{1}{\tilde n} \cdot \frac{4}{5}\tilde n_2]\ge\frac{4}{5}.
\end{align*}

By using $\gamma\ge\frac{1}{2}\tilde\gamma$, we have
\begin{align*}
&\bP(\gamma\ge\frac{1}{5})\ge\bP(\tilde\gamma\ge\frac{2}{5})=1-\bP(\tilde\gamma<\frac{2}{5})\\
\ge& 1-\bP(|\tilde\gamma-\E[\tilde\gamma]|>\frac{2}{5})\ge 1- e^{-c\tilde n},
\end{align*}
where the last inequality is due to Hoeffding's inequality.

As a result, we have $\gamma \ge \frac{2}{5}$ with high probability.

Define the event,
\[
\mathcal{E}=\left\{ \|\tilde{\delta}\|^{2}\le  \|\gamma(\frac{\|\mu\|}{\|\tilde\mu\|}\tilde\mu-\mu)\|^2+ \frac{d\cdot\sigma_{\max}^{2}}{\nunlab}{\log d}+d\xi_n^2,\ \left|\mu^{\top}\tilde{\delta}\right|\le\sqrt{2}\sigma_{\max}\left\Vert \mu\right\Vert+\gamma\mu^\top(\frac{\|\mu\|}{\|\tilde\mu\|}\tilde\mu-\mu)+\xi_n\|\mu\| \ \text{ and }\gamma\ge\frac{2}{5}\right\} ;
\]
by the preceding discussion,
\[
\bP\left(\mathcal{E}^{C}\right)\le \frac{1}{\log d}+g(C\sqrt{\nunlab})+g(C\|\mu\|/\sigma_{\max})+2g(C\sqrt{\nunlab \|\mu\|/\sigma_{\max}} )+e^{-c\tilde n}
\]
%

Moreover, by the bound (\ref{eq:semisup-deomp}), $\mathcal{E}$ implies
\begin{align*}
\frac{\left\Vert \semisupest\right\Vert ^{2}}{\left(\tilde\mu^{\top}\semisupest\right)^{2}} & \le\frac{1}{\left\Vert \tilde\mu\right\Vert ^{2}}+\frac{\|\tilde{\delta}\|^{2}}{\left\Vert \tilde\mu\right\Vert ^{4}\left(\gamma+\frac{1}{\left\Vert \tilde\mu\right\Vert ^{2}}\tilde\mu^{\top}\tilde{\delta}\right)^{2}}\le\frac{1}{\left\Vert \tilde\mu\right\Vert ^{2}}+\frac{d\cdot(1+\frac{(g^{-1}(1/d\log d))^2}{\tilde n}\sigma_{\max}^2)}{\left\Vert \tilde\mu\right\Vert ^{4}\left(\frac{2}{5}+ \frac{1}{\left\Vert \tilde\mu\right\Vert ^{2}}\cdot\|\tilde\mu\|\sigma_{\max}\right)^{2}}.
\end{align*}

Therefore, \[
\frac{\tilde\mu^{\top}\finalest}{\sigma_{\max}\left\Vert \finalest\right\Vert }\ge\left(\frac{\sigma_{\max}^2}{\left\Vert \tilde\mu\right\Vert ^{2}}+\frac{d\cdot(1+\frac{(g^{-1}(1/d\log d))^2}{\tilde n}\sigma_{\max}^2)}{\left\Vert \tilde\mu\right\Vert ^{4}\left(\frac{2}{5}-\frac{\sigma_{\max}}{\|\tilde\mu\|_2}\right)^{2}}\right)^{-1/2}
\]
with probability $\ge1-(\frac{1}{\log d}+g(C\sqrt{\nunlab})+g(C\|\mu\|/\sigma_{\max})+2g(C\sqrt{\nunlab \|\mu\|/\sigma_{\max}} )+e^{-c\tilde n})$.

Recall that we take  $\sigma_{\max}:=C_\sigma d^{1/4}$ and $\|\tilde\mu_1-\tilde\mu_2\|_2=C_\mu \sqrt d$ for sufficiently large $C_\mu$, we than have when $\tilde n\gtrsim \varepsilon^2 d(g^{-1}(1/d\log d))^2 $, $$
\frac{\tilde\mu^{\top}\finalest}{\left\Vert \finalest\right\Vert }=\Omega_P(\sqrt d).
$$

Then let us consider
\begin{align*}
\hat b_{\text{final}}=&\frac{1}{2\nunlab_1}\sum_{\tilde y_i=1}^{}\xun_{i}+\frac{1}{2\nunlab_2}\sum_{\tilde y_i=-1}^{}\xun_{i}\\
=&\tilde\nu+\frac{1}{2\nunlab_1}\sum_{\tilde y_i=1}^{}\left(1-2b_{i}\right)\tilde\mu-\frac{1}{2\nunlab_2}\sum_{\tilde y_i=-1}^{}\left(1-2b_{i}\right)\tilde\mu+\frac{1}{2\nunlab_1}\sum_{\tilde y_i=1}\varepsilon_i+\frac{1}{2\nunlab_2}\sum_{\tilde y_i=-1}\varepsilon_i\\
=&\tilde\nu+\left[\frac{1}{2\nunlab_1}\sum_{\tilde y_i=1}^{}\left(1-2b_{i}\right)-\frac{1}{2\nunlab_2}\sum_{\tilde y_i=-1}^{}\left(1-2b_{i}\right)\right]\tilde\mu+\frac{1}{2\nunlab_1}\sum_{\tilde y_i=1}\varepsilon_i+\frac{1}{2\nunlab_2}\sum_{\tilde y_i=-1}\varepsilon_i,
\end{align*}

Let
\[
\lambda\defeq\frac{1}{\nunlab_1}\sum_{\tilde y_i=1}^{}\left(1-2b_{i}\right)-\frac{1}{\nunlab_2}\sum_{\tilde y_i=-1}^{}\left(1-2b_{i}\right)
\] 
When $n>C$ for sufficiently large $C$, we have $\lambda\le0.01$.

Also, let us denote $\tilde\delta_2=\frac{1}{2\nunlab_1}\sum_{\tilde y_i=1}\varepsilon_i+\frac{1}{2\nunlab_2}\sum_{\tilde y_i=-1}\varepsilon_i$, we then have $$
|\tilde\delta^\top\tilde\delta_2|=\|\frac{1}{2\nunlab_1}\sum_{\tilde y_i=1}\varepsilon_i\|^2-\|\frac{1}{2\nunlab_2}\sum_{\tilde y_i=-1}\varepsilon_i\|^2\le\|\frac{1}{2\nunlab_1}\sum_{\tilde y_i=1}\varepsilon_i\|^2\le c_\varepsilon(d+d\cdot\frac{(g^{-1}(1/d\log d))^2}{\tilde n}\sigma_{\max}^2)
$$

We also have 
\begin{align*}
|\frac{\finalest^\top}{\|\finalest\|}d_b|\le& \lambda|\frac{\finalest^\top}{\|\finalest\|}\tilde\mu|+|\frac{\finalest^\top}{\|\finalest\|}\tilde\delta_2|\\
\le&\lambda\|\tilde\mu\|+|\frac{( \tilde{\delta}+\gamma\tilde\mu)^\top}{\| \tilde{\delta}+\gamma\tilde\mu\|}\tilde\delta_2|\\
\le&\lambda\|\tilde\mu\|+\frac{| \tilde{\delta}^\top \tilde\delta_2+\gamma\tilde\mu^\top\tilde\delta_2|}{\|\gamma\tilde\mu\|-\|\tilde{\delta}\|}\\
\le&\lambda\|\tilde\mu\|+\frac{c_\varepsilon(d+d\cdot\frac{(g^{-1}(1/d\log d))^2}{\tilde n}\sigma^2)+O_P(\|\tilde\mu\|\sigma)}{\gamma\|\tilde\mu\|-c_\varepsilon(d+d\cdot\frac{(g^{-1}(1/d\log d))^2}{\tilde n}\sigma^2)}\\
\le& (\lambda C_\mu+\frac{c_\varepsilon}{\gamma C_\mu-c_\varepsilon})\cdot\sqrt d
\end{align*}

Therefore, when the constant $C_\mu$ is sufficiently large,
\begin{align*}
&\frac{\finalest^\top}{\|\finalest\|}((\mu_1-\tilde\mu_1)+\frac{\tilde\mu_1-\tilde\mu_2}{2}-d_b)\\
\ge& |\frac{\finalest^\top}{\|\finalest\|}\tilde\mu|-|\frac{\finalest^\top}{\|\finalest\|}(\mu_1-\tilde\mu_1)|-|\frac{\finalest^\top}{\|\finalest\|}d_b|\\
\ge&\left(\frac{\sigma^2}{\left\Vert \tilde\mu\right\Vert ^{2}}+\frac{d\cdot(1+\frac{(g^{-1}(1/d\log d))^2}{\tilde n}\sigma_{\max}^2)}{\left\Vert \tilde\mu\right\Vert ^{4}\left(\frac{1}{6}-\frac{\sigma_{\max}}{\|\tilde\mu\|_2}\right)^{2}}\right)^{-1/2}-2c_0\|\tilde\mu\|-(\lambda C_\mu+\frac{c_\varepsilon}{\gamma C_\mu-c_\varepsilon})\cdot\sqrt d\\
=&\Omega_P(\sqrt d).
\end{align*}




The robust error is then
\begin{align*}
&\bP\Big(\frac{\finalest^\top}{\|\finalest\|} \varepsilon\leq -\frac{\finalest^\top}{\|\finalest\|}(\mu_1-\hat b) +L_1'\cdot\varepsilon\sqrt d\Big)\\
=&\bP\Big(\frac{\finalest^\top}{\|\finalest\|} \varepsilon\leq -\frac{\finalest^\top}{\|\finalest\|}((\mu_1-\tilde\mu_1)+\frac{\tilde\mu_1-\tilde\mu_2}{2}-d_b)+L_1'\cdot\varepsilon\sqrt d \Big)\\
\le& \exp(-C\sqrt{d})\le0.01,
\end{align*}
when $d$ is sufficiently large. 
       }
\subsection{Proof of Theorem 4}
Let us consider the following modelr:  $x\sim N(y\mu,\sigma^2 I)$ with $y$ uniform on $\{-1,1\}$ and $\mu\in\R^d$. Consider a linear classifier $f_w(x)=sgn(x^\top w)$.

It's easy to see that the robust error probability is $$
err_{robust}^{\infty}(f)=Q(\frac{\mu^\top w}{\sigma\|w\|}-\frac{\varepsilon\|w\|_1}{\sigma\|w\|}),
$$
where $Q=\frac{1}{\sqrt{2\pi}}\int_x^\infty e^{-t^2/2}\;dt$.

Therefore \begin{align*}
\argmin_{\|w\|=1} err_{robust}^{\infty}(f_w)=&\argmax_{\|w\|=1} \frac{\mu^\top w}{\sigma\|w\|}-\frac{\varepsilon\|w\|_1}{\sigma\|w\|}\\
=&\argmax_{\|w\|=1}\mu^\top w-\varepsilon\|w\|_1\\
=&\argmax_{\|w\|=1}\sum_{j=1}^d\mu_jw_j-\varepsilon|w_j|
\end{align*}
By observation, when reaching maximum, we have to have $sgn(w_j)=sgn(\mu_j)$, therefore
\begin{align*}
&\argmax_{\|w\|=1}\sum_{j=1}^d\mu_jw_j-\varepsilon|w_j|\\
=&\argmax_{\|w\|=1}\sum_{j=1}^d(\mu_j-\varepsilon\cdot sgn(\mu_j))w_j\\
=&\frac{T_\varepsilon(\mu)}{\|T_\varepsilon(\mu)\|},
\end{align*}
where $T_\varepsilon(\mu)$ is the hard-thresholding operator with $(T_\varepsilon(\mu))_j=sgn(\mu_j)\cdot\max\{ |\mu_j|-\varepsilon,0\}$.

Now let us consider the example: $\mu$ with $\mu_j>\varepsilon$ for all $j=1,2,...d$. For the shifted domain, we let $\tilde\mu_1=-\tilde\mu_2=\tilde\mu=\mu-\varepsilon\cdot 1_p$, and the mixing proportion is half-half. 

Let $b_{i}$ be the indicator that the $i$th pseudo-label $\tilde y_i=sgn(\tilde x_i^\top \hat w_{intermediate})$ is incorrect, so that $\xun_{i}\sim{N}\left(\left(1-2b_{i}\right)\yun_{i}\tilde\mu,\sigma^{2}I\right)$,
and let 
\[
\gamma\defeq\frac{1}{\nunlab}\sum_{i=1}^{\nunlab}\left(1-2b_{i}\right)\in[-1,1].
\]

We may write the final direction estimator as
\[
\semisupest=\frac{1}{\nunlab}\sum_{i=1}^{\nunlab}\yun_{i}\xun_{i}=\gamma\tilde\mu+\frac{1}{\nunlab}\sum_{i=1}^{\nunlab}\yun_{i}\eps_{i}
\]
where $\eps_{i}\sim{N}\left(0,\sigma^{2}I\right)$ independent
of each other. 

By orthogonal invariance of Gaussianality, we choose a coordinate system such that the first coordinate is in the direction of $\hat w_{intermediate}$, we then have 
\begin{align*}
\|\frac{1}{\nunlab}\sum_{i=1}^{\nunlab}\yun_{i}\eps_{i}\|_2^2=&|\frac{1}{\nunlab}\sum_{i=1}^{\nunlab}\yun_{i}\eps_{i1}|_2^2+\sum_{j=2}^d|\frac{1}{\nunlab}\sum_{i=1}^{\nunlab}\yun_{i}\eps_{ij}|_2^2\\
\le&\frac{\sigma^2}{\tilde n}\chi^2_{\tilde n}+\frac{\sigma^2}{\tilde n}\chi^2_{d-1}.
\end{align*}

In addition, we have $\|\tilde\mu\|_2^2=d\varepsilon^2$. Therefore, if $(1/d+1/\tilde n)\cdot\frac{\sigma^2}{\varepsilon^2}\to0$, we will then have $$
\frac{\semisupest}{\|\semisupest\|}\to\tilde\mu,
$$ 
and therefore
$$
err_{robust}^{\infty}(f_{\hat w_{final}})\le err_{robust}^{\infty}(f_{\mu}).
$$


\subsection{Proof of Theorem 5}
Suppose the labeled domain distribution is $\frac{1}{2} N(\mu,\sigma^2 I_p)+\frac{1}{2} N(-\mu,\sigma^2 I_p)$. Let $v\in\R^p$ be a vector such that $v^\top\mu=0$, $\|\mu\|=a\|v\|$ for some $a>0$, and let the unlabeled domain distribution be $\frac{1}{2} N(v,\sigma^2 I_p)+\frac{1}{2} N(-v,\sigma^2 I_p)$. That is, $\mu_1=-\mu_2=\mu, \tilde\mu_1=-\tilde\mu_2=v$.

We then have 

$$d_\nu=\max\{|\frac{\|\tilde\mu_1-\mu_1\|}{\|\tilde\mu_1-\tilde\mu_2\|}|,\frac{\|\tilde\mu_2-\mu_2\|}{\|\tilde\mu_1-\tilde\mu_2\|}|\}=\frac{\sqrt{1+a^2}\cdot\|v\|}{2\|v\|}=\frac{\sqrt{1+a^2}\cdot\|\tilde\mu_1-\tilde\mu_2\|}{2},
$$
which falls into the specified class. 

Now let us consider the case where $\mu=e_1, v=a^{-1}e_2$, where $e_1,e_2$ are the canonical basis, and study the performance of the classifier $\text{sgn}(\semisupest^\top(z-\hat b))$, where
$$
\hat b_{\text{final}}=\frac{1}{2\nunlab_1}\sum_{\tilde y_i=1}^{}\xun_{i}+\frac{1}{2\nunlab_2}\sum_{\tilde y_i=-1}^{}\xun_{i};  \quad \semisupest=\frac{1}{2\nunlab_1}\sum_{\tilde y_i=1}^{}\xun_{i}-\frac{1}{2\nunlab_2}\sum_{\tilde y_i=-1}^{}\xun_{i}.
$$

Similar to the proof in the last section, let $b_{i}$ be the indicator that $\yun_{i}$ is incorrect  and
we decompose $\semisupest$ and $\hat b$ into

\begin{align*}
\semisupest=&\frac{1}{2\nunlab_1}\sum_{\tilde y_i=1}^{}\xun_{i}-\frac{1}{2\nunlab_2}\sum_{\tilde y_i=-1}^{}\xun_{i}\\
=&\frac{1}{2\nunlab_1}\sum_{\tilde y_i=1}^{}\left(1-2b_{i}\right)v+\frac{1}{2\nunlab_1}\sum_{\tilde y_i=1}\varepsilon_i+\frac{1}{2\nunlab_2}\sum_{\tilde y_i=-1}^{}\left(1-2b_{i}\right)v-\frac{1}{2\nunlab_2}\sum_{\tilde y_i=-1}\varepsilon_i,
\end{align*}

\begin{align*}
\hat b_{\text{final}}=&\frac{1}{2\nunlab_1}\sum_{\tilde y_i=1}^{}\xun_{i}+\frac{1}{2\nunlab_2}\sum_{\tilde y_i=-1}^{}\xun_{i}\\
=&\frac{1}{2\nunlab_1}\sum_{\tilde y_i=1}^{}\left(1-2b_{i}\right)\tilde\mu-\frac{1}{2\nunlab_2}\sum_{\tilde y_i=-1}^{}\left(1-2b_{i}\right)\tilde\mu+\frac{1}{2\nunlab_1}\sum_{\tilde y_i=1}\varepsilon_i+\frac{1}{2\nunlab_2}\sum_{\tilde y_i=-1}\varepsilon_i.
\end{align*}

Now let us investigate $b_i$ carefully. When $\tilde y_i=1$, we have \begin{align*}
&\E[b_i\mid \tilde y_i=1]=\bP( \tilde x\sim subGaussian(\tilde\mu_1,\sigma^2)\mid(\tilde x-\hat b_{\text{intermediate}})^{\top}\labest>0)\\
=&\frac{\bP((\tilde x-\hat b_{\text{intermediate}})^{\top}\labest>0)\mid  \tilde x\sim subGaussian(\tilde\mu_1,\sigma^2)}{\bP((\tilde x-\hat b_{\text{intermediate}})^{\top}\labest>0)\mid  \tilde x\sim (\tilde\mu_1,\sigma^2))+\bP((\tilde x-\hat b_{\text{intermediate}})^{\top}\labest>0)\mid  \tilde x\sim (\tilde\mu_2,\sigma^2))}\\
=&\frac{1}{2}+O_p(\sqrt\frac{d}{n}).
\end{align*}

As a result, we have $$
\tilde w_{\text{final}}=(O_p(\sqrt\frac{d}{n})+O_p(\sqrt\frac{1}{\tilde n})) v+\frac{1}{2\nunlab_1}\sum_{\tilde y_i=1}\varepsilon_i-\frac{1}{2\nunlab_2}\sum_{\tilde y_i=-1}\varepsilon_i;
$$
 $$
\hat b_{\text{final}}=(O_p(\sqrt\frac{d}{n})+O_p(\sqrt\frac{1}{\tilde n})) v+\frac{1}{2\nunlab_1}\sum_{\tilde y_i=1}\varepsilon_i+\frac{1}{2\nunlab_2}\sum_{\tilde y_i=-1}\varepsilon_i.
$$

Then let us study $\varepsilon_i \mid \tilde y_i=1$. When $\tilde y_i=1$, we have $$
(\tilde x-\hat b_{\text{intermediate}})^{\top}\labest>0.
$$
Recall that $\mu=e_1, v=a^{-1}e_2$ the inequality is equivalent to $$
\langle e_1, \varepsilon_i\rangle+O_P(\sqrt{\frac{d}{n}})>0.
$$
and put no constraint on other coordinates. 
Similarly, when $\tilde y_i=-1$, we have
 $$
\langle e_1, \varepsilon_i\rangle+O_P(\sqrt{\frac{d}{n}})<0,
$$
and put no constraint on other coordinates. 

As a result, we have $$
\frac{\tilde w_{\text{final}}}{\|\tilde w_{\text{final}}\|_2}=(O_p(\sqrt\frac{d}{n})+O_p(\sqrt\frac{1}{\tilde n})) v+ e_1;
$$
 $$
\hat b_{\text{final}}=(O_p(\sqrt\frac{d}{n})+O_p(\sqrt\frac{1}{\tilde n})) v+O_p(\sqrt\frac{d}{\tilde n}).
$$

Then we write out the misclassification error
\begin{align*}
&\bP\Big(\frac{\finalest^\top}{\|\finalest\|} (\tilde x-\hat{b})\leq 0 \mid \tilde x\sim subGaussian(v,\sigma^2)\Big)
=\bP\Big(\frac{\finalest^\top}{\|\finalest\|} \varepsilon\leq O_p(\sqrt\frac{d}{\tilde n})+O_p(\sqrt\frac{d}{n})\Big)\\
=& 1/2+O_p(\sqrt\frac{d}{\tilde n})+O_p(\sqrt\frac{d}{n}).
\end{align*}
Therefore, when $\frac{d}{n}$ and $\frac{d}{\tilde n}$ sufficiently small, we then have $$
\beta^{\varepsilon,\infty}(\tilde{w},\tilde{b})\ge\beta^{\infty}(\tilde{w},\tilde{b})\ge 49\%.
$$

\subsection{The high-dimensional EM algorithm mentioned in the main paper}

The algorithm used in the main paper to extract the support information from the unlabeled domain is presented in the following in Algorithm \ref{alg:hd}, which is adapted from \cite{cai2019chime}.

\begin{algorithm}
\caption{{\bf C}lustering of {\bf HI}gh-dimensional Gaussian {\bf M}ixtures with the {\bf E}M (CHIME)}\label{alg:hd}
\begin{algorithmic}[1]
\STATE {\bf Inputs:} Initializations $\hat\omega^{(0)}, \hat\mu_1^{(0)}$, and $\hat\mu_2^{(0)}$, maximum number of iterations $T_0$, and a constant $\kappa\in(0,1)$. Set 
\[
\hat\bbeta^{(0)}=\argmin_{ \bbeta \in \mathbb R^p}  \left\{ \frac{1}{2}\|\bbeta\|^2-\bbeta^\top(\hat\mu_1 ^{(0)}-  \hat\mu_2^{(0)})+\lambda^{(0)}_n\|\bbeta\|_1 \right\},
\] 
where the tuning parameter $\lambda_n^{(0)}=C_1\cdot( |\hat\omega|\vee\|\hat\mu_1^{(0)}-\hat\mu_2^{(0)}\|_{2,s})/\sqrt s+C_{\lambda}\sqrt{\log p / n}$.
\FOR {$t=0,1,\ldots,T_0-1$}
\STATE Let \begin{align}\label{gammai-def}
\notag\gamma_{\hat\bth^{(t)}}(\tilde x_i) =\frac{\hat\omega^{(t)}}{\hat\omega^{(t)} + (1-\hat\omega^{(t)}) \exp\big\{ ((\hat\mu_2^{(t)}-\hat\mu_1^{(t)}))^\top \big(\tilde x_i- \frac{\hat\mu_1^{(t)} + \hat\mu_2 ^{(t)}}{2}\big) \big\}}.
\end{align}

\STATE Update $ \hat\omega^{(t+1)},\hat \mu_1^{(t+1)}$, and $\hat{\mu}_2 ^{(t+1)},$ by
\begin{align*} 
\hat{\omega}^{(t+1)}&=\hat\omega(\hat\bth^{(t)})= \frac{1}{n} \sum_{i=1}^n \gamma_{ \hat\bth^{(t)}}(\tilde x_i),\\
\hat{\mu}_1^{(t+1)}&=\hat{\mu}_1(\hat\bth^{(t)})= \Big\{n-\sum_{i=1}^n\gamma_{ \hat\bth^{(t)}}(\tilde x_i)\Big\}^{-1} \Big\{\sum_{i=1}^n(1 - \gamma_{\hat \bth^{(t)}}(\tilde x_i) )\tilde x_i \Big\},\\
 \hat{\mu}_2^{(t+1)}&=\hat{\mu}_2(\hat\bth^{(t)})= \Big\{\sum_{i=1}^n \gamma_{ \hat\bth^{(t)}}(\tilde x_i) \Big\}^{-1} \Big\{\sum_{i=1}^n \gamma_{\hat \bth^{(t)}}(\tilde x_i)\tilde x_i \Big\},
\end{align*}
 and update $\hat\bbeta^{(t+1)}$ via 
\[
\hat{\bbeta}^{(t+1)} = \argmin_{ \bbeta \in \mathbb R^p} \left\{ \frac{1}{2}\|\bbeta\|^2-\bbeta^\top(\hat\mu_1 ^{(t+1)}-  \hat\mu_2^{(t+1)})+\lambda^{(t+1)}_n\|\bbeta\|_1 \right\}, 
\]
with 
\begin{equation*}\label{lambda:alg1}
\lambda_n^{(t+1)}=\kappa \lambda_n^{(t)}+C_{\lambda}\sqrt\frac{\log p}{n}.
\end{equation*} 
\ENDFOR
\STATE Output the support of $\hat\bbeta^{(T_0)}$, $\hat S=Supp(\hat\bbeta^{(T_0)})$.
\STATE Project $x_i$'s to  $\hat S=Supp(\hat\bbeta^{(T_0)})$, estimate $\hat w$ and $\hat b$ on these projected samples
\STATE Construct the classifier $sgn(\hat w^\top (z_{\hat S})-\hat b)$
\end{algorithmic}
\end{algorithm}

\subsection{Proof of Theorem 6}
Let us first adapt the Theorem 3.1 in \cite{cai2019chime}, which states the convergence rate of Algorithm \ref{alg:hd}
\begin{lemma}[adapted from Theorem 3.1 in  \cite{cai2019chime}]\label{lem:chime}
Under the same conditions of Theorem 3.6, if we choose the initializations of Algorithm \ref{alg:hd} according to \cite{hardt2015tight}. Then there is a constant $\kappa\in(0,1)$, such that the estimator $\hat\bbeta^{(T_0)}$ satisfies $$
\|\hat\bbeta^{(T_0)}-(\tilde\mu_1-\tilde\mu_2)\|_2\lesssim \kappa^{T_0}\cdot (\|\hat\bbeta^{(0)}-(\tilde\mu_1-\tilde\mu_2)\|_2+|\hat\omega^{(0)}-q|)+\sigma\sqrt{\frac{m\log d}{\tilde n}}.
$$
In particular, if we let $T_0\gtrsim (-\log(\kappa))^{-1}\log(n\cdot  (\|\hat\bbeta^{(0)}-(\tilde\mu_1-\tilde\mu_2)\|_2+|\hat\omega^{(0)}-q|))$, we have $$
\|\hat\bbeta^{(T_0)}-(\tilde\mu_1-\tilde\mu_2)\|_2\lesssim \sigma\sqrt{\frac{m\log d}{\tilde n}}.
$$
\end{lemma}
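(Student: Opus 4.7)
The plan is to follow the contraction-plus-noise strategy of \cite{cai2019chime} for high-dimensional EM, adapted to our setup where the EM runs on the unlabeled sub-Gaussian mixture $\{\tilde x_i\}_{i=1}^{\tilde n}$. The key structural fact is that each EM step decomposes as a population-level contraction plus a finite-sample statistical error, and the $\ell_1$-penalized M-step on $\hat\bbeta$ (together with a restricted strong convexity condition) converts this into a geometric contraction with a noise floor scaling like $\sigma\sqrt{m\log d/\tilde n}$.

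First I would establish a one-step population contraction. Define the population EM operator $M(\bth)=(\omega(\bth),\mu_1(\bth),\mu_2(\bth))$ as the expectation of the sample updates in Algorithm~\ref{alg:hd}, and the corresponding population target $\bbeta(\bth)=\mu_1(\bth)-\mu_2(\bth)$. Under the signal-to-noise assumption $\|\tilde\mu_1-\tilde\mu_2\|\asymp\sqrt d$, $\tilde\sigma\le\sigma_{\max}\asymp d^{1/4}$ and $c<q<1-c$ inherited from Theorem~3.3, the standard Balakrishnan--Wainwright--Yu mixture-model argument gives a constant $\kappa_0\in(0,1)$ such that $M$ is $\kappa_0$-Lipschitz in a neighborhood of $\bth^\star=(q,\tilde\mu_1,\tilde\mu_2)$. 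Second, I would control the sample deviation. In iteration $t$, $\hat\bbeta^{(t+1)}$ solves an $\ell_1$-penalized quadratic program whose gradient at $\bbeta(\hat\bth^{(t)})$ concentrates at rate $\sqrt{\log d/\tilde n}$ via sub-Gaussian tail bounds on $\gamma_{\hat\bth^{(t)}}(\tilde x_i)\tilde x_i$. A standard cone/restricted-eigenvalue argument with $\lambda_n^{(t+1)}$ dominating this empirical gradient yields
\begin{equation*}
 \|\hat\bbeta^{(t+1)}-\bbeta(\hat\bth^{(t)})\|_2 \lesssim \sqrt{m}\cdot \lambda_n^{(t+1)},
\end{equation*}
and analogously $|\hat\omega^{(t+1)}-\omega(\hat\bth^{(t)})|$ and $\|\hat\mu_i^{(t+1)}-\mu_i(\hat\bth^{(t)})\|_2$ can be bounded using the effective sparse support of size $O(m)$ rather than $d$.

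Combining population contraction with the sample noise gives the recursion
\begin{equation*}
 e_{t+1}\le \kappa\, e_t + C\sigma\sqrt{\tfrac{m\log d}{\tilde n}}, \qquad e_t:=\|\hat\bbeta^{(t)}-(\tilde\mu_1-\tilde\mu_2)\|_2+|\hat\omega^{(t)}-q|,
\end{equation*}
for some $\kappa\in(0,1)$ obtained from $\kappa_0$ together with the geometric-shrinkage factor built into the update $\lambda_n^{(t+1)}=\kappa\lambda_n^{(t)}+C_\lambda\sqrt{\log p/n}$. Unrolling this recursion yields the first claim of the lemma, and choosing $T_0\gtrsim(-\log\kappa)^{-1}\log(\tilde n\cdot e_0)$ drives the transient term $\kappa^{T_0}e_0$ below the statistical floor, giving the second claim. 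The Hardt--Wasserman spectral initialization invoked in the hypothesis ensures, with high probability, that $e_0$ is small enough for $(\hat\omega^{(0)},\hat\bbeta^{(0)})$ to lie inside the basin where the above one-step contraction holds.

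The main obstacle I would expect is verifying the contraction and the restricted-eigenvalue-type condition uniformly along the data-dependent trajectory $\{\hat\bth^{(t)}\}$, since the responsibilities $\gamma_{\hat\bth^{(t)}}(\tilde x_i)$ depend on a random iterate. In \cite{cai2019chime} this is handled by an induction that simultaneously (i) keeps each iterate inside the good basin and (ii) applies a localized empirical-process bound to control $\gamma_{\hat\bth}(\tilde x_i)\tilde x_i$ uniformly over that basin. For our adaptation the only substantive check is that the sub-Gaussianity and the separation assumed in Theorem~3.3 are strong enough to yield $\kappa<1$ in the population step, which follows directly from $\|\tilde\mu_1-\tilde\mu_2\|/\tilde\sigma\gtrsim d^{1/4}\to\infty$; everything else is a direct transcription of the CHIME proof.
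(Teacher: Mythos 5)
The paper does not actually prove this lemma: it is imported as a black box from Theorem~3.1 of \cite{cai2019chime}, with the initialization guarantee delegated to \cite{hardt2015tight}, and the appendix only records the statement before invoking it in the proof of the sparsity theorem. Your proposal therefore does something the paper deliberately avoids, namely reconstructing the CHIME argument itself. Your outline is faithful to that argument: a population-level contraction of the EM operator in a basin around $(q,\tilde\mu_1,\tilde\mu_2)$, a cone/restricted-eigenvalue bound showing the $\ell_1$-penalized M-step incurs statistical error $\sqrt{m}\,\lambda_n^{(t)}\asymp\sigma\sqrt{m\log d/\tilde n}$, the recursion $e_{t+1}\le\kappa e_t+C\sigma\sqrt{m\log d/\tilde n}$ with the geometric decay of $\lambda_n^{(t)}$ woven in, and an induction keeping the data-dependent iterates inside the basin. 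This is exactly how the cited theorem is proved, so as a blind reconstruction it is essentially correct; the trade-off is that the paper buys brevity by citation while your route would make the appendix self-contained at the cost of reproving a known result. One caveat you should tighten: you repeatedly describe the mixture components as merely sub-Gaussian, but the responsibilities $\gamma_{\hat\bth^{(t)}}(\tilde x_i)$ in Algorithm~\ref{alg:hd} are the \emph{Gaussian} posterior weights, and the population contraction in the Balakrishnan--Wainwright--Yu/CHIME analysis uses the correctly specified Gaussian likelihood, not just tail bounds. In the setting where the lemma is applied (Theorem~\ref{thm:sparsity}) the unlabeled data are genuinely Gaussian with isotropic covariance, so this is fine, but "sub-Gaussianity strong enough to yield $\kappa<1$" is not by itself the right sufficient condition; misspecified responsibilities would require a separate robustness argument that neither the paper nor \cite{cai2019chime} supplies.
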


As a direct consequence of Lemma~\ref{lem:chime}, we have $$
\|\hat\bbeta^{(T_0)}-(\tilde\mu_1-\tilde\mu_2)\|_\infty\lesssim \sigma\sqrt{\frac{m\log d}{\tilde n}}.
$$
Using the condition that $\min_{\tilde{\mu}_{1,j}-\tilde{\mu}_{2j}\neq 0}|\tilde{\mu}_{1j}-\tilde{\mu}_{2,j}|\ge C\sigma\sqrt{ 2m\log d/\tilde n}$ for sufficiently large $C$, we then have, with high probability, 
$$
\hat S=Supp(\hat\bbeta^{(T_0)})=Supp(\tilde\mu_1-\tilde\mu_2).
$$
Therefore, when we project the labeled data to this support $\hat S$, it reduce the model to the previous setting considered in Theorem 3.1 and 3.2 with the dimension of $\nu(\tilde x)$ reduced to $m$. Combing the proofs of Theorem 3.1, 3.2, and 3,3m we then have the desired result that if  $n\gtrsim \varepsilon^2\log d\sqrt m$, we have 
$$
 \beta^{\varepsilon,\infty}(\hat w_{sparse}, \hat b_{sparse})\le 10^{-3}+O_P(\frac{1}{ n}+\frac{1}{d}).
$$

\section{Experimental Implementation on Synthetic Data}
We here complement our theory result in Theorem \ref{thm:enhance} and Theorem \ref{thm:sparsity} by experiments with synthetic data.

\begin{figure}[H]
    \centering
    \includegraphics[width=0.5\linewidth]{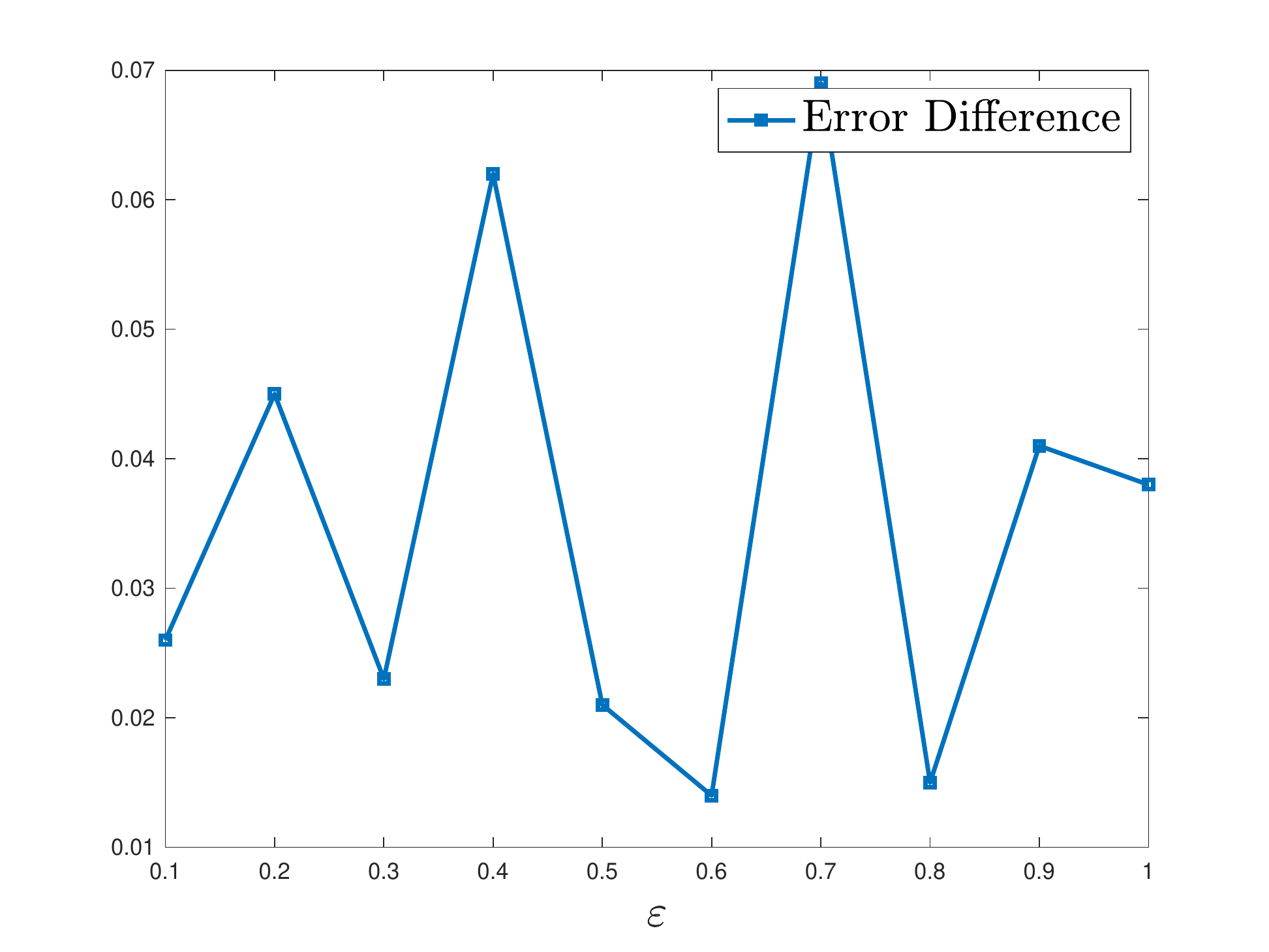} 
    \caption{Error Difference vs $\varepsilon$: we use synthetic data described in Theorem \ref{thm:enhance}, where the Error Difference = Adversarial Error when Unlabeled Data from the Same Domain - Adversarial Error when Unlabeled Data from the Shifted Domain. We can see that error difference is positive for all the $\varepsilon$ we take, which implies unlabeled data from a shifted domain works even better. }
\end{figure}

\begin{figure}[H]
    \centering
    \includegraphics[width=0.5\linewidth]{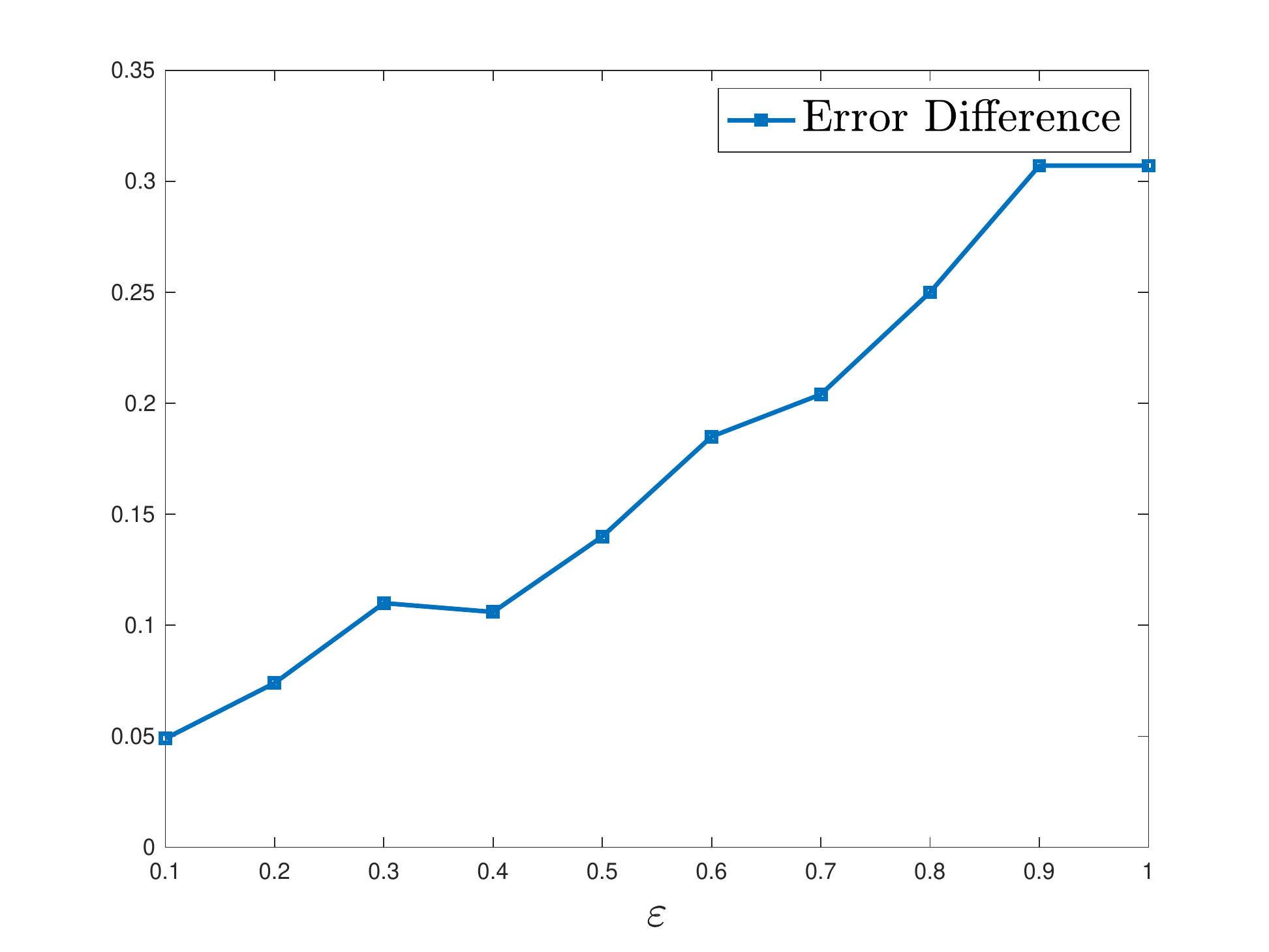} 
    \caption{Error Difference vs $\varepsilon$: we use gaussian synthetic data with sparsity structure, where we create $100$ dimensional Gaussian data for both original domain and shifted domain, and only the first $10$ coordinates matters (mean difference of positive and negative distribution of data from each domain is non-zero for only the first $10$ coordinates). The Error Difference = Adversarial Error with Semi-supervised Learning Algorithm - Adversarial Error with Algorithm with Unknown Sparsity. We can see that error difference is positive for all the $\varepsilon$ we take, which implies our algorithm with unknown sparsity works better when there is some common sparsity structure. }
\end{figure}

\section{Experimental Implementation on Real Data}

We use the implementation from ~\cite{carmon2019unlabeled} that can be accessed from \url{https://github.com/yaircarmon/semisup-adv}

\subsection{Experimental setup}
We follow the implementation in ~\cite{carmon2019unlabeled} for our experiments:
\subsubsection{CIFAR10/CINIC-10}
\paragraph{Architecture} We use a Wide ResNet 28-10 ~\cite{zagoruyko2016wide} architecture.
\paragraph{Training hyperparemeters} We use a batch size of 256 with SGD optimizer (along with Nesterov momentum of $0.1$). We use cosine learning rate annealing~\cite{loshchilov2016sgdr} with initial rate of $0.01$ and no restarts. The weight decay parameter is set to $0.0005$. We define an epoch to be a pass over 50000 training points. For normal training we run $200$ epochs. For adversarial training and stability training we run $100$ and $400$ epochs respectively.
\paragraph{Data Augmentation} We do a $4$-pixel random cropping and a random horizontal flip.
\paragraph{Adversarial attacks} We use the recommended parameters in ~\cite{carmon2019unlabeled}. In test time, we run $40$ iterations of projected gradient descent with step-size of $0.01$ and do $5$ restarts.
\paragraph{Stability training} We et noise variance to $\sigma=0.25$. In test time, we set $N_0=100$ and $N=10000$ with $\alpha=0.001$.
\subsubsection{SVHN}
We use similar parameters to CIFAR-10/CINIC-10 except the following. 
\paragraph{architecture} We use a Wide ResNet 16-8.
\paragraph{Training hyper-parameters} The same as CIFAR-10/CINIC-10 except we use batch size of $128$ and run $98k$ gradient steps for all models.
\paragraph{Data Augmentation} We do not perfom any augmentation.

\subsection{Cheap-10 dataset creation pipeline}

To create the Cheap-10 dataset, for each CIFAR-10 class, we create $50$ related keywords to search for on Bing image search engine. Using an existing image downloding API implementation~\footnote{\url{https://github.com/hardikvasa/google-images-download}}, we were able to download $\sim1000$ images for each key-word search.
CIFAR-10 dataset is made of 10 classes. For animal classes (bird, cat, deer, dog, frog, horse), our keyboards were made of names of different breeds and different colors or adjectives known to accompany the specific animal. For instance, Parasitic Jaeger, Scottish Fold cat, Pygmy Brocket Deer, Spinone Italiano Dog, Northern Leopard Frog, and Belgian Horse. For other classes (airplane, automobile, ship, truck), we search for different brands or classes. For example, Lockheed Martin F-22 Raptor, Renault automobile, Tanker ship, and Citroën truck. We then downsize images to the original CIFAR-10 size of 32x32. We show example images of the dataset compared to CIFAR-10 images in Fig.~\ref{fig:cheap10}.

\begin{figure}[H]
    \centering
    \includegraphics[width=0.8\linewidth]{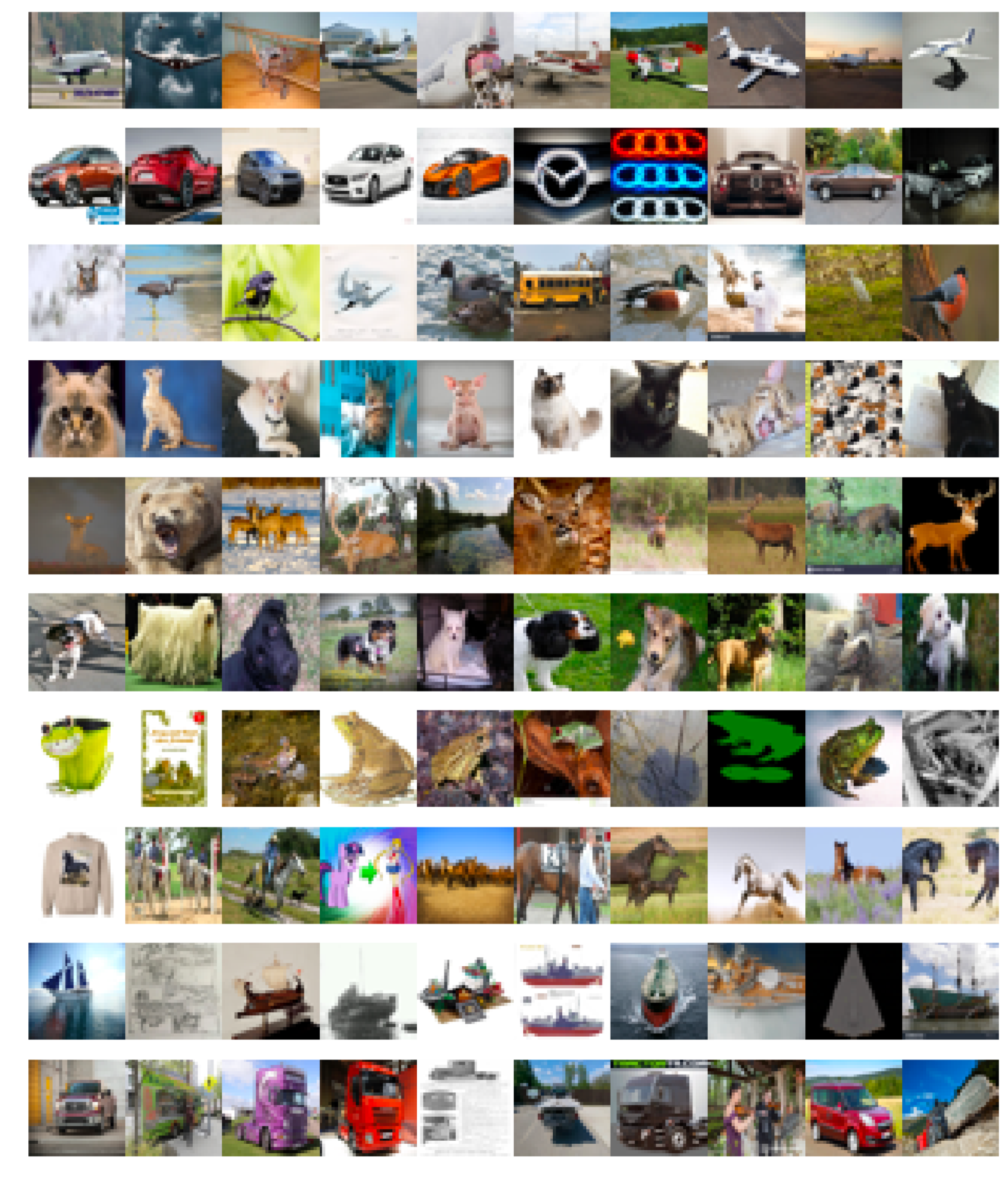} 
    \caption{\textbf{Cheap-10 examples} Each row shows 10 examples of Cheap-10 dataset. \label{fig:cheap10}}
\end{figure}

\end{document}